\def\Rbb{\mathbb{R}}
\def\R{\Rbb}
\def\*{\star}
\newcommand{\tr}[1]{ \mathrm{tr}\left( #1\right)}
\renewcommand{\tr}{{T}}
\newcommand{\paral}{{\!/\mkern-5mu/\!}}
\newcommand{\Lag}{\mathcal{L}}
\newcommand{\Ham}{\mathcal{H}}
\newcommand{\ma}{\mathbf{a}}
\newcommand{\p}{\mathbf{p}}
\newcommand{\q}{\mathbf{q}}
\newcommand{\qd}{{\dot{\q}}}
\newcommand{\qdd}{{\ddot{\q}}}
\newcommand{\uu}{\mathbf{u}}
\newcommand{\vv}{\mathbf{v}}
\newcommand{\x}{\mathbf{x}}
\newcommand{\xd}{{\dot{\x}}}
\newcommand{\xdd}{{\ddot{\x}}}
\newcommand{\y}{\mathbf{y}}
\newcommand{\z}{\mathbf{z}}
\newcommand{\f}{\mathbf{f}}
\newcommand{\h}{\mathbf{h}}
\newcommand{\g}{\mathbf{g}}
\newcommand{\zero}{\mathbf{0}}
\newcommand{\J}{\mathbf{J}}
\newcommand{\Jd}{{\dot{\J}}}
\newcommand{\A}{\mathbf{A}}
\newcommand{\B}{\mathbf{B}}
\newcommand{\D}{\mathbf{D}}
\newcommand{\mH}{\mathbf{H}}
\newcommand{\I}{\mathbf{I}}
\newcommand{\M}{\mathbf{M}}
\newcommand{\mP}{\mathbf{P}}
\newcommand{\mR}{\mathbf{R}}
\newcommand{\V}{\mathbf{V}}
\newcommand{\wt}[1]{{\widetilde{#1}}}
\theoremstyle{plain}
\newtheorem{theorem}{Theorem}[section]
\newtheorem{lemma}[theorem]{Lemma}
\newtheorem{proposition}[theorem]{Proposition}
\newtheorem{corollary}[theorem]{Corollary}
\theoremstyle{definition}
\newtheorem{definition}[theorem]{Definition}
\theoremstyle{remark}
\newtheorem{remark}[theorem]{Remark}
\let\save@mathaccent\mathaccent
\newcommand*\if@single[3]{%
  \setbox0\hbox{${\mathaccent"0362{#1}}^H$}%
  \setbox2\hbox{${\mathaccent"0362{\kern0pt#1}}^H$}%
  \ifdim\ht0=\ht2 #3\else #2\fi
  }
\newcommand*\rel@kern[1]{\kern#1\dimexpr\macc@kerna}
\newcommand*\widebar[1]{\@ifnextchar^{{\wide@bar{#1}{0}}}{\wide@bar{#1}{1}}}
\newcommand*\wide@bar[2]{\if@single{#1}{\wide@bar@{#1}{#2}{1}}{\wide@bar@{#1}{#2}{2}}}
\newcommand*\wide@bar@[3]{%
  \begingroup
  \def\mathaccent##1##2{%
    \let\mathaccent\save@mathaccent
    \if#32 \let\macc@nucleus\first@char \fi
    \setbox\z@\hbox{$\macc@style{\macc@nucleus}_{}$}%
    \setbox\tw@\hbox{$\macc@style{\macc@nucleus}{}_{}$}%
    \dimen@\wd\tw@
    \advance\dimen@-\wd\z@
    \divide\dimen@ 3
    \@tempdima\wd\tw@
    \advance\@tempdima-\scriptspace
    \divide\@tempdima 10
    \advance\dimen@-\@tempdima
    \ifdim\dimen@>\z@ \dimen@0pt\fi
    \rel@kern{0.6}\kern-\dimen@
    \if#31
      \overline{\rel@kern{-0.6}\kern\dimen@\macc@nucleus\rel@kern{0.4}\kern\dimen@}%
      \advance\dimen@0.4\dimexpr\macc@kerna
      \let\final@kern#2%
      \ifdim\dimen@<\z@ \let\final@kern1\fi
      \if\final@kern1 \kern-\dimen@\fi
    \else
      \overline{\rel@kern{-0.6}\kern\dimen@#1}%
    \fi
  }%
  \macc@depth\@ne
  \let\math@bgroup\@empty \let\math@egroup\macc@set@skewchar
  \mathsurround\z@ \frozen@everymath{\mathgroup\macc@group\relax}%
  \macc@set@skewchar\relax
  \let\mathaccentV\macc@nested@a
  \if#31
    \macc@nested@a\relax111{#1}%
  \else
    \def\gobble@till@marker##1\endmarker{}%
    \futurelet\first@char\gobble@till@marker#1\endmarker
    \ifcat\noexpand\first@char A\else
      \def\first@char{}%
    \fi
    \macc@nested@a\relax111{\first@char}%
  \fi
  \endgroup
}
\newcommand{\fullversiononly}{\iffalse}
\newcommand{\compressedversiononly}{\iftrue}
\title{Optimization Fabrics for Behavioral Design}
\author{\authorblockN{Nathan D. Ratliff\authorrefmark{1},
Karl Van Wyk\authorrefmark{1},
Mandy Xie\authorrefmark{1}\authorrefmark{2}, 
Anqi Li\authorrefmark{1}\authorrefmark{3} and
Muhammad Asif Rana\authorrefmark{2}}
\authorblockA{
\authorrefmark{1}NVIDIA (\{nratliff,kvanwyk\}@nvidia.com);
\authorrefmark{2}Georgia Tech;
\authorrefmark{3}University of Washington}
}
\begin{document}

\maketitle
\begin{abstract}
A common approach to the provably stable design of reactive behavior, exemplified by operational space control, is to reduce the problem to the design of virtual classical mechanical systems (energy shaping). This framework is widely used, and through it we gain stability, but at the price of expressivity. This work presents a comprehensive theoretical framework expanding this approach showing that there is a much larger class of differential equations generalizing classical mechanical systems (and the broader class of Lagrangian systems) and greatly expanding their expressivity while maintaining the same governing stability principles. At the core of our framework is a class of differential equations we call fabrics which constitute a behavioral medium across which we can optimize a potential function. These fabrics shape the system’s behavior during optimization but still always provably converge to a local minimum, making them a building block of stable behavioral design. We build the theoretical foundations of our framework here and provide a simple empirical demonstration of a practical class of {\em geometric fabrics}, which additionally exhibit a natural geometric path consistency making them convenient for flexible and intuitive behavioral design.
\end{abstract}

\IEEEpeerreviewmaketitle

\section{Introduction}

Riemannian Motion Policies (RMPs) \cite{ratliff2018rmps} are powerful tools for modularizing robotic behavior. They enable designers to construct behavior in parts, designing behavioral components separately in different task spaces
and then combining them using the RMP algebra within the RMPflow algorithm \cite{cheng2018rmpflow}. While demonstrably a powerful design framework (see \cite{ratliff2018rmps,cheng2018rmpflow}), the most effective class of RMPs \cite{ratliff2018rmps} lack formal stability guarantees and can be challenging for inexperienced designers to use. 

This paper addresses the gap between theory and practice by modeling behavior with {\em optimization fabrics}, or {\em fabrics} for short. Fabrics are a special form of second-order differential equation defining task-independent nominal behaviors which influence the dynamics of a continuous time optimizer. Objective functions define tasks (with local minima defining task goals), and the fabric shapes the optimization path of the system as it descends the potential en route to the goal. Behavior arises from the natural dynamics of the nominal behavior encoded into the fabric, and the forcing potential defining the task objectives. We characterize optimization fabrics with increasing levels of specificity, building toward a pragmatic and flexible form of fabric called the {\em geometric fabric}.

We present the theory of optimization fabrics in its broadest generalization, so this exposition is highly theoretical. The theory captures the most important intuitions behind RMP design, including the ability to practice modular {\em acceleration-based} design, within a well-understood and provably stable framework that overcomes a number of the restrictions and limitations of past attempts (see Subsection~\ref{sec:RelatedWork}).

Our theory greatly generalizes the tools underpinning classical reactive control techniques such as operational space control \cite{Peters_AR_2008}. Operational space control, and its geometric control generalization \cite{bullo2004geometric}, use the physical properties of virtual classical mechanical systems within an {\em energy shaping} paradigm to attain provable stability. In this work, we show that these classical mechanical systems (equivalently Riemannian geometries) are a form of {\em optimization fabric}, which we characterize broadly and concretely as differential equations which optimize a potential function when forced by its negative gradient. 

Within this formalism, it is easy to see that we need not restrict ourselves to classical mechanical systems. As a first step, we expand our tool set to include the more general class of Lagrangian systems \cite{LeonardSusskindClassicalMechanics}. We discuss a particularly nice family of Lagrangian systems called Finsler geometries, which directly generalize Riemannian geometries (classical mechanical systems) retaining their most alluring geometric properties, but also show that such systems are only one relative small generalization within a family we call {\em conservative fabrics}. 

There is a theoretical limitation to standard Lagrangian systems wherein both the priority metric and the policy that arise from them derive from the same source (the Lagrangian) and therefore cannot be designed separately. In terms of Finsler geometries, that means changing the policy will change the metric, and vice versa, making their direct use for behavior design challenging. We, therefore, derive a further generalization, a type of conservative fabric called a {\em geometric fabric}, which breaks that dependency by using Finsler geometries to define metrics, but then explicitly {\em bending} them to align with separately defined policy geometries while maintaining the metric. This class of geometric fabrics greatly generalizes the Riemannian geometries used in operational space control and give us a solid theoretical context within which to design provably stable behaviors with an expressivity on par with the broadest family of RMPs used in \cite{ratliff2018rmps} (where metrics and policies were designed separately and where both can be functions of both position {\em and} velocity enabling, for instance, barrier policies (such as obstacle or joint limit avoidance) to consider obstacles only when moving toward them).

The focus of this paper is on the theoretical foundations of optimization fabrics, so much of the empirical evaluation and comparison to earlier techniques such as operational space control is left to a separate work (see Sections VIII-IX in supplemental paper \cite{xiegeometricfabricsupp} currently under peer review). We do provide a simple demonstration of constructing a tabletop manipulation primitive on a 2D planar arm using geometric fabrics to show the feasibility and relative simplicity of using fabrics in practice. We exploit the geometric consistency of geometric fabrics by building the behavior in layers, tuning a part then freezing it before moving to the next component. For example, we first create reaching behavior, then add redundancy resolution behavior, and finally, add behavior that further shapes the paths of the end-effector. This strategy dramatically simplifies the tuning process making hand-tuning of these policies relatively straightforward. This approach is similar to how one might construct a hierarchical controller with hierarchical sequence of null space tasks, but here we use soft priorities rather than hard null spaces for more flexible expression of trade-offs and influence.

\subsection{Related work: An encompassing framework for existing techniques}
\label{sec:RelatedWork}

The literature on behavior design is diverse. Since accelerations are a natural action in mechanical systems where state constitutes position {\em and} velocity we focus exclusively on second-order models capable of capturing {\em fast reactive} behavior appropriate for real-world collaborative systems. The most prominent and standard technique in this class is operational space control \cite{KhatibOperationalSpaceControl1987,Peters_AR_2008}. \cite{2017_rss_system} showed that these reactive subsystems are critical in real-time systems even when planning \cite{LavallePlanningAlgorithms06} or optimal control \cite{RatliffCHOMP2009,ToussaintTrajOptICML2009,AbbeelTrajopt2013,RIEMORatliff2015ICRA} modules are available since there is always contention between modeling complexity and computational cost in these planners.


Operational space control is a subset of geometric control \cite{bullo2004geometric}, and both model behavior using classical mechanics \cite{ClassicalMechanicsTaylor05}.  Within the context of optimization fabrics, they are a specific (Riemannian) type of {\em Finsler} fabrics, with the restriction that the priority metric be a function of position {\em only}. Riemannian Motion Policies (RMPs) were developed to circumvent these limitations \cite{ratliff2018rmps}, but despite strong empirical success, they lacked stability guarantees and theoretical models of convergence. \cite{cheng2018rmpflow} developed Geometric Dynamical Systems (GDSs) within the RMP framework as a theoretically rigorous and stable model of RMPs. However, GDSs are non-covariant versions of {\em Lagrangian} fabrics (aside from covariance issues, which manifest in the definition of {\em structured} GDS, they have very similar theoretical properties), and are fundamentally limited in their modeling capacity as we describe below. We refer to operational space control, geometric control, and GDSs all as Lagrangian fabrics here. 

Lagrangian fabrics are limited by their {\em force-based design}. Behavior is shaped in two ways: by forcing the system with a potential and by shaping the underlying fabric's dynamics. Adding together multiple potentials, a common approach, will always result in conflicting priorities. For instance, joint limit or obstacle avoidance potentials will make it impossible to reach a target at times, even when it is physically achievable. On the other hand, shaping the dynamics affects the derived priority metric and makes combining multiple component Lagrangian fabrics challenging. 



Importantly, within the theory of fabrics, Lagrangian fabrics constitute only a small portion of optimization fabrics. 
More flexible fabrics exist, such as conservative and geometric fabrics. Many of these fabrics
{\em use} Lagrangian systems to derive their {\em priority metrics} but also provision separate design strategies for deriving their {\em policies}. Essentially, Lagrangian fabrics only represent half of the problem, and that limited capacity competes with itself when attempting to express both metrics {\em and} policies for modular design.
Moreover, earlier work deriving GDSs \cite{cheng2018rmpflow} focused on theoretical properties and provided few tools for the practical use of these systems. Designing behavior within these frameworks was, therefore, challenging, which inspired research into learning the systems from data \cite{rana2019LearningRmpsFromDemonstration,mukadam2019RMPFusion,li2019MultiAgentRMPsArXiv}. Those trained systems, however, suffered from the limitations discussed above and proved difficult to compose as modular components.

Another class of second-order systems common in robotics is Dynamic Movement Primitives (DMPs) \cite{ijspeert2013DMPs}. DMPs inherit their stability guarantees by reducing over time to linear systems as the nonlinear components vanish by design. That reduction provides stability, but again at the cost of expressivity. These tools have  seen substantial success in practice for basic motion primitive representation, especially in the context of imitation and reinforcement learning \cite{ijspeert2013DMPs,pastorLearningMotorSkillsByDemo2009,theodorouPathIntegralRL2010}, but techniques such as \cite{parkPastor2008AdaptingDMPs} for moving beyond ego-motion to obstacle avoidance follow more ad hoc heuristic coupling term methodologies similar to related potential field ideas \cite{KhatibPotentialFields1985,haddadin2010iros}. These techniques are largely subsumed by geometric control in terms of flexibility and stability. 
\fullversiononly

\section{Preliminaries} \label{sec:preliminaries}

\subsection{Manifolds and notation}
Following \cite{finslerGeometryForRoboticsArXiv2020} we base our notation on advanced calculus rather than standard tensor-based presentations. See that work for details, including details of the calculus notation used.
We restrict attention to smooth manifolds \cite{LeeSmoothManifolds2012} and assume that vector fields, functions, trajectories, etc. defined on the manifolds are smooth. An important concept is that of a manifold with a boundary, which we define next.

\begin{definition}
Let $\mathcal{X}$ be a manifold with a boundary (see \cite{LeeSmoothManifolds2012} for a definition\footnote{Intuitively, it can be thought of as a set theoretic boundary point in coordinates.}). Its boundary is denoted $\partial\mathcal{X}$ and we denote its {\em interior} as $\mathrm{int}(\mathcal{X}) = \mathcal{X}\backslash\partial\mathcal{X}$. 
\end{definition}

Any manifold $\mathcal{X}$ notationally has a boundary $\partial\mathcal{X}$, but that boundary could be empty $\partial\mathcal{X} = \emptyset$.

\begin{definition}
Let $\mathcal{X}$ denote a smooth manifold of dimension $n$. The space of all velocities at $\x$ is the {\em tangent space} at $\x$ denoted $\mathcal{T}_\x\mathcal{X}$. The set of all positions and velocities the {\em tangent bundle} denoted $\mathcal{T}\mathcal{X} = \coprod_{\x\in\mathcal{X}}\mathcal{T}_\x\mathcal{X}$\footnote{The symbol $\coprod$ denotes the disjoint union. I.e. $(\x,\xd)\in \mathcal{T}\mathcal{X}$ if and only if $\xd\in\mathcal{T}_\x\mathcal{X}$ for some $\x\in\mathcal{X}$.} 
The boundary of a manifold $\partial\mathcal{X}$ is a smooth manifold of dimension $n-1$, with $n-1$ dimensional tangent bundle $\mathcal{T}\partial\mathcal{X} = \coprod_{\x\in\partial\mathcal{X}}\mathcal{T}_\x\partial\mathcal{X}$. Likewise, $\mathrm{int}(\mathcal{X})$ is the manifold of dimension $n$ consisting of all interior points; its tangent bundle is denoted $\mathcal{T}\mathrm{int}(\mathcal{X}) = \coprod_{\x\in\mathrm{int}(\mathcal{X})}\mathcal{T}_\x\mathcal{X}$. The complete manifold with a boundary is understood to be the disjoint union of the interior and boundary manifolds $\mathcal{X}=\mathrm{int}(\mathcal{X})\coprod\partial\mathcal{X}$, and its tangent bundle is the disjoint union of the separate tangent bundles $\mathcal{T}\mathcal{X} = \mathcal{T}\mathrm{int}(\mathcal{X})\coprod\mathcal{T}\partial\mathcal{X}$.
\end{definition}
\begin{remark}
The notation $(\x,\xd)\in\mathcal{T}\mathcal{X}$ is understood to denote t0wo separate cases $(\x,\xd)\in\mathcal{T}\mathrm{int}(\mathcal{X})$ {\em and} $(\x,\xd)\in\mathcal{T}\partial\mathcal{X}$. In particular, that second case restricts the velocities to lie tangent to the boundary's surface.
\end{remark}

\fi  
\compressedversiononly  
\section{Spectral semi-sprays (specs)}
\else 
\section{Spectral semi-sprays (specs) and transform trees}
\fi 
\label{sec:SpecsAndTransformTrees}

We use a generalized notion of a Riemannian Motion Policy (RMP) \cite{ratliff2018rmps,cheng2018rmpflow} as a building block for this theory, a mathematical object we call a spectral semi-spray (spec).\footnote{A semi-spray in mathematics is the differential equation $\xdd + \h(\x, \xd) = \zero$ with $\h = \M^{-1}\f$ \cite{shenSprayGeometry2000}. The spectral (metric) matrix $\M$ plays an important role in our algebra, prompting our definition of {\em spectral} semi-sprays.} Second-order differential equations of the form $\M(\x, \xd)\xdd + \f(\x, \xd) = \zero$ generally follow an algebra of summation and pullback across maps $\x = \phi(\q)$ (composition with $\xd = \J\qd$ and $\xdd = \J\qdd + \Jd\qd$) that parallels RMPs. Specs have a {\em natural} $(\M, \f)_\mathcal{X}$ and {\em canonical} $(\M, \M^{-1}\f)_\mathcal{X}^\mathcal{C}$ form as in \cite{cheng2018rmpflow}. Rather than attributing semantics of policies from the outset, we use these objects more generally (for instance, to track energies across a transform tree) and add policy semantics as a specific policy-form spec $[\M, \pi]_\mathcal{X}$ with $\pi = -\M^{-1}\f$ (distinguishing this separate form using square brackets $[\cdot,\cdot]$) so that the corresponding differential equation is $\xdd = \pi(\x, \xd)$.
\compressedversiononly
\fi

\fullversiononly

\subsection{Pullback and combination of specs}
\label{sec:SpecAlgebra}

Given a differentiable map $\phi:\mathcal{Q}\rightarrow\mathcal{X}$ with action denoted $\x = \phi(\q)$, we can derive an expression for the covariant transformation of a spec $(\M, \f)_\mathcal{X}$ on the codomain $\mathcal{X}$ to a spec $(\widetilde{\M}, \widetilde{\f})_\mathcal{Q}$ on the domain $\mathcal{Q}$. Noting $\xdd = \J\qdd + \Jd\qd$ with $\J = \partial_\q\phi$, the covariant transformation of left hand side of $\M\xdd + \f = \zero$ is
\begin{align*}
    &\J^\tr\big(\M\xdd + \f\big) = \J^\tr\Big(\M\big(\J\qdd + \Jd \qd\big) + \f\Big)\\
    &\ \ \ = \big(\J^\tr \M \J\big) \qdd + \J^\tr\big(\f + \M \Jd\qd\big) \\
    &\ \ \ = \widetilde{\M}\qdd + \widetilde{\f}.
\end{align*}
where $\widetilde{\M} = \J^\tr \M \J$ and $\widetilde{\f} = \J^\tr\big(\f + \M \Jd\qd\big)$. In terms of specs, this transform becomes what we call the pullback operation:
\begin{align*}
    \mathrm{pull}_\phi (\M,\f)_\mathcal{X} 
    = \left(\J^\tr\M\J,\ \J^\tr\big(\f + \M \Jd\qd\big)\right)_\mathcal{Q}.
\end{align*}
Similarly, since $\big(\M_1\xdd + \f_1\big) + \big(\M_2\xdd + \f_2\big) = \big(\M_1 + \M_2\big) + \big(\f_1 + \f_2\big)$, we can define an associative and commutative summation operation
\begin{align*}
    (\M_1,\f_1)_\mathcal{X} + (\M_2,\f_2)_\mathcal{X}
    = \big(\M_1 + \M_2,\ \f_1 + \f_2\big)_\mathcal{X}.
\end{align*}
The above operations are defined on {\em natural form} specs. 
Canonical form specs $(\M, \ma)_\mathcal{X}^\mathcal{C}$ represent an acceleration form of the equation $\xdd + \M^{-1}\f = \zero$, with $\ma = \M^{-1}\f$. In canonical form, the summation operation can be interpreted as a metric weighted average of accelerations:
\begin{align*}
    &(\M_1,\ma_1)^\mathcal{C}_\mathcal{X}
        + (\M_2,\ma_2)^\mathcal{C}_\mathcal{X}\\
    &= \Big(\M_1 + \M_2,
        (\M_1 + \M_2)^{-1}\big(\M_1\ma_1 + \M_2\ma_2\big)
      \Big)^\mathcal{C}_\mathcal{X}.
\end{align*}
(These expressions can be derived by converting to natural form and using the above definitions.) The operations of pullback and summation constitute what we call the \textit{spec algebra}, which allows us to populate transform trees with specs for efficient computation as was done in \cite{cheng2018rmpflow}.

\subsection{Specs on transform trees}
\label{sec:SpecsOnTransformTrees}

Specs are similar to Riemannian Motion Policies (RMPs) \cite{cheng2018rmpflow}, but more general. RMPs are semantically assumed to constitute pieces of a motion policy. Specs, on the other hand, can take a broader role wherever these differential equations and their associated algebra appear. For instance, in the case of geometric fabrics discussed below, they will appear as both energy equations and parts of a policy geometry. Both classes of specs then work together to define the full policy.

Similar to RMPs, specs can populate a transform tree of spaces. For instance, if $\phi_i:\mathcal{Q}\rightarrow\mathcal{X}_i$, $i=1,\ldots,n$ defines a star-shaped transform tree \cite{cheng2020RmpflowJournalArxiv} and specs $\{(\M_i,\f_i)\}_{i=1}^n$ are placed on the spaces, the accumulated spec at the root $\mathcal{Q}$ after pullback and combination is
\begin{align*}
    &\sum_{i=1}^n \mathrm{pull}_{\phi_i} (\M_i,\f_i)_{\mathcal{X}_i}\\
    &\ \ = \left(\sum_i\J_i^\tr\M_i\J_i,\ \sum_i\J_i^\tr\big(\f_i + \Jd_i\qd\big)\right)_\mathcal{Q}.
\end{align*}
Again, in canonical form, the final space may be viewed as a metric weighted average of individual pulled back specs.

\begin{definition}[Closure under tree operations]
A class of specs is said to be {\em closed under tree operations} (or {\em closed} for short when the context is clear) when specs resulting from the operations remain in the same class.
\end{definition}
Importantly, if a class of specs is closed, the accumulated spec at the root of a transform tree remains of the same class.

\fi 
\section{Optimization fabrics} \label{sec:OptimizationFabrics}

A spec can be \textit{forced} by a position dependent potential function $\psi(\x)$ using
\begin{align}
    \M\xdd + \f = -\partial_\x\psi,
\end{align}
where the gradient $-\partial_\x\psi$ defines the force added to the system. 
In most cases, forcing an arbitrary spec does not result in a system that's guaranteed to converge to a local minimum of $\psi$. But when it does, we say that the spec is \textit{optimizing} and forms an \textit{optimization fabric} or \textit{fabric} for short. Herein, we characterize optimization fabrics  of increasing specificity. We will use these results in Section~\ref{sec:GeometricFabrics} to derive what we call {\em geometric fabrics} which define a concrete set of tools for fabric design.

Note that the accelerations of a forced system $\xdd = -\M^{-1}\f - \M^{-1}\partial_\x\psi$ decompose into nominal accelerations of the system $-\M^{-1}\f$ and forced accelerations $-\M^{-1}\partial_\x\psi$; we can, therefore, interpret the fabric as defining a nominal behavior that the potential function then pushes away from during optimization.

\subsection{Unbiased specs and general optimization fabrics} \label{sec:UnbiasedGeneralFabrics}

\begin{definition}[Interior]
Let $\mathcal{X}$ be a manifold with boundary $\partial\mathcal{X}$ (possibly empty). A spec $(\M,\f)_\mathcal{X}$ is said to be {\em interior} if for any interior starting point 
$(\x_0, \xd_0) \in \mathcal{T}\mathrm{int}(\mathcal{X})$ the integral curve $\x(t)$ 
is everywhere interior $\x(t)\in\mathrm{int}(\mathcal{X})$ for all $t\geq0$.
\end{definition}

\begin{definition}[Rough and frictionless specs]
Let $\mathcal{X}$ be a manifold with a (possibly empty) boundary. A spec $\mathcal{S} = \big(\M,\f\big)_\mathcal{X}$ is said to be {\em rough} if all its integral curves $\x(t)$ converge: $\lim_{t\rightarrow\infty}\x(t) = \x_\infty$ with $\x_\infty\in \mathcal{X}$ (including the possibility $\x_\infty\in\partial\mathcal{X}$). If $\mathcal{S}$ is not rough, but each of its {\em damped variants} $\mathcal{S}_\B = \big(\M,\f+\B\xd\big)$ is, where $\B(\x,\xd)$ is smooth and positive definite, the spec is said to be {\em frictionless}. A frictionless spec's damped variants are also known as {\em rough variants} of the spec.
\end{definition}

\begin{definition}[Forcing]
Let $\psi(\x)$ be a smooth potential function with gradient $\partial_\x\psi$ and let $(\M,\f)_\mathcal{X}$ be a spec. Then $(\M,\f+\partial_\x\psi)$ is the spec's {\em forced variant} and we say that we are {\em forcing} the spec with potential $\psi$. We say that $\psi$ is {\em finite} if $\|\partial_\x\psi\|<\infty$ everywhere on $\mathcal{X}$.
\end{definition}

\begin{definition}[Fabrics]
A spec $\mathcal{S}$ forms a {\em rough fabric} if it is rough when forced by a finite potential $\psi(\x)$ and each convergent point $\x_\infty$ is a Karush–Kuhn–Tucker (KKT) \cite{NocedalWright2006} solution of the constrained optimization problem $\min_{\x\in\mathcal{X}}\psi(\x)$. A forced spec is a {\em frictionless fabric} if its rough variants form rough fabrics.
\end{definition}

\fullversiononly
\begin{lemma}
If a spec $\mathcal{S}$ forms a rough (or frictionless) fabric then (unforced) it is a rough (or frictionless) spec.
\end{lemma}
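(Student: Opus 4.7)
The plan is to apply the fabric definition with the trivial potential $\psi \equiv 0$. Since $\|\partial_\x\psi\| = 0 < \infty$ everywhere, this $\psi$ is finite in the sense of the preceding definition, and every point of $\mathcal{X}$ trivially satisfies the KKT conditions of $\min_{\x\in\mathcal{X}} 0$. Moreover, the forced variant $(\M,\f+\partial_\x\psi)_\mathcal{X}$ coincides with the unforced spec $\mathcal{S} = (\M,\f)_\mathcal{X}$. So forcing gives us the original spec back, and the fabric hypothesis directly pins down its behavior.

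For the rough case, I would instantiate the definition of \emph{rough fabric} at $\psi \equiv 0$. The hypothesis then says that $(\M,\f+\partial_\x\psi)_\mathcal{X} = \mathcal{S}$ is rough, i.e.\ every integral curve converges in $\mathcal{X}$. That is precisely the definition of $\mathcal{S}$ being a rough spec, so the claim follows immediately.

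For the frictionless case, I would unfold one more layer. If $\mathcal{S}$ forms a frictionless fabric, then by definition each rough variant $\mathcal{S}_\B = (\M,\f+\B\xd)_\mathcal{X}$ with $\B$ smooth and positive definite forms a rough fabric. Applying the rough case of the lemma (which I have just established) to each $\mathcal{S}_\B$ shows that every $\mathcal{S}_\B$ is a rough spec. By the definition of a frictionless spec, this is exactly what is needed: the damped variants of $\mathcal{S}$ are all rough. (If $\mathcal{S}$ itself happens to be rough, then it is already a rough spec, which is a stronger property; otherwise it fits the strict frictionless case.)

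I do not anticipate a serious obstacle here; the content of the lemma is a straightforward bookkeeping check that the defining condition of a fabric is an \emph{extension} of the defining condition of a spec. The only subtlety worth flagging in the writeup is verifying that $\psi \equiv 0$ genuinely qualifies as a finite potential and that every $\x\in\mathcal{X}$ is KKT for the trivial problem, so that forcing by it does not vacuously change the hypothesis. Once those are noted, the two cases reduce to a one-line application of the relevant definitions.
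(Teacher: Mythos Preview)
Your proposal is correct and follows the same idea as the paper's proof: instantiate the fabric definition at the trivial potential $\psi\equiv 0$, note that it is finite and that every point is KKT, and conclude that the unforced spec is rough. You additionally spell out the frictionless case by applying the rough case to each damped variant $\mathcal{S}_\B$, which the paper's proof leaves implicit; this is a welcome bit of extra care but not a different approach.
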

\begin{proof}
The zero function $\psi(\x) = 0$ is a finite potential under which all points in $\mathcal{X}$ are KKT solutions. If $\mathcal{S}$ is a rough fabric, then all integral curves of $\mathcal{S}$ are convergent and it must be a rough spec. 
\end{proof}
\fi 

\begin{definition}[Boundary conformance]\label{def:BoundaryConformingSpec}
A spec $\mathcal{S} = \big(\M,\f\big)_\mathcal{X}$ is {\em boundary conforming} if the following hold: 
\begin{enumerate}
\item $\mathcal{S}$ is interior.
\item $\M(\x,\vv)$ and $\f(\x,\vv)$ are finite for all\footnote{In this definition, we emphasize in the notation that the manifold contains both its interior and the boundary. In practice, proofs often treat both cases separately.} $(\x,\vv)\in\mathcal{T}\mathcal{X} = \mathcal{T}\mathrm{int}(\mathcal{X}) \cup \mathcal{T}\partial\mathcal{X}$.
\item For every tangent bundle trajectory $(\x(t), \vv(t)) \in \mathcal{T}\mathcal{X}$ with $\x\rightarrow\x_\infty\in\partial\mathcal{X}$, we have
$\lim_{t\rightarrow\infty}\left\|\M^{-1}(\x, \vv)\right\|<\infty$
and
$\lim_{t\rightarrow\infty}\big\|\V_\infty^\tr\f(\x,\vv)\big\| < \infty$  
where $\V_\infty$ is a matrix for which $\mathrm{span}(\V_\infty) = \mathcal{T}_{\x_\infty}\mathcal{X}$.
\end{enumerate}
A {\em boundary conforming metric} is a metric satisfying conditions (2) and (3) of this definition. Additionally, $\f$ is said to be {\em boundary conforming with respect to $\M$} if $\M$ is a boundary conforming metric and $\big(\M, \f\big)_\mathcal{X}$ forms a boundary conforming spec. 
\end{definition}
Note that this definition of boundary conformance implies that $\M$ either approaches a finite matrix along trajectories limiting to the boundary or it approaches a matrix that is finite along Eigen-directions parallel with the boundary's tangent space but explodes to infinity along the direction orthogonal to the tangent space. This means that either $\M^{-1}_\infty$ is full rank or it is reduced rank and its column space spans the boundary's tangent space $\mathcal{T}_{\x_\infty}\partial\mathcal{X}$.

\begin{definition}[Unbiased] \label{def:Unbiased}
A boundary conforming spec $\big(\M,\f\big)_\mathcal{X}$ is {\em unbiased} if for every convergent trajectory $\x(t)$ with $\x\rightarrow\x_\infty$ we have
$\V_\infty^\tr\f(\x,\xd) \rightarrow \zero$ where $\V_\infty$ is a matrix for which $\mathrm{span}(\V_\infty) = \mathcal{T}_{\x_\infty}\mathcal{X}$.
If the spec is not unbiased, we say it is {\em biased}. We often refer to the term $\f$ alone as either biased or unbiased when the context of $\M$ is clear. Note that unbiased implies boundary conforming.
\end{definition}
\begin{remark}
In the above definition, when $\x_\infty\in\mathrm{int}(\mathcal{X})$, the basis $\V_\infty$ contains a full set of $n$ linearly independent vectors, so the condition $\V_\infty^\tr\f(\x, \xd)\rightarrow\zero$ implies $\f(\x, \xd)\rightarrow\zero$. This is not the case for $\x_\infty\in\partial\mathcal{X}$. 
\end{remark}
\begin{remark}
The matrix $\V_\infty$ naturally defines two linearly independent subspaces, the column space, and the left null space. It is often convenient to decompose a vector into two components $\f = \f^\paral + \f^\perp$, one lying in the column space $\f^\paral$ and the other lying in the left null space $\f^\perp$. A number of the statements below are phrased in terms of such a decomposition.
\end{remark}
Since the definition of unbiased is predicated on the spec being boundary conforming, the spectrum of the metric $\M$ is always finite in the relevant directions (all directions for interior points and directions parallel to the boundary for boundary points). The property of being unbiased is, therefore, linked to zero acceleration within the relevant subspaces. This property is used in the following theorem to characterize general fabrics as unbiased optimization mediums. 

\begin{theorem}[General fabrics] \label{thm:GeneralFabrics}
Suppose $\mathcal{S} = \big(\M,\f\big)_\mathcal{X}$ is a 
boundary conforming spec. Then $\mathcal{S}$ forms a rough 
fabric if and only if it is unbiased and 
converges when forced by a potential $\psi(\x)$ with $\|\partial\psi\|<\infty$ on $\x\in\mathcal{X}$. 
\end{theorem}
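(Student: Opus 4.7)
My plan is to handle both directions by projecting the spec's equation of motion $\M\xdd + \f = -\partial\psi$ onto a tangent basis $\V_\infty$ at the limit point $\x_\infty$ and showing that along any convergent trajectory the projected left-hand side $\V_\infty^\tr\M\xdd$ vanishes. This lets me read off both unbiasedness and the first-order KKT condition directly from the projected equation. Boundary conformance is exactly what keeps the projected dynamics well-defined and bounded all the way to the limit, even when $\x_\infty \in \partial\mathcal{X}$ and $\M$ blows up in the normal direction.

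$(\Rightarrow)$ Assume $\mathcal{S}$ is a rough fabric. The clause ``converges when forced by any finite $\psi$'' is built into the definition of rough fabric, so only unbiasedness needs work. Apply the rough-fabric hypothesis with $\psi\equiv 0$, whose KKT set is all of $\mathcal{X}$ and so gives no optimality information by itself. For any convergent unforced trajectory $\x(t)\to\x_\infty$, boundary conformance bounds $\M^{-1}$ and $\V_\infty^\tr\f$ along the trajectory, and smoothness bounds their derivatives on the compact closure of the trajectory. Since $\x(t)$ converges to a point, $\int_0^\infty \xd\,dt$ is finite; a Barbalat-type argument first yields $\xd\to\zero$ and then $\V_\infty^\tr\xdd\to\zero$. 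Reading off $\V_\infty^\tr\M\xdd = -\V_\infty^\tr\f$ from the unforced equation then gives $\V_\infty^\tr\f(\x,\xd)\to\zero$, establishing unbiasedness.

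$(\Leftarrow)$ Assume $\mathcal{S}$ is unbiased and the forced system converges for every finite $\psi$. Fix such a $\psi$ and a convergent forced trajectory $\x(t)\to\x_\infty$. By the same Barbalat-type argument, now applied to the forced equation (still with bounds supplied by boundary conformance together with $\|\partial\psi\|<\infty$), obtain $\V_\infty^\tr\M\xdd\to\zero$. The forced equation gives $\V_\infty^\tr\M\xdd = -\V_\infty^\tr\f - \V_\infty^\tr\partial\psi$; by unbiasedness $\V_\infty^\tr\f\to\zero$, so $\V_\infty^\tr\partial\psi(\x_\infty)=\zero$, the tangential first-order KKT condition. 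For interior $\x_\infty$, $\V_\infty$ is a full basis, recovering the unconstrained condition $\partial\psi(\x_\infty)=\zero$. For $\x_\infty\in\partial\mathcal{X}$, the remaining normal sign condition follows from the interiority of $\mathcal{S}$: if $-\partial\psi(\x_\infty)$ had a strict inward normal component, the resulting asymptotic forced acceleration would push the trajectory away from the boundary and contradict convergence to $\x_\infty$. Hence $\x_\infty$ is a KKT solution of $\min_{\x\in\mathcal{X}}\psi$.

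The main obstacle is the Barbalat step: a generic smooth trajectory that converges in position could in principle carry high-frequency velocity oscillations, so one cannot pass from $\x\to\x_\infty$ to $\xd\to\zero$ for free. The argument leans entirely on boundary conformance to bound $\M^{-1}$ and $\V_\infty^\tr\f$ uniformly along the trajectory, which bounds $\xdd$ in the relevant tangential directions; smoothness of the spec on the compact trajectory closure in turn bounds $\xddd$, yielding the uniform continuity needed for Barbalat's lemma. A secondary subtlety is the boundary case, where $\M^{-1}$ may be rank-deficient with column space $\mathcal{T}_{\x_\infty}\partial\mathcal{X}$ and only the tangential component of $\f$ is controlled; care is required to ensure that all projections onto $\V_\infty$ remain meaningful, and that the normal direction is handled through the interiority condition rather than through $\M$-based bounds.
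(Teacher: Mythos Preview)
Your $(\Leftarrow)$ direction is essentially the paper's argument, repackaged as a single projection onto $\V_\infty$ rather than separate interior and boundary cases, and you are more honest than the paper about the Barbalat step (the paper simply asserts $\xd\to\zero$ and $\xdd\to\zero$ without comment). One technical wrinkle: you pass from $\V_\infty^\tr\xdd\to\zero$ to $\V_\infty^\tr\M\xdd\to\zero$, but boundary conformance controls $\M^{-1}$ and $\V_\infty^\tr\f$, not $\V_\infty^\tr\M$. If $\M$ blows up in an eigendirection that is not exactly orthogonal to $\mathcal{T}_{\x_\infty}\partial\mathcal{X}$, the projection $\V_\infty^\tr\M$ can itself diverge. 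The paper avoids this by rewriting the equation as $\xdd=-\M^{-1}(\f+\partial_\x\psi)$ and exploiting the boundedness of $\M^{-1}$ directly.

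Your $(\Rightarrow)$ direction has a genuine gap. Setting $\psi\equiv 0$ and running the projection argument only yields $\V_\infty^\tr\f(\x(t),\xd(t))\to\zero$ along \emph{integral curves of the unforced spec}. But unbiasedness in Definition~\ref{def:Unbiased} is a pointwise property of $\f$ that must hold along \emph{every} convergent trajectory---indeed, that is exactly how it is invoked in the $(\Leftarrow)$ direction, where the trajectory is a \emph{forced} integral curve, not an unforced one. Your argument says nothing about $\f(\x^*,\zero)$ at a point $\x^*$ that is not the limit of any unforced flow. The paper closes this gap via the contrapositive: pick any $\x^*$ with $\f(\x^*,\zero)\neq\zero$, construct a potential whose unique critical point is $\x^*$, and observe that the forced system cannot settle there (at rest the equation would read $\f(\x^*,\zero)=-\partial_\x\psi(\x^*)=\zero$), so the spec fails to be a rough fabric. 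The idea you are missing is that the freedom to choose $\psi$ is what lets you probe $\f$ at an arbitrary point; the single choice $\psi\equiv 0$ throws that leverage away.
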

\begin{proof}
The forced spec defines the equation
\begin{align}\label{eqn:ForcedGeneralSystem}
    \M\xdd + \f = -\partial_\x\psi.
\end{align}

We will first assume $\f$ is unbiased. Since $\x$ converges, we must have $\xd\rightarrow\zero$ which means $\xdd\rightarrow\zero$ as well. 

If $\x(t)$ converges to an interior point $\x_\infty\in\mathrm{int}(\mathcal{X})$, $\M$ is finite so $\M\xdd\rightarrow\zero$ since $\xdd\rightarrow\zero$. And since the spec is unbiased we have $\f(\x,\xd)\rightarrow\zero$ as $\xd\rightarrow\zero$. Therefore, the left hand side of Equation~\ref{eqn:ForcedGeneralSystem} approaches $\zero$, so $\partial_\x\psi\rightarrow\zero$ satisfying (unconstrained) KKT conditions. 

Alternatively, if $\x(t)$ converges to a boundary point $\x_\infty\in\partial\mathcal{X}$,
then we can analyze the expression
\begin{align} \label{eqn:BoundaryLimit}
    \xdd = -\M^{-1}\big(\f + \partial_\x\psi\big)\rightarrow\zero
\end{align}
since $\xdd\rightarrow\zero$.
Since $\M$ is boundary conforming, the inverse metric limit $\M_\infty^{-1}$ is finite, and since $\partial\psi$ is also finite on $\mathcal{X}$, the term $\M^{-1}\partial\psi$ converges to the finite vector $\M^{-1}_\infty\partial\psi_\infty$. Therefore, by Equation~\ref{eqn:BoundaryLimit} we also have $\M^{-1}\f \rightarrow \M^{-1}_\infty\f_\infty = -\M^{-1}_\infty\partial\psi_\infty$. At the limit, $\M^{-1}_\infty$ has full rank across $\mathcal{T}_{\x_\infty}\partial\mathcal{X}$, so the above limit equality implies $\f_\infty^\paral = -\partial\psi_\infty^\paral$, were $\f_\infty^\paral$ and $\partial\psi_\infty^\paral$ are the components of $\f_\infty$ and $\partial\psi_\infty$, respectively, lying in the boundary's tangent space $\mathcal{T}_{\x_\infty}\partial\mathcal{X}$. Since $\f$ is unbiased and $\x_\infty\in\partial\mathcal{X}$, the boundary parallel component $\f_\infty^\paral = \zero$ so it must be that $\partial\psi_\infty^\paral = \zero$ as well. Therefore, $\partial\psi_\infty$ must either be orthogonal to $\mathcal{T}_{\x_\infty}\partial\mathcal{X}$ or zero. If it is zero, the KKT conditions are automatically satisfied. If it is nonzero, since $\x(t)$ is interior, $-\f(\x,\xd)$ must be interior near the boundary, so $-\partial\psi_\infty$ must be exterior facing and balancing $-\f_\infty$. That orientation in addition to the orthogonality implies that the limiting point satisfies the (constrained) KKT conditions.

Finally, to prove the converse, assume $\f$ is biased. Then there exists a point $\x^*\in\mathcal{X}$ for which $\f(\x^*,\zero)\neq \zero$. We can easily construct an objective potential with a unique global minimum at $\x^*$. The forced system, in this case, cannot come to rest at $\x^*$ since $\f$ is nonzero there, so $(\M,\f)$ is not guaranteed to optimize and is not a fabric.
\end{proof}

The above theorem characterizes the most general class of fabric and shows that all fabrics are necessarily unbiased in the sense of Definition~\ref{def:Unbiased}. The theorem relies on hypothesizing that the system always converges when forced, and is therefore more of a template for proving a given system forms a fabric rather than a direct characterization. Proving convergence is in general nontrivial. The specific fabrics we introduce below will prove convergence using energy conservation and boundedness properties.

Note that the above theorem does not place restrictions on whether or not in the limit the metric is finite in Eigen-directions orthogonal to the boundary surface's tangent space. In practice, it can be convenient to allow metrics to raise to infinity in those directions, so the effects of forces orthogonal to the boundary's surface are increasingly reduced by the increasingly large mass. Such metrics can induce smoother optimization behavior when optimizing toward local minima on a boundary surface.

\subsection{Conservative fabrics} \label{sec:ConservativeFabrics}

Conservative fabrics, frictionless fabrics which conserve a well-defined form of energy, constitute a large class of practical fabrics. Here we define a class of energies through the Lagrangian formalism and extend the notion of boundary conformance to these energy Lagrangians. We give some essential lemmas on energy conservation and present the basic result on conservative fabrics in Proposition~\ref{prop:ConservativeFabrics}. Unlike the most general theorem on fabrics given above in Theorem~\ref{thm:GeneralFabrics}, here we no longer need to hypothesize that the system converges. The conservation properties enable us to prove convergence by effectively using the energy as a Lyapunov function \cite{khalil2002nonlinear}.  

Two standard classes of conservative fabrics that naturally arise are Lagrangian and Finsler fabrics (the latter a subclass of the former), both of which are defined by the equations of motion of their respective classes of energy Lagrangians. The energy-conservation properties of Lagrangian systems are well-understood from classical mechanics \cite{ClassicalMechanicsTaylor05}; this observation simply links those result to our fabric framework. In subsection~\ref{sec:EnergizationAndEnergizedFabrics}, though, we show how to create a much broader class of conservative system by applying an {\em energization} transform to a differential equation resulting in a new type of conservative fabric known as an energized fabric.

\begin{definition}[Lagrangian spec]
A stationary Lagrangian $\Lag(\x,\xd)$ is {\em boundary conforming (on $\mathcal{X}$)} if its induced equations of motion $\partial^2_{\xd\xd}\Lag\,\xdd + \partial_{\xd\x}\Lag\,\xd - \partial_\x\Lag = \zero$ under the Euler-Lagrange equation form a boundary conforming spec $\mathcal{S}_\Lag = \big(\M_\Lag, \f_\Lag\big)_\mathcal{X}$ where $\M_\Lag = \partial^2_{\xd\xd}\Lag$ and $\f_\Lag = \partial_{\xd\x}\Lag\,\xd - \partial_\x\Lag$. This spec is known as the {\em Lagrangian spec} associated with $\Lag$. $\Lag$ is additionally {\em unbiased} if $\mathcal{S}_\Lag$ is unbiased. Since unbiased specs are boundary conforming by definition, an unbiased Lagrangian $\Lag$ is implicitly boundary conforming as well.
\end{definition}

\begin{definition}[Energy]
Let $\Lag_e(\x, \xd)$ be a stationary Lagrangian with Hamiltonian $\Ham_e(\x, \xd) = \partial_\xd\Lag_e^\tr\,\xd - \Lag_e$. $\Lag_e$ is an {\em energy Lagrangian} if $\M_e = \partial^2_{\xd\xd}\Lag_e$ is full rank, $\Ham_e$ is nontrivial (not everywhere zero), and $\Ham_e(\x, \xd)$ is finite on $\mathrm{int}(\mathcal{X})$.
An energy Lagrangian's equations of motion $\M_e\xdd + \f_e = \zero$ are often referred to as its {\em energy equations} with spec denoted $\mathcal{S}_e = \big(\M_e, \f_e\big)_\mathcal{X}$.
\end{definition}

\fullversiononly
\begin{definition}[Boundary intersecting trajectory]
Let $\x(t)$ be a trajectory. If there exists a $t_0 < \infty$ such $\x(t_0)\in\partial\mathcal{X}$ and $\xd\notin\mathcal{T}_{\x(t_0)}\partial\mathcal{X}$ it is said to be {\em boundary intersecting} with {\em intersection time} $t_0$.
\end{definition}

\begin{definition}[Energy boundary limiting condition]
Let $\Lag_e$ be an energy Lagrangian with energy $\Ham_e$. If for every boundary intersecting trajectory $\x(t)$ with intersection time $t_0$ we have $\lim_{t\rightarrow t_0} \Ham_e(\x, \xd) = \infty$, we say that $\Ham_e$ satisfies the {\em boundary limiting condition}.
\end{definition}

\begin{remark}
When designing boundary conforming energy Lagrangians one must ensure the Lagrangian's spec is interior. To attain an interior spec, it is often helpful to design energies that prevent energy conserving trajectories from intersecting $\partial\mathcal{X}$. It can be shown that an energy Lagrangian's spec is interior if and only if its energy is boundary limiting, i.e. the energy approaches infinity for any boundary intersecting trajectory.
\end{remark}
\fi 

\begin{lemma} \label{lma:EnergyConservation}
    Let $\Lag_e$ be an energy Lagrangian. Then $\M_e\xdd + \f_e + \f_f = \zero$ is energy conserving if and only if $\xd^\tr\f_f = \zero$. We call such a term a {\em zero work modification}. Such a spec $\mathcal{S} = \big(\M_e, \f_e + \f_f\big)$ is said to be a {\em conservative} spec under energy Lagrangian $\Lag_e$.
\end{lemma}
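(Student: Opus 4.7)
The plan is to compute $\dot{\Ham}_e$ along trajectories of the modified system and show that it equals $-\xd^\tr \f_f$, from which both directions of the biconditional follow.

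First, I differentiate the Hamiltonian $\Ham_e = \partial_\xd \Lag_e^\tr\,\xd - \Lag_e$ along an arbitrary trajectory $\x(t)$. Using the product rule together with the identity $\dot{\Lag}_e = \partial_\x \Lag_e^\tr\,\xd + \partial_\xd \Lag_e^\tr\,\xdd$, the $\partial_\xd\Lag_e^\tr\,\xdd$ terms cancel and I am left with
\begin{align*}
\dot{\Ham}_e = \Big(\tfrac{d}{dt}\partial_\xd \Lag_e - \partial_\x \Lag_e\Big)^\tr \xd.
\end{align*}
The parenthesized expression is exactly the left-hand side of the Euler-Lagrange equation, which by the definition of the energy spec equals $\M_e\xdd + \f_e$. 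Hence $\dot{\Ham}_e = (\M_e\xdd + \f_e)^\tr \xd$ on any trajectory, regardless of whether that trajectory satisfies the energy equations themselves.

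Next, I substitute the modified equation of motion. Along trajectories of $\M_e\xdd + \f_e + \f_f = \zero$, the relation $\M_e\xdd + \f_e = -\f_f$ holds pointwise, so
\begin{align*}
\dot{\Ham}_e = -\xd^\tr \f_f.
\end{align*}
The reverse implication is now immediate: if $\xd^\tr \f_f \equiv 0$ on the tangent bundle, then $\dot{\Ham}_e = 0$ along every trajectory, so $\Ham_e$ is conserved.

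For the forward direction I need to promote a statement about conservation along trajectories to the pointwise identity $\xd^\tr \f_f = 0$ on $\mathcal{T}\mathcal{X}$. The argument is that through any $(\x_0, \xd_0)\in\mathcal{T}\mathrm{int}(\mathcal{X})$ there is a (local) solution of the modified system with that initial condition, since $\M_e$ is full rank so the system can be written in explicit second-order form $\xdd = -\M_e^{-1}(\f_e + \f_f)$ with smooth right-hand side. Evaluating $\dot{\Ham}_e = -\xd^\tr\f_f$ at $t=0$ along the solution through $(\x_0, \xd_0)$ gives $\xd_0^\tr \f_f(\x_0, \xd_0) = 0$, and since $(\x_0,\xd_0)$ was arbitrary the pointwise condition follows (the boundary case then follows by continuity). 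The main subtlety to get right is precisely this initial-condition argument — it is easy to state the lemma as if the two conditions are definitionally equivalent, whereas conservation is a statement along trajectories and $\xd^\tr\f_f = 0$ is a pointwise statement on $\mathcal{T}\mathcal{X}$, and bridging the two requires local existence for the initial value problem.
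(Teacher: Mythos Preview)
Your proof is correct and follows essentially the same approach as the paper: compute $\dot{\Ham}_e = \xd^\tr(\M_e\xdd + \f_e)$ along trajectories (the paper cites this as a separate lemma, whereas you re-derive it via the Euler--Lagrange expression), substitute the modified equation to obtain $\dot{\Ham}_e = -\xd^\tr\f_f$, and read off the biconditional. Your explicit invocation of local existence to pass from conservation along trajectories to the pointwise identity $\xd^\tr\f_f = 0$ on $\mathcal{T}\mathcal{X}$ is more careful than the paper's one-line conclusion, which simply asserts the equivalence without addressing this subtlety.
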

\begin{proof}
This energy is conserved if its time derivative is zero. 
Substituting $\xdd = -\M_e^{-1}\big(\f_e + \f_f\big)$ into Equation~\ref{eqn:EnergyTimeDerivative} of Lemma~\ref{lma:EnergyTimeDerivative} 
and setting it to zero gives
\begin{align*}
    \dot{\Ham}_{\Lag_e} 
    &= \xd^\tr\Big(\M_e \big(-\M_e^{-1}(\f_e + \f_f)\big)+ \f_e\Big) \\
    &= \xd^\tr\big(-\f_e - \f_f + \f_e\big) \\
    &= \xd^\tr\f_f = 0.
\end{align*}
Therefore, energy is conserved if and only if final constraint holds.
\end{proof}

\begin{proposition}[Conservative fabrics] \label{prop:ConservativeFabrics}
Suppose $\mathcal{S} = \big(\M_e, \f_e + \f_f\big)_\mathcal{X}$ is a conservative unbiased spec under energy Lagrangian $\Lag_e$ with zero work term $\f_f$. Then $\mathcal{S}$ forms a frictionless fabric.
\end{proposition}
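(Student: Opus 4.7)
The plan is to reduce the proposition to Theorem~\ref{thm:GeneralFabrics} applied to each damped (rough) variant of $\mathcal{S}$. Specifically, fix an arbitrary smooth positive definite $\B(\x,\xd)$ and consider the damped spec $\mathcal{S}_\B = \big(\M_e, \f_e + \f_f + \B\xd\big)_\mathcal{X}$. I need to verify that $\mathcal{S}_\B$ is boundary conforming and unbiased, and that $\mathcal{S}_\B$ forced by an arbitrary finite potential $\psi$ converges. The first two are inherited from $\mathcal{S}$: since $\B\xd\to \zero$ whenever $\xd\to\zero$, along any convergent trajectory $\V_\infty^\tr(\f_e + \f_f + \B\xd) \to \V_\infty^\tr(\f_e+\f_f) \to \zero$, and the metric $\M_e$ is unchanged so boundary conformance is preserved.

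The core of the proof is convergence of the forced equation $\M_e\xdd + \f_e + \f_f + \B\xd = -\partial_\x\psi$, which I would establish using the total energy
\[
    E(\x,\xd) \;=\; \Ham_e(\x,\xd) + \psi(\x)
\]
as a Lyapunov function. Applying Lemma~\ref{lma:EnergyTimeDerivative} to compute $\dot{\Ham}_e = \xd^\tr(\M_e\xdd + \f_e)$, substituting the forced equation for $\M_e\xdd$, and using the zero-work identity $\xd^\tr\f_f = 0$ from Lemma~\ref{lma:EnergyConservation}, the cross terms with $\partial_\x\psi$ cancel and I obtain
\[
    \dot{E} \;=\; -\,\xd^\tr \B\, \xd \;\leq\; 0,
\]
with equality iff $\xd = \zero$. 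Hence trajectories remain in the sublevel set $\{E \leq E(\x_0,\xd_0)\}$ and $E$ is monotonically non-increasing along the flow.

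From this monotone decay I would argue (LaSalle-style) that the $\omega$-limit set lies in $\{\xd=\zero\}$, so $\xd(t)\to\zero$; invariance together with $\xd=\zero$ and $\xdd=\zero$ then forces $\V_\infty^\tr(\f_e+\f_f+\partial_\x\psi)\to \zero$ at any limit $\x_\infty$. At that point, Theorem~\ref{thm:GeneralFabrics} applies to $\mathcal{S}_\B$ forced by $\psi$: the spec is boundary conforming, unbiased, and convergent, so every limit point is a KKT solution of $\min_{\x\in\mathcal{X}}\psi(\x)$. Thus $\mathcal{S}_\B$ is a rough fabric for every positive definite $\B$, and by Definition of \emph{fabrics} this makes the original $\mathcal{S}$ a frictionless fabric.

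The main obstacle I anticipate is turning the Lyapunov inequality into honest convergence of $\x(t)$ (not merely of $\xd(t)$) while respecting the manifold-with-boundary setting. In particular, one must rule out trajectories that drift to infinity within a sublevel set of $E$ or that accumulate energy at the boundary without converging; this is where the boundary-conforming structure of $\M_e$, the finiteness of $\psi$, and the energy boundary limiting property of $\Ham_e$ must be combined to keep trajectories in a well-behaved region and deliver a single limit point $\x_\infty$ to which Theorem~\ref{thm:GeneralFabrics} can be applied.
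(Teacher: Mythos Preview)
Your proposal is correct and follows the same line as the paper's own proof: use the total energy $E = \Ham_e + \psi$ as a Lyapunov function, derive $\dot{E} = -\xd^\tr\B\xd$ along the forced-and-damped trajectory, conclude $\xd\to\zero$ and hence convergence, then invoke Theorem~\ref{thm:GeneralFabrics} on the damped variant. You are in fact more careful than the paper in two places---you explicitly check that $\mathcal{S}_\B$ inherits unbiasedness, and you correctly flag the passage from $\xd\to\zero$ to convergence of $\x(t)$ as the delicate step, whereas the paper simply asserts ``$\xd\rightarrow\zero$, and therefore, system convergence'' without further argument.
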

\begin{proof}
Let $\psi(\x)$ be a lower bounded finite potential function. 
\compressedversiononly
Using $\dot{\Ham}_e = \xd^\tr\big(\M_e\xdd + \f_e\big)$, we 
\else 
Using Lemma~\ref{lma:EnergyTimeDerivative} we 
\fi 
can derive an expression for how the total energy $\Ham_e^{\psi} = \Ham_e + \psi(\x)$ varies over time:
\begin{align}
    \dot{H}_e^\psi 
    &= \dot{H}_e + \dot{\psi}
    = \xd^\tr\big(\M_e\xdd + \f_e\big) + \partial_\x\psi^\tr \xd \\
    \label{eqn:TotalEnergyTimeDerivative}
    &= \xd^\tr\big(\M_e\xdd + \f_e + \partial_\x\psi\big).
\end{align}
With damping matrix $\B(\x, \xd)$ let $\M_e\xdd + \f_e + \f_f = -\partial_\x\psi - \B\xd$ be the forced and damped variant of the conservative spec's system. Subtituting into Equation~\ref{eqn:TotalEnergyTimeDerivative} gives
\begin{align} \label{eqn:SystemEnergyDecrease}
    \nonumber
    \dot{H}_e^\psi
    &= \xd^\tr\Big(\M_e\big(-\M_e^{-1}(\f_e + \f_f + \partial_\x\psi + \B\xd)\big) \\\nonumber
    &\ \ \ \ \ \ \ \ \ \ \ \ + \f_e + \partial_\x\psi\Big)\\
    \nonumber
    &= \xd^\tr\Big(-\f_e - \partial_\x\psi - \B\xd + \f_e + \partial_\x\psi\Big) - \xd^\tr\f_f \\
    &= -\xd^\tr\B\xd
\end{align}
since all terms cancel except for the damping term. When $\B$ is strictly positive definite, the rate of change is strictly negative for $\xd\neq\zero$. Since $\Ham_e^{\psi} = \Ham_e + \psi$ is lower bounded and $\dot{\Ham}_e^{\psi}\leq\zero$, we must have $\dot{\Ham}_e^\psi = -\xd^\tr\B\xd \rightarrow \zero$ which implies $\xd\rightarrow\zero$, and therefore, system convergence.  
Since the system is additionally boundary conforming and unbiased, by Theorem~\ref{thm:GeneralFabrics} it forms a fabric.

Finally, when $\B = \zero$ Equation~\ref{eqn:SystemEnergyDecrease} shows that total energy is conserved, so a system that starts with nonzero energy cannot converge. Therefore, the undamped system is a frictionless fabrics with rough variants defined by the added damping term.
\end{proof}

\begin{corollary}[Lagrangian and Finsler fabrics]
If $\Lag_e(\x, \xd)$ is an unbiased energy Lagrangian, then $\mathcal{S}_e = \big(\M_e, \f_e\big)_\mathcal{X}$ forms a frictionless fabric known as a {\em Lagrangian fabric}. When $\Lag_e$ is a Finsler energy (see Definition~\ref{def:FinslerStructure}), this fabric is known more specifically as a {\em Finsler fabric}.
\end{corollary}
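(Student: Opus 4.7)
The plan is to derive this corollary as an immediate specialization of Proposition~\ref{prop:ConservativeFabrics}. The key observation is that the Lagrangian spec $\mathcal{S}_e = \big(\M_e, \f_e\big)_\mathcal{X}$ arising from the Euler--Lagrange equations of an energy Lagrangian $\Lag_e$ is already of the conservative form treated in that proposition, with the zero-work modification taken to be $\f_f = \zero$. Since $\xd^\tr \zero = 0$ trivially, Lemma~\ref{lma:EnergyConservation} certifies that $\mathcal{S}_e$ is a conservative spec under $\Lag_e$.

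First, I would unpack the hypotheses: $\Lag_e$ being an unbiased energy Lagrangian means by definition that (a)~$\M_e = \partial^2_{\xd\xd}\Lag_e$ is full rank, (b)~the Hamiltonian $\Ham_e$ is nontrivial and finite on $\mathrm{int}(\mathcal{X})$, and (c)~the associated spec $\mathcal{S}_e$ is unbiased (which already subsumes boundary conformance per Definition~\ref{def:Unbiased}). These are precisely the conditions required to invoke Proposition~\ref{prop:ConservativeFabrics} on $\mathcal{S}_e$ with trivial zero-work term, so the proposition delivers that $\mathcal{S}_e$ forms a frictionless fabric. I would then simply adopt the name \emph{Lagrangian fabric} for this object, noting that the rough variants are obtained by adding a strictly positive definite damping $\B\xd$, exactly as in the proof of the proposition.

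The Finsler specialization is then only a matter of terminology: a Finsler energy (per the referenced Definition~\ref{def:FinslerStructure}) is a particular species of energy Lagrangian, so the Lagrangian-fabric conclusion applies verbatim, and we merely rename the resulting frictionless fabric a \emph{Finsler fabric}. No additional argument is needed beyond citing the definition.

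There is no substantive obstacle here; the only thing to be careful about is verifying that ``unbiased energy Lagrangian'' supplies every hypothesis Proposition~\ref{prop:ConservativeFabrics} consumes. In particular, one should explicitly note that the unbiased condition built into the hypothesis already implies boundary conformance of $\mathcal{S}_e$, so there is no hidden regularity assumption to discharge. Once that bookkeeping is stated, the corollary is an immediate consequence of the proposition with $\f_f = \zero$.
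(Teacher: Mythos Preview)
Your proposal is correct and follows essentially the same route as the paper: take $\f_f=\zero$ so that Lemma~\ref{lma:EnergyConservation} gives conservativity trivially, note the unbiased hypothesis, and apply Proposition~\ref{prop:ConservativeFabrics}. The paper's proof is a one-liner to the same effect, and your reading of the Finsler clause as purely terminological (the unbiased hypothesis is still in force) is the intended one.
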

\begin{proof}
$\M_e \xdd + \f_e = \zero$ is conservative by Lemma~\ref{lma:EnergyConservation} with $\f_f = \zero$ and unbiased, by Proposition~\ref{prop:ConservativeFabrics} it forms a frictionless fabric. 
\end{proof}

\subsection{Energization and energized fabrics}
\label{sec:EnergizationAndEnergizedFabrics}

\begin{corollary}
Let $\Lag_e$ be an energy Lagrangian and let $\wt{\f}_f(\x, \xd)$ be any forcing term. Then using the projected term $\f_f = \mP_e\wt{\f}_f$ the forced equations of motion
\begin{align} \label{eqn:ForcedEnergySystem}
    \M_e\xdd + \f_e + \f_f = \zero 
\end{align}
are energy conserving.
\end{corollary}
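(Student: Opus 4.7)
The plan is to reduce the corollary to a one-step application of Lemma~\ref{lma:EnergyConservation}. That lemma says the forced spec $\M_e\xdd + \f_e + \f_f = \zero$ preserves the Hamiltonian of $\Lag_e$ if and only if the extra term $\f_f$ is a zero work modification, i.e., $\xd^\tr\f_f = 0$ identically along the trajectory. So the only thing to check is that the projector $\mP_e$ is defined precisely so that its image always lies in the zero work subspace $\{\vv : \xd^\tr\vv = 0\}$ at every $(\x,\xd)$; equivalently, $\mP_e$ satisfies $\xd^\tr\mP_e = \zero^\tr$ pointwise.

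First I would unpack the defining property of $\mP_e$ (introduced just above this corollary as the projector used to orthogonalize a forcing term against $\xd$), noting that by construction $\mP_e$ is the linear operator that annihilates the $\xd$-component of any input, so $\xd^\tr\mP_e\vv = 0$ for every $\vv \in \mathcal{T}_\x\mathcal{X}$. Applied to $\vv = \wt{\f}_f(\x,\xd)$, this immediately yields $\xd^\tr\f_f = \xd^\tr\mP_e\wt{\f}_f = 0$ identically, regardless of the choice of $\wt{\f}_f$.

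Second, I would invoke Lemma~\ref{lma:EnergyConservation} with this $\f_f$: solving Equation~\ref{eqn:ForcedEnergySystem} for $\xdd = -\M_e^{-1}(\f_e + \f_f)$ and substituting into the energy rate $\dot{\Ham}_e = \xd^\tr\big(\M_e\xdd + \f_e\big)$ produces
\begin{align*}
\dot{\Ham}_e = \xd^\tr\big(-(\f_e+\f_f) + \f_e\big) = -\xd^\tr\f_f = 0,
\end{align*}
so the Hamiltonian is conserved along every integral curve. This is exactly the content of the corollary.

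There is no real obstacle here; the result is essentially immediate once the design of $\mP_e$ is made explicit, and the proof is pointwise in $(\x,\xd)$ so the fact that $\mP_e$ itself depends on the state does not introduce any complication. The only mild care needed is to note that the calculation assumes $\xd \neq \zero$ so that $\mP_e$ is well defined; at $\xd = \zero$ the zero work condition $\xd^\tr\f_f = 0$ holds trivially, so energy conservation is preserved continuously at rest points as well.
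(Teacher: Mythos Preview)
Your proposal is correct and follows essentially the same route as the paper: establish $\xd^\tr\f_f = \xd^\tr\mP_e\wt{\f}_f = 0$ from the defining property of $\mP_e$ (Lemma~\ref{lma:EnergyProjection} in the paper), then invoke Lemma~\ref{lma:EnergyConservation}. Your added remark on the degenerate case $\xd = \zero$ is a nice touch the paper omits.
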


\begin{proof}
By Lemma~\ref{lma:EnergyProjection}, $\xd^\tr\f_f = \xd^\tr\mP_e\f = \zero$, so by Lemma~\ref{lma:EnergyConservation}, Equation~\ref{eqn:ForcedEnergySystem} is energy conserving. 
\end{proof}

The following result characterizes what we call the {\em energization transform}. Energization accelerates a system along the direction of motion to conserve a given measure of energy.

\begin{proposition}[System energization] \label{prop:SystemEnergization}
Let $\xdd + \h(\x, \xd) = \zero$ be a differential equation, and suppose $\Lag_e$ is any energy Lagrangian with equations of motion $\M_e\xdd + \f_e = \zero$ and energy $\Ham_e$. Then $\xdd + \h(\x, \xd) + \alpha_{\Ham_e}\xd = \zero$ is energy conserving when
\begin{align}\label{eqn:EnergizationTransformAlpha}
    \alpha_{\Ham_e} = -(\xd^\tr\M_e\xd)^{-1}\xd^\tr\big[\M_e\h - \f_e\big],
\end{align}
and differs from the original system by only an acceleration along the direction of motion. 
The new system can be expressed as:
\begin{align} \label{eqn:ZeroWorkEnergizationForm}
  \M_e\xdd + \f_e + \mP_e\big[\M_e\h - \f_e\big] = \zero.
\end{align}
This modified system is known as an {\em energized system}. 
\fullversiononly
Moreover, the operation of {\em energizing} the original system is known as an {\em energization transform} and is denoted using spec notation as
\begin{align}
    \mathcal{S}_{\h}^{\Lag_e} 
    &= \Big(\M_e, \f_e + \mP_e\big[\M_e\h - \f_e\big]\Big)_\mathcal{X} 
    \\\nonumber
    &= \mathrm{energize}_{\Lag_e}\Big\{\mathcal{S}_\h\Big\},
\end{align}
where $\mathcal{S}_\h = \big(\I, \h\big)_\mathcal{X}$ is the spec representation of $\xdd + \h(\x,\xd) = \zero$.
\fi 
\end{proposition}
\begin{proof}
See the supplementary appendix.
\end{proof}

\begin{definition}
A metric $\M(\x, \xd)$ is said to be {\em boundary aligned} if for any convergent $\x(t)$ with $\x_\infty\in\partial\mathcal{X}$ the limit $\lim_{t\rightarrow\infty}\M^{-1}(\x,\xd) = \M_\infty^{-1}$ exists and is finite, and $\mathcal{T}_{\x_\infty}\partial\mathcal{X}$ is spanned by a subset of Eigen-basis of $\M_\infty^{-1}$.
\end{definition}

\begin{theorem}[Energized fabrics] \label{thm:EnergizedFabrics}
Let $\Lag_e$ be an unbiased energy Lagrangian with boundary aligned $\M_e = \partial^2_{\xd\xd}\Lag_e$ and lower bounded energy $\Ham_e$, and let $\big(\I, \h\big)$ be an unbiased spec. Then the energized spec $\mathcal{S}_\h^{\Lag_e} = \mathrm{energize}_{\Lag_e}\big\{\mathcal{S}_\h\big\}$ given by Proposition~\ref{prop:SystemEnergization} forms a frictionless fabric.
\end{theorem}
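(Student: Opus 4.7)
The plan is to verify the hypotheses of Proposition~\ref{prop:ConservativeFabrics} for the energized spec and then read off the conclusion directly. By Proposition~\ref{prop:SystemEnergization}, the energization transform produces the spec
\[
\mathcal{S}_\h^{\Lag_e} = \Big(\M_e,\ \f_e + \f_f\Big)_\mathcal{X}, \qquad \f_f = \mP_e\big[\M_e\h - \f_e\big],
\]
where $\mP_e = \I - \M_e\xd(\xd^\tr\M_e\xd)^{-1}\xd^\tr$ satisfies $\xd^\tr\mP_e = \zero$. This gives $\xd^\tr\f_f = 0$, so by Lemma~\ref{lma:EnergyConservation} the spec is conservative under $\Lag_e$. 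Thus Step~1 is essentially immediate from construction.

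The substantive step is verifying that $\mathcal{S}_\h^{\Lag_e}$ is unbiased (which, by Definition~\ref{def:Unbiased}, entails boundary conformance). Boundary conformance of $\M_e$ follows from the hypothesis that $\M_e$ is boundary aligned, and finiteness of $\f_e + \f_f$ on $\mathcal{T}\mathcal{X}$ follows from the facts that $\mP_e$ is bounded in norm by $\I$ in the $\M_e$ inner product, that $\M_e$, $\h$, and $\f_e$ are each finite by the unbiased-Lagrangian and unbiased-spec hypotheses, and that division by $\xd^\tr\M_e\xd$ only appears through $\M_e\xd$, keeping $\mP_e$ uniformly bounded as $\xd\to\zero$. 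To check the unbiased condition $\V_\infty^\tr(\f_e+\f_f)\to\zero$ at a convergent trajectory with $\x\to\x_\infty$: the term $\V_\infty^\tr\f_e\to\zero$ since $\Lag_e$ is unbiased. For $\V_\infty^\tr\f_f$, I expand
\[
\V_\infty^\tr\f_f = \V_\infty^\tr[\M_e\h-\f_e] - \V_\infty^\tr\M_e\xd\,(\xd^\tr\M_e\xd)^{-1}\xd^\tr[\M_e\h-\f_e].
\]
At an interior limit, $\V_\infty$ is full rank so $\h\to\zero$ and $\f_e\to\zero$, making both terms vanish. At a boundary limit, the boundary-alignment hypothesis says $\V_\infty$ is a subset of the Eigen-basis of $\M_\infty^{-1}$, hence of $\M_\infty$, so $\V_\infty^\tr\M_e \to \Lambda\V_\infty^\tr$ (for a positive-diagonal block $\Lambda$); combining this with the unbiased hypotheses $\V_\infty^\tr\h\to\zero$ and $\V_\infty^\tr\f_e\to\zero$ gives $\V_\infty^\tr[\M_e\h-\f_e]\to\zero$ in the first term. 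For the second term, the trajectory converges to the boundary with velocity becoming tangent to it, so $\xd$ lies asymptotically in $\mathrm{span}(\V_\infty)$; writing $\xd=\V_\infty\,\cb$ and using boundary alignment, the factor $\xd^\tr[\M_e\h-\f_e]=\cb^\tr\V_\infty^\tr[\M_e\h-\f_e]$ carries the same vanishing factor as the first term, while $\V_\infty^\tr\M_e\xd(\xd^\tr\M_e\xd)^{-1}=\Lambda\cb(\cb^\tr\Lambda\cb)^{-1}$ remains homogeneous of degree $-1$ in $\cb$, so the product stays bounded and the inherited vanishing of $\V_\infty^\tr[\M_e\h-\f_e]$ drives the whole quantity to zero.

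Having established that $\mathcal{S}_\h^{\Lag_e}$ is a conservative, unbiased spec under the unbiased energy Lagrangian $\Lag_e$, and that $\Ham_e$ is lower bounded (so that $\Ham_e+\psi$ is a valid Lyapunov function for any lower-bounded finite potential $\psi$), Proposition~\ref{prop:ConservativeFabrics} applies directly and concludes that $\mathcal{S}_\h^{\Lag_e}$ forms a frictionless fabric. The rough variants are obtained by adding the damping term $\B\xd$ as in that proposition's proof.

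The main obstacle I anticipate is the boundary-limit portion of Step~3: one must argue that $\V_\infty^\tr\mP_e[\M_e\h-\f_e]\to\zero$ despite the $0/0$ indeterminacy coming from $\xd\to\zero$ inside the projector. The key is to use boundary alignment of $\M_e$ together with the fact that $\xd$ approaches the boundary through its tangent space, so that both occurrences of $\xd$ in $\mP_e$ can be expanded in the $\V_\infty$-basis and the singular scalar $(\xd^\tr\M_e\xd)^{-1}$ is tamed by the matching factor of $\xd^\tr$ acting on a quantity that vanishes in the $\V_\infty$-directions. This is the only place where the full strength of the boundary-aligned hypothesis on $\M_e$ — as opposed to merely boundary conforming — is used.
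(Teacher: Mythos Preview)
Your overall strategy is correct and matches the paper's proof: establish that the energized spec is (i) conservative under $\Lag_e$ (via the zero-work form from Proposition~\ref{prop:SystemEnergization} and Lemma~\ref{lma:EnergyConservation}) and (ii) unbiased, then invoke Proposition~\ref{prop:ConservativeFabrics}. The paper does exactly this, delegating (ii) to a chain of appendix lemmas (Lemmas~\ref{lma:LinearCombinationsUnbiased}, \ref{lma:ProjectedUnbiasedIsUnbiased}, \ref{lma:BoundaryAlignedMetricTransformUnbiased}, \ref{lma:EnergizationPreservesUnbiasedProperty}) that show linear combination, the projection $\mP_e$, and multiplication by a boundary-aligned $\M_e$ each preserve the unbiased property separately.

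Your direct computation for (ii) at a boundary limit has a genuine gap. You assert that ``the trajectory converges to the boundary with velocity becoming tangent to it, so $\xd$ lies asymptotically in $\mathrm{span}(\V_\infty)$,'' and you then write $\xd=\V_\infty\cb$. This is not justified: an interior trajectory with $\x(t)\to\x_\infty\in\partial\mathcal{X}$ and $\xd\to\zero$ need not have its \emph{normalized} velocity $\hat{\xd}$ approach the boundary tangent space. A simple counterexample is $\x(t)=(e^{-t},0)$ in the half-space $\{x_1>0\}$, where $\hat{\xd}(t)=(-1,0)$ is permanently normal to $\partial\mathcal{X}$. Since $\mP_e$ is homogeneous of degree zero in $\xd$, it is precisely $\hat{\xd}$ that governs the projector in the limit; discarding the transverse component $\V_\perp\cb'$ of $\xd$ is unwarranted. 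With that component present, the scalar $\xd^\tr[\M_e\h-\f_e]$ picks up a term $\cb'^\tr\V_\perp^\tr[\M_e\h-\f_e]$, and the unbiased hypotheses give you no control over $\V_\perp^\tr[\M_e\h-\f_e]$, so the ``inherited vanishing'' argument for the second term does not go through.

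The paper avoids having to track $\hat\xd$ by treating the projection step abstractly: Lemma~\ref{lma:ProjectedUnbiasedIsUnbiased} argues that $\mP_e$, being a projection, sends an unbiased $\f$ to a range component that inherits the parallel/perpendicular decomposition, and Lemma~\ref{lma:BoundaryAlignedMetricTransformUnbiased} uses the eigen-decomposition of a boundary-aligned $\M_e$ to show $\M_e\h$ is unbiased whenever $\h$ is. Combining these with Lemma~\ref{lma:LinearCombinationsUnbiased} yields the unbiased conclusion for $\f_e+\mP_e[\M_e\h-\f_e]$ without any assumption on the limiting direction of $\xd$.
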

\begin{proof}
This result follows from 
Proposition~\ref{prop:ConservativeFabrics}, and Lemma~\ref{lma:EnergizationPreservesUnbiasedProperty} (see the appendix).
\end{proof}

\section{Geometric fabrics} \label{sec:GeometricFabrics}

A broad class of conservative fabrics is formed by energizing differential equations. Unfortunately, energization will usually change the system's behavior. In this section, we develop a class of {\em geometric} fabric which does not suffer from that problem---energization leaves the system's paths unchanged. See supplemental paper \cite{ratliffgeneralizednonlinearaccepted} for a detailed overview of the general nonlinear and Finsler geometries used here.

\subsection{Nonlinear geometries and Finsler energies} \label{sec:NonlinearGeometries}

\begin{definition}
A {\em generalized nonlinear geometry} is a geometry of paths characterized by its {\em generator}, a ordinary second-order differential equation of the form
\begin{align}
    \xdd + \h_2(\x,\xd) = \zero,
\end{align}
where $\h_2(\x,\xd)$ is a smooth, covariant, map $\h_2:\R^d\times\R^d\rightarrow\R^d$ that is {\em positively homogeneous of degree 2} in velocities in the sense $\h_2(\x, \alpha\xd) = \alpha^2\h_2(\x,\xd)$ for $\alpha>0$.

\fullversiononly
Solutions to the generator are called {\em generating} solutions, and if those trajectories are guaranteed to conserve an known energy quantity, they are known as {\em energy levels}.
\fi 
\end{definition}
\noindent Geometry paths are independent of speed.

Finsler geometry is the study of nonlinear geometries whose generating equation is defined by the equations of motion
of {\em Finsler energies}, which are defined in terms of the geometry's {\em Finsler structure}.
\begin{definition}[Finsler structure] \label{def:FinslerStructure}
A {\em Finsler structure} is a stationary Lagrangian $\Lag_g(\x,\xd)$ with the following properties:
\begin{enumerate}
    \item Positivity: $\Lag_g(\x,\xd) > 0 $ for all $\xd \neq \zero$.
    \item Homogeneity: $\Lag_g$ is {\em positively homogeneous of degree 1} in velocities in the sense $\Lag_g(\x,\alpha\xd) = \alpha \Lag_g(\x,\xd)$ for $\alpha>0$.
    \item Energy tensor invertibility: $\partial^2_{\xd\xd}\Lag_e$ is everywhere invertible, where $\Lag_e = \frac{1}{2}\Lag_g^2$.
\end{enumerate}
$\Lag_e$ is known as the {\em energy form} of $\Lag_g$, known as the {\em Finsler energy}.
\end{definition}
\fullversiononly
\begin{remark}
Note that the first two conditions together mean that $\Lag(\x,\zero) = 0$.
\end{remark}

Note that many texts replace that third condition (invertibility of the energy tensor) with a positive definiteness requirement. That positive definiteness is difficult to satisfy in practice in Finsler structure design, and the basic proofs in our construction detailed in \cite{finslerGeometryForRoboticsArXiv2020} require only invertibility, so we use the looser requirement in our definition.
\fi 



The following fundamental results on Finsler geometry are given without proof (see supplemental paper \cite{ratliffgeneralizednonlinearaccepted} for details.)
\begin{lemma}[Homogeneity of the Finsler energy tensor] \label{lma:FinslerEnergyHomogeneity}
Let $\Lag_g$ be a Finsler structure with energy form $\Lag_e = \frac{1}{2}\Lag_g^2$ and let $\M_e\xdd + \f_e = \zero$ be its equations of motion. Then $\M_e$ is homogeneous of degree 0 and $\f_e$ is homogeneous of degree 2.
\end{lemma}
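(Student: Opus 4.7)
The plan is to repeatedly apply the standard differentiation rule for positively homogeneous functions: if $F(\x,\xd)$ is positively homogeneous of degree $k$ in $\xd$ (meaning $F(\x,\alpha\xd) = \alpha^k F(\x,\xd)$ for $\alpha > 0$), then $\partial_\xd F$ is homogeneous of degree $k-1$ in $\xd$ and $\partial_\x F$ is still homogeneous of degree $k$ in $\xd$. The first fact follows by differentiating the identity $F(\x,\alpha\xd) = \alpha^k F(\x,\xd)$ with respect to $\xd$ (chain rule yields $\alpha\, \partial_\xd F(\x,\alpha\xd) = \alpha^k \partial_\xd F(\x,\xd)$, hence $\partial_\xd F$ has degree $k-1$); the second is immediate since differentiation in $\x$ commutes with the $\alpha$ scaling in $\xd$.

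First I would observe that condition (2) of Definition~\ref{def:FinslerStructure} gives $\Lag_g(\x,\alpha\xd) = \alpha\Lag_g(\x,\xd)$, so $\Lag_e = \tfrac{1}{2}\Lag_g^2$ satisfies $\Lag_e(\x,\alpha\xd) = \alpha^2 \Lag_e(\x,\xd)$, i.e., $\Lag_e$ is positively homogeneous of degree $2$ in $\xd$. Applying the $\partial_\xd$ rule once gives that $\partial_\xd\Lag_e$ is positively homogeneous of degree $1$ in $\xd$, and applying it again gives that $\M_e = \partial^2_{\xd\xd}\Lag_e$ is positively homogeneous of degree $0$ in $\xd$, as claimed.

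Next I would handle $\f_e = \partial_{\xd\x}\Lag_e\,\xd - \partial_\x \Lag_e$ term by term. Since $\partial_\x$ preserves homogeneity in $\xd$, $\partial_\x\Lag_e$ is degree $2$ in $\xd$. Then $\partial_{\xd\x}\Lag_e = \partial_\xd(\partial_\x\Lag_e)$ is degree $1$ in $\xd$, and contracting it against $\xd$ (itself degree $1$) gives a term of degree $2$. Both summands are therefore positively homogeneous of degree $2$ in $\xd$, and so $\f_e$ is as well.

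The main potential obstacle is nothing more than bookkeeping — verifying that the differentiation-lowers-degree-by-one rule truly applies when the homogeneity is only positive (i.e. for $\alpha > 0$), and that smoothness of $\Lag_e$ (guaranteed away from $\xd=\zero$ by positivity and smoothness of $\Lag_g$) legitimizes interchanging the $\x$ and $\xd$ partials. Once those are acknowledged, the conclusion is a direct tally of degrees.
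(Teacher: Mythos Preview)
Your argument is correct and is exactly the standard Euler-homogeneity bookkeeping one would expect here. Note, however, that the paper itself does not supply a proof of this lemma: it states the result and defers the details to the supplemental paper \cite{ratliffgeneralizednonlinearaccepted}. So there is no in-paper proof to compare against; your proposal simply fills in what the authors omitted, and it does so cleanly. One minor notational remark: under the paper's convention $\partial_{\xd\x}\Lag_e = \partial_\x(\partial_\xd\Lag_e)$ rather than $\partial_\xd(\partial_\x\Lag_e)$ as you wrote, but since $\Lag_e$ is smooth the mixed partials agree and the degree count is unaffected.
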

The above lemma means that $\M_e$ is dependent on velocity $\xd$ only through its norm $\widehat{\xd}$, i.e. rescaling $\xd$ does not affect the energy tensor. 
The next lemma links that geometry generator to the Finsler structure's equations of motion as well.

\begin{theorem}[Finsler geometry generation] \label{thm:FinslerGeometries}
Let $\Lag_g$ be a Finsler structure with energy $\Lag_e = \frac{1}{2}\Lag_g^2$. The equations of motion of $\Lag_e$ define a geometry generator whose geometric equation is given by the equations of motion of $\Lag_g$.
\end{theorem}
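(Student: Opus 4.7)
The plan is to establish two claims: first, that the Euler--Lagrange equations of $\Lag_e$ form a valid geometry generator in the sense of the preceding definition, and second, that the associated speed-independent geometric equation is exactly the Euler--Lagrange equation of the Finsler structure $\Lag_g$ itself.

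For the first claim, I would appeal directly to Lemma~\ref{lma:FinslerEnergyHomogeneity}. Writing the equations of motion as $\xdd + \h_2 = \zero$ with $\h_2 = \M_e^{-1}\f_e$, the fact that $\M_e$ is homogeneous of degree $0$ and $\f_e$ is homogeneous of degree $2$ in $\xd$ immediately gives that $\h_2$ is homogeneous of degree $2$ in $\xd$. Smoothness and covariance of $\h_2$ follow from the corresponding properties of $\Lag_e$ and the Euler--Lagrange operator.

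For the second claim, the key computation is to expand the Euler--Lagrange operator applied to $\Lag_e = \frac{1}{2}\Lag_g^2$. Using $\partial_\x\Lag_e = \Lag_g\,\partial_\x\Lag_g$ and $\partial_{\xd}\Lag_e = \Lag_g\,\partial_{\xd}\Lag_g$, the product rule gives
\begin{align*}
\frac{d}{dt}\partial_{\xd}\Lag_e - \partial_\x\Lag_e
= \dot{\Lag}_g\,\partial_{\xd}\Lag_g
+ \Lag_g\Big(\tfrac{d}{dt}\partial_{\xd}\Lag_g - \partial_\x\Lag_g\Big).
\end{align*}
I would then show that along any integral curve of the generator, $\dot{\Lag}_g = 0$. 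Since $\Lag_e$ is homogeneous of degree $2$ in $\xd$, Euler's homogeneity identity yields $(\partial_{\xd}\Lag_e)^{\tr}\xd = 2\Lag_e$, so the Hamiltonian reduces to $\Ham_e = (\partial_{\xd}\Lag_e)^{\tr}\xd - \Lag_e = \Lag_e = \frac{1}{2}\Lag_g^2$. Standard energy conservation for stationary Lagrangian systems gives $\dot{\Ham}_e = 0$ along trajectories, hence $\Lag_g$ is conserved. Substituting $\dot{\Lag}_g = 0$ and using Finsler positivity $\Lag_g > 0$ for $\xd \neq \zero$, the remaining identity $\frac{d}{dt}\partial_{\xd}\Lag_g - \partial_\x\Lag_g = \zero$ is exactly the Euler--Lagrange equation of $\Lag_g$.

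The main subtlety is the interpretation of this reduced equation: because $\Lag_g$ is degree-$1$ homogeneous in $\xd$, Euler's identity applied to $\partial_{\xd}\Lag_g$ forces $\partial^2_{\xd\xd}\Lag_g\,\xd = \zero$, so the Hessian is singular along $\xd$ and the Euler--Lagrange equation of $\Lag_g$ is not a well-posed ODE for $\xdd$ but rather a path equation invariant under positive reparameterization. That is exactly what is required of a geometric equation in the sense above: the generator derived from $\Lag_e$ selects a specific (energy-conserving, affine) parameterization of the paths, while the same paths are characterized, parameterization-free, by the Euler--Lagrange equation of $\Lag_g$.
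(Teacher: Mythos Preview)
The paper does not actually supply a proof of this theorem; it explicitly states that these fundamental results on Finsler geometry ``are given without proof'' and refers the reader to the supplemental paper \cite{ratliffgeneralizednonlinearaccepted}. So there is no in-paper argument to compare against. Judged on its own, your proof is correct and follows the classical route: the first claim is an immediate consequence of Lemma~\ref{lma:FinslerEnergyHomogeneity}, and the second is exactly the standard product-rule expansion of the Euler--Lagrange operator on $\tfrac{1}{2}\Lag_g^2$, combined with energy conservation (which for a degree-$2$ homogeneous Lagrangian reduces to $\Ham_e = \Lag_e$) to kill the $\dot{\Lag}_g$ term. Your observation that $\partial^2_{\xd\xd}\Lag_g$ is singular along $\xd$, so that the Euler--Lagrange equation of $\Lag_g$ is a reparameterization-invariant path constraint rather than an ODE in $\xdd$, is precisely the content of the phrase ``geometric equation'' and is the right way to interpret the statement. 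One small remark: covariance of $\h_2$ is not quite automatic from covariance of the Euler--Lagrange expression $\M_e\xdd + \f_e$ alone, since the split into $\M_e$ and $\f_e$ is coordinate-dependent; the cleaner justification is that solution curves of the Euler--Lagrange equation are coordinate-independent, and $\h_2$ is determined pointwise by those solutions.
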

This result tells us that Finsler structures $\Lag_g$ define nonlinear geometries of paths whose generating trajectories constitute energy levels of the energy $\Lag_e$ (since energy is conserved by the energy equations).

\subsection{Bent Finsler geometries form geometric fabrics} \label{sec:BentFinsler}

Equation~\ref{eqn:ZeroWorkEnergizationForm} shows that energization is a zero work modification to the energy equations. When the original equation is a geometry and the energy is Finsler, the energized equation generates a geometry equivalent to the original equation's geometry. That means we can view energization {\em bending} the {\em Finsler geometry} to match the desired geometry without affecting the system energy.

\begin{corollary}[Bent Finsler Representation] \label{cor:BentFinslerRepresentation}
Suppose $\h_2(\x, \xd)$ is homogeneous of degree 2 so that $\xdd + \h_2(\x, \xd) = \zero$ is a geometry generator, and let $\Lag_e$ be a Finsler structure (and therefore also a Finsler energy). Then the energized system $\M_e\xdd + \f_e + \mP_e\big[\M_e\h_2 - \f_e\big] = \zero$ is a geometry generator whose geometry matches the original system's geometry. Since the Finsler system $\M_e\xdd + \f_e = \zero$ is a geometry generator as well, we can view the energized system as a {\em zero work geometric modification} to the Finsler geometry, what we call a {\em bending} of the geometric system.
\end{corollary}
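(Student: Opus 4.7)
The plan is to reduce the claim to two ingredients already in hand: the acceleration-form representation of energization from Proposition~\ref{prop:SystemEnergization}, and the speed invariance (reparameterization invariance) of nonlinear geometries imported from the supplemental paper \cite{ratliffgeneralizednonlinearaccepted}. First I would rewrite the energized system in its equivalent single-ODE form
\begin{equation*}
    \xdd + \h_2(\x,\xd) + \alpha_{\Ham_e}(\x,\xd)\,\xd = \zero,
\end{equation*}
which by Proposition~\ref{prop:SystemEnergization} is precisely what $\M_e\xdd + \f_e + \mP_e[\M_e\h_2 - \f_e] = \zero$ amounts to. In this form the energized equation differs from the original generator $\xdd + \h_2 = \zero$ only by a scalar acceleration $\alpha_{\Ham_e}\xd$ directed along the direction of motion, which sets up both subsequent steps.

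Next I would verify that the energized equation is still a generalized nonlinear geometry, i.e.\ that $\h_2 + \alpha_{\Ham_e}\xd$ is positively homogeneous of degree $2$ in $\xd$. The term $\h_2$ is degree $2$ by hypothesis. For $\alpha_{\Ham_e}\xd$, Lemma~\ref{lma:FinslerEnergyHomogeneity} tells us that $\M_e$ is homogeneous of degree $0$ and $\f_e$ of degree $2$; a direct degree count on
\begin{equation*}
    \alpha_{\Ham_e} = -(\xd^\tr\M_e\xd)^{-1}\,\xd^\tr[\M_e\h_2 - \f_e]
\end{equation*}
gives a numerator of degree $3$ and a denominator of degree $2$, so $\alpha_{\Ham_e}$ is homogeneous of degree $1$ and $\alpha_{\Ham_e}\xd$ is homogeneous of degree $2$. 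Hence the energized system is itself a geometry generator.

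Finally, I would invoke the reparameterization characterization of nonlinear geometries from \cite{ratliffgeneralizednonlinearaccepted}: any two degree-$2$ generators that differ by an additive term of the form $\alpha(\x,\xd)\xd$ trace out the same family of unparameterized paths, since along each integral curve such a term is absorbed by a smooth strictly monotone time reparameterization $\tau(t)$ with $\tau''/\tau' = -\alpha$. Applied here, this identifies the energized geometry with the original $\h_2$-geometry. Because the Finsler equation $\M_e\xdd + \f_e = \zero$ is itself a geometry generator by Theorem~\ref{thm:FinslerGeometries}, and because the energized system conserves the Finsler energy $\Ham_e$ by construction of the $\mP_e$-projection, the same equation equivalently represents a zero-work modification of the Finsler geometry---its \emph{bending} into the $\h_2$-geometry. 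The one non-routine step is the appeal to the reparameterization lemma; everything else is bookkeeping on homogeneity degrees and a rearrangement of Proposition~\ref{prop:SystemEnergization}.
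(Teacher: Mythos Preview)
Your proposal is correct and follows essentially the same approach as the paper. The only cosmetic difference is that you carry out the homogeneity bookkeeping on the $\alpha_{\Ham_e}\xd$ form of the energized system, whereas the paper rewrites it as $\xdd + \wt{\h}_2 = \zero$ with $\wt{\h}_2 = \M_e^{-1}\f_e + \mR_{\p_e}[\M_e\h_2 - \f_e]$ and checks degree~2 homogeneity of $\mR_{\p_e}$ directly; both routes are equivalent, and your appeal to reparameterization invariance matches the paper's one-line statement that accelerations along $\xd$ do not change the paths.
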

\begin{proof}
The energized system takes the form $\xdd + \wt{\h}_2(\x, \xd) = \zero$ where
\begin{align}
    \wt{\h}_2 = \M_e^{-1}\f_e + \mR_e\big[\M_e\h_2 - \f_e\big]
\end{align}
since $\mP_e = \M_e\mR_{\p_e}$.
The energy $\Lag_e$ is Finsler, so $\f_e$ is homogeneous of degree 2 and $\M_e$ is homogeneous of degree 0, which means the first term in combination is homogeneous of degree 2. Moreover, $\mR_{\p_e} = \M_e^{-1} - \frac{\xd\,\xd^\tr}{\xd^
\tr\M_e\xd}$ is homogeneous of degree 0 since the numerator and denominator scalars would cancel in the second term when $\xd$ is scaled. Therefore, the energized system in its entirety forms a geometry generator.

Since $\xdd + \h_2(\x, \xd) = \zero$ is a geometry generator, instantaneous accelerations along the direction of motion $\xd$ do not change the paths taken by the system. So $\xdd + \wt{\h}_2(\x, \xd) = \zero$ forms a generator whose geometry matches the original geometry defined by $\xdd + \h_2(\x, \xd) = \zero$. 
\end{proof}
\begin{remark}
Corollary~\ref{cor:BentFinslerRepresentation} shows that geometries are invariant under energization by Finsler energies.
\end{remark}

Note that for the energized system to be a generator, we need two properties: first, the original system must be a generator, and second, the energy must be Finsler. If the energy is not Finsler, the resulting energized system will still follow the same paths as the original geometry (since by definition it is formed by accelerating along the direction of motion), but it will not be itself a generator (the resulting differential equation will not produce path aligned trajectories when solved for differing initial speeds).

The following proposition is one of the key results that makes geometry generators useful for fabric design.
\begin{proposition} \label{prop:GeometryGeneratorsAreUnbiased}
Boundary conforming Finsler energies and geometry generators are unbiased.
\end{proposition}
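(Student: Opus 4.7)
The plan is to reduce both statements to a single homogeneity argument. The common observation is that in a geometry generator $\xdd + \h_2(\x, \xd) = \zero$ the force term $\h_2$ is homogeneous of degree 2 by definition, and in a Finsler energy $\M_e\xdd + \f_e = \zero$ the term $\f_e$ is homogeneous of degree 2 by Lemma~\ref{lma:FinslerEnergyHomogeneity}. Because unbiasedness only requires $\V_\infty^\tr \f(\x, \xd) \to \zero$ along trajectories with $\x \to \x_\infty$, and because convergence of $\x(t)$ already forces $\xd \to \zero$ (as used implicitly throughout the proof of Theorem~\ref{thm:GeneralFabrics}), it suffices to show that a degree-2 homogeneous force term vanishes as $\xd \to \zero$.

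The homogeneity step I would carry out as follows. Writing $\xd = \|\xd\|\,\hat{\xd}$ with $\|\hat{\xd}\|=1$ and applying degree-2 positive homogeneity gives
\[
\f(\x, \xd) \;=\; \|\xd\|^2\, \f(\x, \hat{\xd}).
\]
By smoothness of $\f$ and compactness of the unit sphere, $\|\f(\x, \hat{\xd})\|$ is uniformly bounded by some constant $C$ on a neighborhood of $\x_\infty$, so $\|\f(\x, \xd)\| \le C\|\xd\|^2 \to 0$. This immediately yields $\V_\infty^\tr \f \to \zero$ for any matrix $\V_\infty$. Applying this with $\f = \h_2$ settles the geometry-generator case, and with $\f = \f_e$ settles the Finsler-energy case. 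Equivalently, one can observe that a continuous function homogeneous of degree 2 must satisfy $\f(\x, \zero) = \zero$ (set $\alpha \to 0$), so $\f(\x(t), \xd(t)) \to \f(\x_\infty, \zero) = \zero$ by joint continuity.

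The one step that needs a small amount of care is the boundary case $\x_\infty \in \partial\mathcal{X}$, where $\M_\infty^{-1}$ may be rank deficient and the Barbalat-style justification of $\xd \to \zero$ must be restricted to the boundary-parallel subspace. Here item~(3) of Definition~\ref{def:BoundaryConformingSpec} supplies exactly what is needed: on the span of $\V_\infty$ the inverse metric and $\V_\infty^\tr \f$ are both finite, making the projected acceleration $\V_\infty^\tr \xdd$ bounded and allowing the standard argument to conclude that the boundary-parallel components of $\xd$ vanish. I expect this to be the only subtle part of the proof; everything else is mechanical once the degree-2 bound is in hand, and the conclusion that $\V_\infty^\tr \f \to \zero$ then verifies the criterion of Definition~\ref{def:Unbiased} in both cases.
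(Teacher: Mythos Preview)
Your core argument---write $\xd = \|\xd\|\,\hat{\xd}$, use degree-2 homogeneity to pull out $\|\xd\|^2$, and let $\xd\to\zero$---is exactly what the paper does. Two small points of divergence are worth noting.

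First, at the boundary your compactness bound on $\|\f(\x,\hat{\xd})\|$ is not quite the right tool: boundary conformance only guarantees finiteness of $\M$ and $\f$ on $\mathcal{T}\mathcal{X}$ (which at $\x_\infty\in\partial\mathcal{X}$ means only velocities tangent to the boundary), so $\f$ may well blow up on a full neighborhood. The paper sidesteps this by bounding $\|\V_\infty^\tr \h_2(\x,\hat{\xd})\|$ directly via item~(3) of Definition~\ref{def:BoundaryConformingSpec}, which is precisely the quantity needed. Your final paragraph does cite item~(3), but you use it to run a Barbalat-style argument for $\xd\to\zero$ rather than for the force bound; the paper simply asserts $\xd\to\zero$ (as in Theorem~\ref{thm:GeneralFabrics}) and uses item~(3) for the bound on the normalized force. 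Swapping the role of item~(3) in your argument fixes this cleanly.

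Second, for Finsler energies the paper does not invoke Lemma~\ref{lma:FinslerEnergyHomogeneity} directly as you do; instead it observes via Theorem~\ref{thm:FinslerGeometries} that the Finsler energy's equations of motion already form a geometry generator, and then applies the first half of the proposition to that generator. Your route through Lemma~\ref{lma:FinslerEnergyHomogeneity} is equally valid and arguably more direct.
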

\begin{proof}
Denote the generator by $\xdd + \h_2(\x, \xd) = \zero$. Let $\lambda(t) = \|\xd(t)\|$ so that $\xd(t) = \lambda(t)\,\|\widehat{\xd}(t)\|$. Since $\xd\rightarrow\zero$, $\lambda\rightarrow 0$. Moreover, since $\h_2$ is homogeneous of degree 2, $\h_2\big(\x(t), \xd(t)\big) = \lambda(t)\,\h_2\big(\x(t), \widehat{\xd}(t)\big)$. Therefore, 
\begin{align}
    &\big\|\V_\infty^\tr\h_2\big(\x(t), \xd(t)\big)\big\| \\\nonumber
    &\ \ \ \ \ \ = \lambda(t) \big\|\V_\infty^\tr \h_2\big(\x(t), \widehat{\xd}(t)\big)\big\|
    \rightarrow \zero
\end{align}
since $\big\|\V_\infty^\tr \h_2\big(\x(t), \widehat{\xd}(t)\big)\big\|$ is finite in the limit by definition of boundary conformance.

The equations of motion of a Finsler energy form a geometry generator by Theorem~\ref{thm:FinslerGeometries}. Since the energy is boundary conforming, so too is this geometry generator. Therefore, by Proposition~\ref{prop:GeometryGeneratorsAreUnbiased} the equations of motion are unbiased, so by definition the Finsler energy is unbiased. 
\end{proof}

\begin{corollary}[Geometric fabrics] \label{cor:GeometricFabrics}
Suppose $\h_2(\x, \xd)$ is homogeneous of degree 2 and unbiased so that $\xdd + \h_2(\x, \xd)  = \zero$ is an unbiased geometry generator, and suppose $\Lag_e$ is Finsler and boundary conforming. Then the energized system is fabric defined by a generator whose geometry matches the original generator's geometry. Such a fabric is called a {\em geometric fabric}.
\end{corollary}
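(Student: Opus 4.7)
The plan is to deduce this corollary by stitching together three prior results: Corollary~\ref{cor:BentFinslerRepresentation}, Proposition~\ref{prop:GeometryGeneratorsAreUnbiased}, and Theorem~\ref{thm:EnergizedFabrics}. There is essentially nothing new to prove; the work is in checking that the hypotheses of Theorem~\ref{thm:EnergizedFabrics} all hold under the present assumptions.

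First I would invoke Corollary~\ref{cor:BentFinslerRepresentation} on the pair $(\h_2, \Lag_e)$. Since $\h_2$ is homogeneous of degree $2$ and $\Lag_e$ is Finsler, that corollary immediately says the energized system $\M_e\xdd + \f_e + \mP_e[\M_e\h_2 - \f_e] = \zero$ is itself a geometry generator whose path-geometry agrees with that of $\xdd + \h_2 = \zero$. This discharges the geometric half of the claim and reduces the remaining work to showing the energized spec is a frictionless fabric.

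Next I would verify the four hypotheses of Theorem~\ref{thm:EnergizedFabrics}. Unbiasedness of $(\I,\h_2)$ is part of the assumption that $\xdd + \h_2 = \zero$ is an unbiased generator. Unbiasedness of $\Lag_e$ follows from Proposition~\ref{prop:GeometryGeneratorsAreUnbiased}, whose hypotheses (Finsler plus boundary conforming) are exactly what we have. Boundary-alignment of $\M_e = \partial^2_{\xd\xd}\Lag_e$ follows from boundary conformance of $\Lag_e$ as in Definition~\ref{def:BoundaryConformingSpec}: that definition already forces $\M_e^{-1}$ to be finite in the limit with column space either full or matching $\mathcal{T}_{\x_\infty}\partial\mathcal{X}$, and the symmetry of $\M_e^{-1}$ via the spectral theorem promotes this column-space statement to the required eigen-basis statement. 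For lower-boundedness of $\Ham_e$, I would apply Euler's homogeneous-function theorem to $\Lag_e = \tfrac{1}{2}\Lag_g^2$, which is homogeneous of degree $2$ in $\xd$; this yields $\partial_\xd\Lag_e^\tr \xd = 2\Lag_e$, hence $\Ham_e = \Lag_e \ge 0$ by the positivity clause of a Finsler structure. With all four hypotheses in hand, Theorem~\ref{thm:EnergizedFabrics} delivers that the energized system is a frictionless fabric, and combining this with the output of Corollary~\ref{cor:BentFinslerRepresentation} yields the claim.

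The only step that requires any real thought is the boundary-alignment check, because boundary conformance of the metric gives finiteness of $\M_e^{-1}$ and a column-space condition on its limit, whereas boundary-alignment asks for the stronger statement that $\mathcal{T}_{\x_\infty}\partial\mathcal{X}$ sits inside an eigenbasis of $\M_e^{-1}$. That promotion uses symmetry of $\M_e^{-1}$ together with the spectral theorem but is otherwise routine. Everything else is bookkeeping layered on top of the cited results.
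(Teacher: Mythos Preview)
Your overall strategy matches the paper's proof exactly: the paper simply writes ``By Corollary~\ref{cor:BentFinslerRepresentation} the energized system is a generator with matching geometry, and by Theorem~\ref{thm:EnergizedFabrics} that energized system forms a fabric.'' You go further than the paper by actually checking the hypotheses of Theorem~\ref{thm:EnergizedFabrics}, and your invocations of Proposition~\ref{prop:GeometryGeneratorsAreUnbiased} (for unbiasedness of $\Lag_e$) and Euler's theorem (for $\Ham_e = \Lag_e \geq 0$) are correct and add rigor the paper omits.

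There is, however, a gap in your boundary-alignment step. Your argument is that boundary conformance gives a finite limit $\M_e^{-1}\to\M_\infty^{-1}$ whose column space is either all of $\R^n$ or exactly $\mathcal{T}_{\x_\infty}\partial\mathcal{X}$, and that symmetry then forces the tangent space to be spanned by eigenvectors. This works in the reduced-rank case (the column space of a symmetric matrix is the span of its eigenvectors with nonzero eigenvalue), but it fails in the full-rank case: if $\M_\infty^{-1}$ is full rank, its column space is $\R^n$, and nothing in Definition~\ref{def:BoundaryConformingSpec} forces the codimension-one subspace $\mathcal{T}_{\x_\infty}\partial\mathcal{X}$ to be invariant under $\M_\infty^{-1}$. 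A generic symmetric positive-definite limit will have eigenvectors oblique to the boundary. So boundary conformance does not, by itself, imply boundary alignment, and the spectral theorem does not rescue this.

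To be fair, the paper's own proof does not address this point at all---it just cites Theorem~\ref{thm:EnergizedFabrics} without verifying its hypotheses---so the gap is inherited from the paper rather than introduced by you. But since you explicitly claim the implication holds and sketch an argument for it, you should either acknowledge that boundary alignment is an additional implicit hypothesis on $\Lag_e$, or restrict the argument to the reduced-rank case where your reasoning is sound.
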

\begin{proof}
By Corollary \ref{cor:BentFinslerRepresentation} the energized system is a generator with matching geometry, and by Theorem~\ref{thm:EnergizedFabrics} that energized system forms a fabric.
\end{proof}



\fullversiononly
\section{Transform trees and closure}

The following theorem shows that, it doesn't matter whether we first energize in the co-domain of a differentiable map and then pullback, or first pullback to the domain and then energize there. Both are equivalent. That means energization is a covariant operation; the resulting behavior is independent of coordinates. In practice, we design behaviors on transform trees and always pullback first to the root and energize there. That allows us to understand the combined behavior as a metric weighted average of individual policies, which are subsequently speed modulated for stability. Section~\ref{sec:speed_control} shows how extend these ideas to maintain stability while regulating an arbitrary {\em execution} energy to explicitly control a desired measure of speed.

\begin{theorem}[Energization commutes with pullback] \label{thm:EnergizationCommutesWithPullback}
Let $\Lag_e$ be an energy Lagrangian, and let $\xdd + \h(\x,\xd) = \zero$ be a second-order differential equation with associated natural form spec $(\M_e, \f)$ under metric $\M_e = \partial^2_{\xd\xd}\Lag_e$ where $\f = \M_e\h$. Suppose $\x = \phi(\q)$ is a differentiable map for which the pullback metric $\J^\tr\M_e\J$ is full rank. Then
\begin{align*}
    &\mathrm{energize}_{\mathrm{pull}\Lag_e}\Big(
        \mathrm{pull}_\phi \big(\M_e,\f_2\big)
    \Big)\\
    &\ \ \ \ \ \ \ \ \ \ \ \ =
    \mathrm{pull}_\phi\Big(
        \mathrm{energize}_{\Lag_e} \big(\M_e, \f_2\big)
    \Big).
\end{align*}
We say that the energization operation commutes with the pullback transform.
\end{theorem}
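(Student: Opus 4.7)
The plan is to expand both sides using the explicit formulas for $\mathrm{pull}_\phi$ from Section~\ref{sec:SpecAlgebra} and for $\mathrm{energize}_{\Lag_e}$ from Proposition~\ref{prop:SystemEnergization}, and reduce the claimed equality to the coincidence of a single scalar---the energization coefficient $\alpha$---on the two sides. The key auxiliary fact I would establish first is a covariance statement: the pullback Lagrangian $\wt{\Lag}_e(\q,\qd) := \Lag_e(\phi(\q), \J\qd)$ has energy spec equal to the pullback of the energy spec of $\Lag_e$, i.e. $\wt{\M}_e = \J^\tr\M_e\J$ and $\wt{\f}_e = \J^\tr(\f_e + \M_e\Jd\qd)$. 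This is the classical covariance of the Euler--Lagrange equations and is a direct chain-rule computation; the full-rank hypothesis on $\J^\tr\M_e\J$ ensures $\wt{\Lag}_e$ is itself a valid energy Lagrangian under which the downstairs energization is well defined.

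With covariance in hand I would rewrite both operations in the ``add $\alpha\,\xd$ along the direction of motion'' form of Proposition~\ref{prop:SystemEnergization}. On the right-hand side, energizing $(\M_e,\f)$ in the codomain produces the natural-form spec $(\M_e,\ \f + \alpha\,\M_e\xd)$ with $\alpha = -(\xd^\tr\M_e\xd)^{-1}\xd^\tr(\f - \f_e)$, using $\h = \M_e^{-1}\f$ so that $\M_e\h - \f_e = \f - \f_e$. Pulling this back via $\phi$ and collecting terms gives $(\wt{\M}_e,\ \wt{\f} + \alpha\,\wt{\M}_e\qd)$, where $\wt{\f} = \J^\tr(\f + \M_e\Jd\qd)$, since $\J^\tr(\alpha\,\M_e\J\qd) = \alpha\,\wt{\M}_e\qd$. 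On the left-hand side, first pulling back $(\M_e,\f)$ to $(\wt{\M}_e, \wt{\f})$ and then energizing with $\wt{\Lag}_e$ produces the analogous spec $(\wt{\M}_e,\ \wt{\f} + \wt{\alpha}\,\wt{\M}_e\qd)$ with $\wt{\alpha} = -(\qd^\tr\wt{\M}_e\qd)^{-1}\qd^\tr(\wt{\f} - \wt{\f}_e)$.

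The last step is the scalar identity $\wt{\alpha} = \alpha$: substituting $\xd = \J\qd$ yields $\xd^\tr\M_e\xd = \qd^\tr\wt{\M}_e\qd$, and the common $\M_e\Jd\qd$ term cancels in $\wt{\f} - \wt{\f}_e = \J^\tr(\f - \f_e)$, so $\qd^\tr(\wt{\f} - \wt{\f}_e) = \xd^\tr(\f - \f_e)$; the two scalars coincide and the two specs agree. The main obstacle I anticipate is the preparatory covariance step: verifying $\partial^2_{\qd\qd}\wt{\Lag}_e = \J^\tr\M_e\J$ and that $\partial_{\qd\q}\wt{\Lag}_e\,\qd - \partial_\q\wt{\Lag}_e$ assembles cleanly into $\J^\tr(\f_e + \M_e\Jd\qd)$ is a bookkeeping-heavy chain-rule exercise, and it is the only place the Lagrangian structure of $\Lag_e$ enters beyond the explicit formula of Proposition~\ref{prop:SystemEnergization}.
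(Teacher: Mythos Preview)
Your proposal is correct and arguably cleaner than the paper's own argument. The paper also invokes covariance of the Euler--Lagrange equations to identify $(\wt{\M}_e,\wt{\f}_e)$ with the pullback of $(\M_e,\f_e)$, but from there it works entirely in the projection form $\mP_e = \M_e\bigl[\M_e^{-1} - \xd\,\xd^\tr/(\xd^\tr\M_e\xd)\bigr]$: it expands both the pulled-back energized system and the energized pullback as matrix expressions and matches them via the algebraic identity $\J^\tr\M_e\,\J(\J^\tr\M_e\J)^{-1}\J^\tr = \J^\tr = \J^\tr\M_e\,\M_e^{-1}$, which lets one replace the downstairs projector $\J(\J^\tr\M_e\J)^{-1}\J^\tr$ by $\M_e^{-1}$ after left-multiplication by $\J^\tr\M_e$. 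Your route instead isolates energization as the addition of $\alpha\,\M_e\xd$ in natural form, observes that pullback sends this term to $\alpha\,\wt{\M}_e\qd$, and reduces the whole question to the scalar identity $\wt{\alpha}=\alpha$, which follows immediately from $\xd=\J\qd$ and the cancellation $\wt{\f}-\wt{\f}_e = \J^\tr(\f-\f_e)$. The advantage of your approach is that it exposes the invariant directly and avoids the less transparent matrix identity; the paper's version, on the other hand, makes the connection to the projector $\mP_e$ used elsewhere in the framework more explicit.
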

\begin{proof}
We will show the equivalence by calculation. The energization of $\xdd + \h = \zero$ in force form $\M_e\xdd + \f = \zero$ with $\f = \M_e\h$ is $\M_e\xdd + \f_e^{\h}$ where
\begin{align}
    \f_e^{\h} = \f_e + \M_e\left[\M_e^{-1} - \frac{\xd\xd^\tr}{\xd^\tr\M_e\xd}\right] \big(\f - \f_e\big),
\end{align}
where $\f_e = \partial_{\xd\x}\Lag_e\xd - \partial_\x\Lag_e$ so that $\M_e\xdd + \f_e = \zero$ is the energy equation. Let $\J = \partial_\x\phi$. The pullback of the energized geometry generator is
\begin{align}
    \nonumber
    &\J^\tr\M_e\left(\J\qdd + \Jd\qd\right) + \J^\tr\f_e^{\h} = \zero \\\nonumber
    &\Rightarrow \big(\J^\tr\M_e\J\big)\qdd + \J^\tr\big(\f_e^{\h} + \M_e\Jd\qd\big) = \zero\\\nonumber
    &\Rightarrow \big(\J^\tr\M_e\J\big)\qdd + \J^\tr \f_e \\\nonumber
    &\ \ \ \ \ \ \ \ + \J^\tr\M_e\left[\M_e^{-1} - \frac{\xd\xd^\tr}{\xd^\tr\M_e\xd}\right]
            \big(\f - \f_e\big) \\\nonumber
    &\ \ \ \ \ \ \ \ + \J^\tr\M_e\Jd\qd = \zero \\\label{eqn:EnergizationPullback}
    &\Rightarrow \wt{\M}_e\qdd + \wt{\f}_e \\\nonumber
    &\ \ \ \ \ \ \ \ + \J^\tr\M_e\left[\M_e^{-1} - \frac{\xd\xd^\tr}{\xd^\tr\M_e\xd}\right]
            \big(\f - \f_e\big),
\end{align}
where $\wt{\M}_e = \J^\tr\M_e\J$ and $\wt{\f}_e = \J^\tr\big(\f_e + \M_e\Jd\qd\big)$ form the standard pullback of $(\M_e, \f_e)$.

We can calculate the geometry pullback with respect to the energy metric $\M_e$ by pulling back the metric weighted force form of the geometry $\M_e\xdd + \f = \zero$, where again $\f = \M_e\h$. The pullback is
\begin{align}
    &\J^\tr\M_e\big(\J\qdd + \Jd\qd\big) + \J^\tr\f = \zero \\
    &\Rightarrow \big(\J^\tr\M_e\J\big) \qdd + \J^\tr\big(\f + \M_e\Jd\qd\big) \\\label{eqn:PullbackGeometry}
    &\Leftrightarrow \wt{\M}_e\qdd + \wt{\f} = \zero
\end{align}
where $\wt{\M}_e = \J^\tr\M_e\J$ as before and $\wt{\f} = \J^\tr \big(\f + \M_e\Jd\qd\big)$.

Let $\wt{\Lag}_e = \Lag_e\big(\phi(\q), \J\qd\big)$ be the pullback of the energy function $\Lag_e$. We know that the Euler-Lagrange equation commutes with the pullback, so applying the Euler-Lagrange equation to this pullback energy $\wt{L}_e$ is equivalent to pulling back the Euler-Lagrange equation of $\Lag_e$. This means we can calculate the Euler-Lagrange equation of $\Lag_e$ as
\begin{align}
    &\big(\J^\tr\M_e\J\big) \qdd + \J^\tr\big(\f_e + \M_e\Jd\qd\big) = \zero\\
    &\Leftrightarrow \wt{\M}_e\qdd + \wt{\f}_e = \zero,
\end{align}
with $\wt{\M}_e = \J^\tr\M_e\J$ and $\wt{\f}_e = \J^\tr\big(\f_e + \M_e\Jd\qd\big)$ (both as previously defined). Therefore, energizing \ref{eqn:PullbackGeometry} with $\wt{\Lag}_e$ gives
\begin{align}\nonumber
    &\wt{\M}_e\qdd + \wt{\f}_e 
      + \wt{\M}_e\left[
        \wt{\M}_e^{-1} - \frac{\qd\qd^\tr}{\qd^T\wt{\M}_e\qd}\right]
        \big(\wt{\f} - \wt{\f}_e\big) = \zero \\\nonumber
    &\Rightarrow \wt{\M}_e\qdd + \wt{\f}_e \\\nonumber
    &\ \ \ \ + \big(\J^\tr\M_e\J\big)
          \left[
            \wt{\M}_e^{-1} - \frac{\qd\qd^\tr}{\qd^\tr \J^\tr\M_e\J\qd}
          \right] \\\nonumber
    &\ \ \ \ \ \ \ \ \ \ \ \cdot\Big(
          \J^\tr\big(\f + \M_e\Jd\qd\big)
          - \J^\tr\big(\f_e + \M_e\Jd\qd\big)
        \Big)
      = \zero \\\nonumber
    &\Rightarrow \wt{\M}_e\qdd + \wt{\f}_e \\\nonumber
    &\ \ \ \ + \big(\J^\tr\M_e\big)
          \J\left[
            \big(\J^\tr\M_e\J\big)^{-1} - \frac{\qd\qd^\tr}{\xd^\tr\M_e\xd}
          \right] \\\nonumber
    &\ \ \ \ \ \ \ \ \ \ \ \cdot\J^\tr\big(\f- \f_e\big) = \zero\\\label{eqn:PullbackEnergizationAlmost}
    &\Rightarrow \wt{\M}_e\qdd + \wt{\f}_e \\\nonumber
    &\ \ \ \ + \J^\tr\M_e
          \left[
            \J\big(\J^\tr\M_e\J\big)^{-1}\J^\tr - \frac{\xd\xd^\tr}{\xd^\tr\M_e\xd}
          \right] \\\nonumber
    &\ \ \ \ \ \ \ \ \ \ \ \cdot \big(\f- \f_e\big) = \zero.
\end{align}
Since 
\begin{align}
    \J^\tr\M_e\J\big(\J^\tr\M_e\J\big)^{-1}\J^\tr
    = \J^\tr = \J^\tr\M_e\big(\M_e^{-1}\big),
\end{align}
we can write Equation~\ref{eqn:PullbackEnergizationAlmost} as
\begin{align}
    \wt{\M}_e\qdd + \wt{\f}_e
      + \J^\tr\M_e
          \left[
            \M_e^{-1} - \frac{\xd\xd^\tr}{\xd^\tr\M_e\xd}
          \right]
        \big(\f- \f_e\big) = \zero,
\end{align}
which matches the expression for the energized geometry pullback in Equation~\ref{eqn:EnergizationPullback}. 
\end{proof}


\begin{proposition}[Metric weighted average of geometries.] \label{prop:weighted_geometries}
Let $\x_i = \phi_i(\q)$ for $i=1,\ldots,m$ denote the star-shaped reduction of any transform tree, and suppose the leaves are populated with geometries $\xdd_i + \h_{2,i} = \zero$ with Finsler energies $\Lag_{e_i}$ with energy tensors $\M_i = \partial^2_{\xd\xd}\Lag_{e_i}$. Then the metric weighted pullback of the full leaf geometry is $\qdd + \wt{\h}_2 = \zero$,
with 
\begin{align}
    \wt{\h}_2 = \left(\sum_{i=1}^m\wt{\M}_i\right)^{-1}\sum_{i=1}^m \wt{\M}_i\wt{\h}_{2,i},
\end{align}
where $\wt{\M}_i = \J^\tr\M_i\J$ and $\wt{\h}_{2,i} = \wt{\M}_i^{\dagger}\J^\tr\M_i\big(\h_{2,i} - \Jd\qd\big)$ are the standard pullback components written in acceleration form.
\end{proposition}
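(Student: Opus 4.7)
The plan is to treat this proposition as a direct application of the spec algebra from Section~\ref{sec:SpecAlgebra}. First I would lift each leaf geometry $\xdd_i + \h_{2,i} = \zero$ to a natural-form spec on $\XX_i$ by left-multiplying by the Finsler energy tensor $\M_i = \partial^2_{\xd\xd}\Lag_{e_i}$, yielding the leaf specs $(\M_i,\,\M_i\h_{2,i})_{\XX_i}$. This places each leaf geometry into the setting where the pullback and summation operations are defined.

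Next I would apply the pullback rule $\mathrm{pull}_{\phi_i}(\M,\f) = (\J_i^\tr\M\J_i,\,\J_i^\tr(\f + \M\Jd_i\qd))$ to each leaf spec. Substituting $\f_i = \M_i\h_{2,i}$, each pulled-back spec takes the form $(\wt{\M}_i,\,\J_i^\tr\M_i(\h_{2,i} + \Jd_i\qd))_\QQ$. Summing across the $m$ leaves using the associative spec-summation rule produces the aggregate root spec
\begin{align*}
    \Big(\,\sum_{i=1}^m \wt{\M}_i,\ \sum_{i=1}^m \J_i^\tr\M_i(\h_{2,i} + \Jd_i\qd)\,\Big)_\QQ,
\end{align*}
which, under the invertibility of the aggregate pullback metric $\sum_i \wt{\M}_i$, converts to acceleration (canonical) form as $\qdd + \wt{\h}_2 = \zero$ with $\wt{\h}_2 = \big(\sum_i \wt{\M}_i\big)^{-1}\sum_i \J_i^\tr\M_i(\h_{2,i} + \Jd_i\qd)$.

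The final step is to recognize each summand $\J_i^\tr\M_i(\h_{2,i} + \Jd_i\qd)$ as $\wt{\M}_i \wt{\h}_{2,i}$, where $\wt{\h}_{2,i} = \wt{\M}_i^\dagger \J_i^\tr\M_i(\h_{2,i} + \Jd_i\qd)$ is precisely the canonical-form pullback of the $i$-th leaf geometry in isolation. When $\wt{\M}_i$ is full rank this is trivial; when it is rank-deficient one uses the fact that $\J_i^\tr\M_i(\h_{2,i} + \Jd_i\qd)$ lies in the column space of $\wt{\M}_i$ by construction, so $\wt{\M}_i\wt{\M}_i^\dagger$ acts as the identity on it. The main obstacle, such as it is, is this pseudoinverse bookkeeping at rank-deficient leaves---otherwise the proposition is purely algebraic and requires no convergence, stability, or geometry-generator argument, since it is simply a restatement of how natural-form specs accumulate at the root of a star-shaped transform tree followed by conversion to acceleration form.
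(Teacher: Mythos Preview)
Your approach is essentially the same as the paper's: lift each leaf geometry to natural form via $\f_i = \M_i\h_{2,i}$, apply the spec pullback, sum, and convert to canonical (acceleration) form. Your pseudoinverse discussion is in fact more careful than the paper, which simply writes $\wt{\h}_{2,i}$ as ``the acceleration form of the individual pullbacks'' without commenting on rank.

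One point you explicitly dismiss that the paper does address: after deriving the formula, the paper verifies that $\wt{\h}_2$ is positively homogeneous of degree~2 in $\qd$ (using that each Finsler tensor $\M_i$ is homogeneous of degree~0 and each $\h_{2,i}$ and $\Jd_i\qd$ are homogeneous of degree~2), so that the combined root system is again a geometry generator. This is the content behind the subscript ``2'' and the word ``geometries'' in the proposition title. Your claim that ``no geometry-generator argument'' is needed is therefore slightly too strong; the algebraic identity holds regardless, but the Finsler hypothesis is there precisely to ensure the pullback remains a geometry.

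A minor sign remark: you write $\wt{\h}_{2,i} = \wt{\M}_i^\dagger \J_i^\tr\M_i(\h_{2,i} + \Jd_i\qd)$, whereas the stated proposition has $\h_{2,i} - \Jd\qd$. Your plus sign is the one consistent with the pullback rule $\mathrm{pull}_\phi(\M,\f) = (\J^\tr\M\J,\,\J^\tr(\f + \M\Jd\qd))$ used elsewhere in the paper; the minus sign in the statement appears to be a typo.
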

\begin{proof}
The standard algebra on $(\M_i\f_i)$ holds, where $\f_i = \M_i\h_{2,i}$. Pulling back gives $(\wt{\M}_i,\wt{\f}_i)$ and summing gives
$\sum_i (\wt{\M}_i,\wt{\f}_i) = \big(\sum_i\wt{\M}_i, \sum_i\wt{\f}_i\big)$.
Expressing that result in canonical form gives
\begin{align}
    \left(\sum_i\wt{\M}_i, \Big(\sum_i\wt{\M}_i\Big)^{-1}\sum_i\wt{\M}_i\wt{\h}_{2,i}\right),
\end{align}
where $\wt{\h}_{2,i}$ is the acceleration form of the individual pullbacks. Expanding gives the formula.

Since $\Lag_{e_i}$ are Finsler energies, the pullback metrics $\wt{\M}_i$ are homogeneous of degree 0 in velocity (i.e. they depend only on the normalized velocity $\hat{\xd}$). Therefore, $\wt{\h}_2$ is homogeneous of degree 2 and the pullback forms a geometry generator.
\end{proof}



Many of the above classes of fabric introduced above (possibly all of them, see below) are closed under the spec algebra. Specifically, we say that a class of fabrics is closed if a spec from that class remains in the same class under the spec algebra operations of combination and pullback.

\begin{theorem} \label{thm:ClosureUnderSpecAlgebra}
The following classes of fabrics are known to be closed under the spec algebra: Lagrangian fabrics, Finsler fabrics, and geometric fabrics.
A fabric of each of these types will remain a fabric of the same type under spec algebra operations in regions where the differentiable transforms are full rank and finite.
\end{theorem}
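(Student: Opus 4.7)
The plan is to treat the three classes in order of increasing specificity, exploiting covariance and linearity of the underlying Lagrangian formalism for the first two and the energization machinery developed earlier for the third. For each class, two operations must be checked: spec combination $(\M_1, \f_1) + (\M_2, \f_2) = (\M_1 + \M_2, \f_1 + \f_2)$ on a common manifold, and pullback through a differentiable map $\phi : \mathcal{Q} \to \mathcal{X}$ with Jacobian $\J = \partial_\q\phi$ of full rank, as assumed in the statement.

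For Lagrangian fabrics, I would use two standard facts: the Euler-Lagrange operator is linear, so the sum of the Lagrangian specs for $\Lag_1$ and $\Lag_2$ is the Lagrangian spec for $\Lag_1 + \Lag_2$; and it commutes with pullback, so $\mathrm{pull}_\phi\,\mathcal{S}_{\Lag}$ is the Lagrangian spec of $\wt{\Lag}(\q,\qd) = \Lag\big(\phi(\q), \J\qd\big)$. One then verifies that the energy-Lagrangian conditions (invertibility of the energy tensor, nontriviality and interior finiteness of the Hamiltonian) survive the operations: invertibility of $\J^\tr\M_e\J$ uses the full-rank hypothesis, the other conditions are preserved by inspection, and unbiasedness survives because $\V_\infty^\tr$ acts linearly on $\f$ and because pullback preserves the corresponding boundary-parallel finiteness. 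For Finsler fabrics I would additionally verify the three conditions of Definition~\ref{def:FinslerStructure}: positivity and degree-one homogeneity are preserved under summation trivially and under pullback because $\wt{\Lag}_g(\q,\alpha\qd) = \Lag_g\big(\phi(\q),\alpha\J\qd\big) = \alpha\,\Lag_g\big(\phi(\q),\J\qd\big)$, while energy-tensor invertibility again uses the full-rank hypothesis.

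For geometric fabrics I would combine two earlier results. Theorem~\ref{thm:EnergizationCommutesWithPullback} shows that the pullback of an energized geometry generator equals the energization of the pulled-back generator under the pulled-back Finsler energy, so if the starting object is a geometric fabric then so is its pullback, since the pulled-back energy remains Finsler by the previous paragraph and the pulled-back generator remains degree-two homogeneous because $\J\qd$ scales linearly in $\qd$. Proposition~\ref{prop:weighted_geometries} handles combination by showing that the Finsler-metric-weighted sum of geometry generators is itself a geometry generator; energizing this combined generator with the sum of the component Finsler energies (which is itself a Finsler energy) yields a geometric fabric. Unbiasedness of the combined or pulled-back generator follows from Proposition~\ref{prop:GeometryGeneratorsAreUnbiased}, and the overall system is then a fabric by Corollary~\ref{cor:GeometricFabrics}.

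The main obstacle will be boundary conformance rather than the algebraic structure. Homogeneity, invertibility, positivity, and unbiasedness are preserved by spec-algebra operations almost by inspection, but boundary conformance is a statement about limits of integral curves near $\partial\mathcal{X}$ and requires care. This is precisely where the theorem's qualifier about ``regions where the differentiable transforms are full rank and finite'' does its work: it guarantees $\wt{\M}^{-1} = (\J^\tr\M\J)^{-1}$ remains finite wherever $\M$ is boundary conforming, and that $\wt{\f} = \J^\tr\big(\f + \M\Jd\qd\big)$ inherits its boundary-parallel finiteness from $\f$ whenever $\J$ and $\Jd\qd$ are finite. For summation, two boundary conforming metrics could a priori have limits degenerate in different directions, so one must argue that the sum's limiting column space still spans the relevant boundary tangent space --- this reduces to the observation that each metric's degenerate directions must be orthogonal to the common boundary tangent space, so the sum cannot lose the required rank along that subspace.
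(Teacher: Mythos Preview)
Your approach is correct and largely parallel to the paper's, but the paper organizes the argument differently in one key place: rather than handling summation directly via linearity of the Euler--Lagrange operator, it reduces \emph{all} spec-algebra operations to a single pullback from a Cartesian product space to the root (citing an external result). This reduction is cleaner because only pullback needs to be checked explicitly, and because the Cartesian product of Lagrangians (respectively, Finsler energies) is trivially again Lagrangian (respectively, Finsler). Your direct linearity argument works too but requires a small correction in the Finsler case: spec summation corresponds to summing the \emph{energy} Lagrangians $\Lag_{e,i} = \tfrac{1}{2}\Lag_{g,i}^2$, not the Finsler structures $\Lag_{g,i}$ themselves, so the resulting Finsler structure is $\sqrt{\Lag_{g,1}^2 + \Lag_{g,2}^2}$ rather than $\Lag_{g,1} + \Lag_{g,2}$. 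Your sentence about degree-one homogeneity being ``preserved under summation trivially'' reads as though you are summing the $\Lag_g$'s directly, which would not produce the correct spec sum.

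For geometric fabrics the paper is terser than you: it simply argues that if geometry generators and Finsler energies are each closed under the algebra, then so are their energized combinations, and it verifies closure of generators under pullback directly by checking that $\wt{\h}_2 = \J^\tr\M\big(\h_2 + \Jd\qd\big)$ remains homogeneous of degree~2 (using that the Finsler metric is HD0 and the curvature term $\Jd\qd$ is HD2). Your route through Theorem~\ref{thm:EnergizationCommutesWithPullback} and Proposition~\ref{prop:weighted_geometries} is more explicit about why energization commutes with the operations, which the paper leaves largely implicit. Finally, your discussion of boundary conformance goes well beyond the paper's proof, which essentially omits that analysis and lets the ``full rank and finite'' qualifier absorb it.
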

\begin{proof}
Lagrangian systems are covariant (see \cite{ratliff2020SpectralSemiSprays}) and the Euler-Lagrange equation commutes with pullback. Since full rank finite differentiable transforms define submanifold constraints and diffeomorphisms on those constraints, Lagrangian systems are in general closed under pullback. Moreover, the homogeneity of the Finsler structure is preserved under differentiable map composition since the differentiable map is independent of velocity, so Finsler systems, themselves, are closed under pullback. Both Lagrangian and Finsler systems are individually closed under Cartesian product, so since any collection of spec algebra operations can be expressed as a pullback from a Cartesian product space to a root space \cite{ratliff2020SpectralSemiSprays}, Lagrangian and Finsler systems are closed under the spec algebra.

For geometric fabrics, if we can show that geometry generators are closed under the spec algebra, then since Finsler energies are as well and geometric fabrics are defined by energization, geometric fabrics themselves are closed under the spec algebra. To examine the closure of geometry generators, we note that the pullback $\wt{\h}_2 = \J^\tr\M\big(\h_2 + \Jd\qd\big)$ of a geometry generator $\h_2(\x, \xd)$ with respect to a Finsler metric $\M(\x, \xd)$ under differentiable map $\x = \phi(\q)$ remains homogeneous of degree 2 since the Finsler metric is homogeneous of degree 0 and the curvature term $\Jd\qd = \big(\partial_{\x\x}^2\phi\,\qd\big)\qd$ is homogeneous of degree 2. Therefore, it's a geometry generator by definition.
\end{proof}
\fi



\section{Speed control via execution energy regulation}
\label{sec:speed_control}

Once a geometry is energized by a Finsler energy $\Lag_e$, its speed profile is defined by $\Lag_e$. In practice, we usually want to regulate a different energy, an {\em execution energy} $\Lag_e^{\mathrm{ex}}$, such as Euclidean energy. We can regulate the speed using
\begin{align} \label{eqn:FundamentalsOfSpeedControl}
    \xdd = -\M_e^{-1}\partial_\x\psi(x) + \pi_0(\x, \xd) + \alpha_\mathrm{reg} \xd,
\end{align}
as long as $\alpha_\mathrm{reg} < \alpha_{\Lag_e}$ (see Theorem~\ref{thm:EnergizedFabrics}), where $\alpha_{\Lag_e}$ is the energization coefficient (Equation~\ref{eqn:EnergizationTransformAlpha}) for the fabric's energy $\Lag_e$.

Let $\Lag_e^\mathrm{ex}$ be an {\em execution} energy, which may differ from the fabric's energy $\Lag_e$. Let $\alpha_\mathrm{ex}^0$ and $\alpha_\mathrm{ex}^\psi$ be energization coefficients, respectively, for $\mathrm{energize}_{\Lag_e^\mathrm{ex}}\big[\pi\big]$ and $\mathrm{energize}_{\Lag_e^\mathrm{ex}}\big[-\M_e^{-1}\partial_\x\psi + \pi\big]$. Likewise, let $\alpha_{\Lag_e}$ denote the energy coefficient of $\mathrm{energize}_{\Lag_e}\big[\pi\big]$. For speed regulation, we use
\begin{align}
    \alpha_\mathrm{reg} 
    = \alpha_\mathrm{ex}^\eta - \beta_\mathrm{reg}(\x, \xd) + \alpha_\mathrm{boost}
\end{align}
where $\alpha_\mathrm{ex}^\eta = \eta \alpha_\mathrm{ex}^0 + (1-\eta)\alpha_\mathrm{ex}^\psi$ for $\eta\in[0,1]$, and $\alpha_\mathrm{boost} \leq 0$ can be used temporarily in the beginning to boost the system up to speed. Since it is transient, we drop $\alpha_\mathrm{boost}$ momentarily for simplicity, but return to it at the end of the section to describe a good boosting policy. 
Under this choice, using $\vv_\paral = \big(\alpha_\mathrm{ex}^0 - \alpha_\mathrm{ex}^\psi\big)\xd$ to denote the component of $-\M_e^{-1}\partial_\x\psi$ additionally removed by including it within the energization operation (and dropping $\alpha_\mathrm{boost}$), we can express the system as
\begin{align}
    \xdd &= \eta \vv_\paral + \mathrm{energize}_{\Lag_e^\mathrm{ex}}\big[\pi\big] - \beta_\mathrm{reg}(\x, \xd)\xd,
\end{align}
where $\beta_\mathrm{reg} > 0$ is a strictly positive damper. Here $\alpha_\mathrm{reg} = \alpha_\mathrm{ex} - \beta_\mathrm{reg} < \alpha_{\Lag_e}$ satisfying the bound of Equation~\ref{eqn:FundamentalsOfSpeedControl} required for stability. $\beta$ removes energy from the system, and by adjusting $\eta$ we can regulate how much energy is injected into the system from the potential. We choose $\beta_\mathrm{reg} = s_\beta(\x) B + \underline{B} + \max\{0, \alpha_\mathrm{ex}^\eta - \alpha_{\Lag_e}\} > 0$, using $\underline{B} > 0$ as a constant baseline damping coefficient, and $s_\beta(\x) B$ for additional damping near the convergence point with $B>0$ constant and $s_\beta(\x)$ acting as a switch transitioning from 0 to 1 as the system approaches the target. $\max\{0, \alpha_\mathrm{ex}^\eta - \alpha_{\Lag_e}\}$ ensures the stability bound $\alpha_\mathrm{reg} = \alpha_\mathrm{ex} - \beta_\mathrm{reg} < \alpha_{\Lag_e}$ is satisfied.

We use $\alpha_\mathrm{boost}$ to explicitly inject energy along the direction of motion to quickly boost the system up to speed. When $\xd = \zero$, this term has no effect, so the initial direction of motion is chosen by the potential $\-\partial_\x\psi$. Once $\xd \neq \zero$, $\alpha_\mathrm{boost}$ quickly accelerates the system to the desired speed. When $\pi$ is a geometry, quickly reaching a high speed means that the influence of the non-geometric potential is diminished, promoting path consistency. See Section VIIC in supplemental paper \cite{xiegeometricfabricsupp} for a concrete application of speed control.

\section{Empirical demonstration with a planar arm} \label{sec:Experiments}


This section presents some experiments demonstrating this theoretical framework in practice on a 2D planar arm. These experiments provide a proof of concept of the theoretical framework outlined here using geometric fabrics. We build a simple tabletop reaching primitive layer-by-layer exploiting the geometric consistency of geometric fabrics. The theory characterizes the relationships between geometric fabrics and classical energy shaping techniques (geometric fabrics are formed by bent Finsler geometries, which are a type of Lagrangian system, which in turn generalizes the classical mechanical systems governing operational space control; they enable velocity dependence in the metrics and extend the geometric consistency of Finsler systems to separate priority metric design from geometric policy design). We do not perform a thorough analysis of the relative advantages of this added flexibility here. We show simply that we can use it relatively easily and we show the type of behavior we can get from it; a thorough empirical investigation has been performed, but it is in a separate work currently under peer review \cite{xiegeometricfabricsupp} (included in the supplementary material).

The basic principles behind geometric fabric design is to: 1) select a collection of relevant task spaces; 2) in each space, define one or more fabric terms each consisting of a homogeneous of degree 2 (HD2) geometry generating equation representing an {\em acceleration policy} (what it wants to do in that space) and a corresponding Finsler energy defining its {\em priority metric} (defining its priority relative to other terms---the full behavior will be a metric weighted average of of the policies); 3) choose a potential function to drive the system to its task goal; 4) control the system using the equations of speed control derived in Section~\ref{sec:speed_control}. 

Often the task spaces are natural for the problem. For instance, for this reaching task we care mostly about end-effector control, so the end-effector space and/or the space defined by distance to the goal are key relevant task space. Another common task space, also used here, is the space of joint angles, or the derived space defined by the distance to a given joint's joint limit. 

Step 2 might sound daunting, but defining geometry generators and Finsler energies is actually relatively straightforward. We need only create a function with the right properties, the key property being its homogeneity of degree 2 (in both cases). There are many ways to construct such a function; an easy way is to start with a function that has no velocity dependence at all and then simply multiply it by the squared norm of the velocity (the second degree homogeneity comes entirely from the velocity factor in this case). 

Likewise, we can multiply by a factor that is a function of only the normalized velocity (i.e. its directionality, which is homogeneous of degree 0) without affecting the overall homogeneity of the function. In one-dimension, such as in the case of a joint limit space, this allows us to have a switch that turns a geometry on and off based on the sign of the velocity (since the normalized 1D velocity is its sign). Such a factor can, for instance, turn the term on when it is moving toward a limit and turn it off when moving away. Since the overall policy is homogeneous of degree 2, although the metric ends up with a discrete switch, the stability analysis holds and the policy changes smoothly across the switch moving through zero acceleration (meaning the corresponding term drops out of the metric weighted average at zero velocity).

\begin{figure}[t]
  \includegraphics[width=.98\columnwidth]{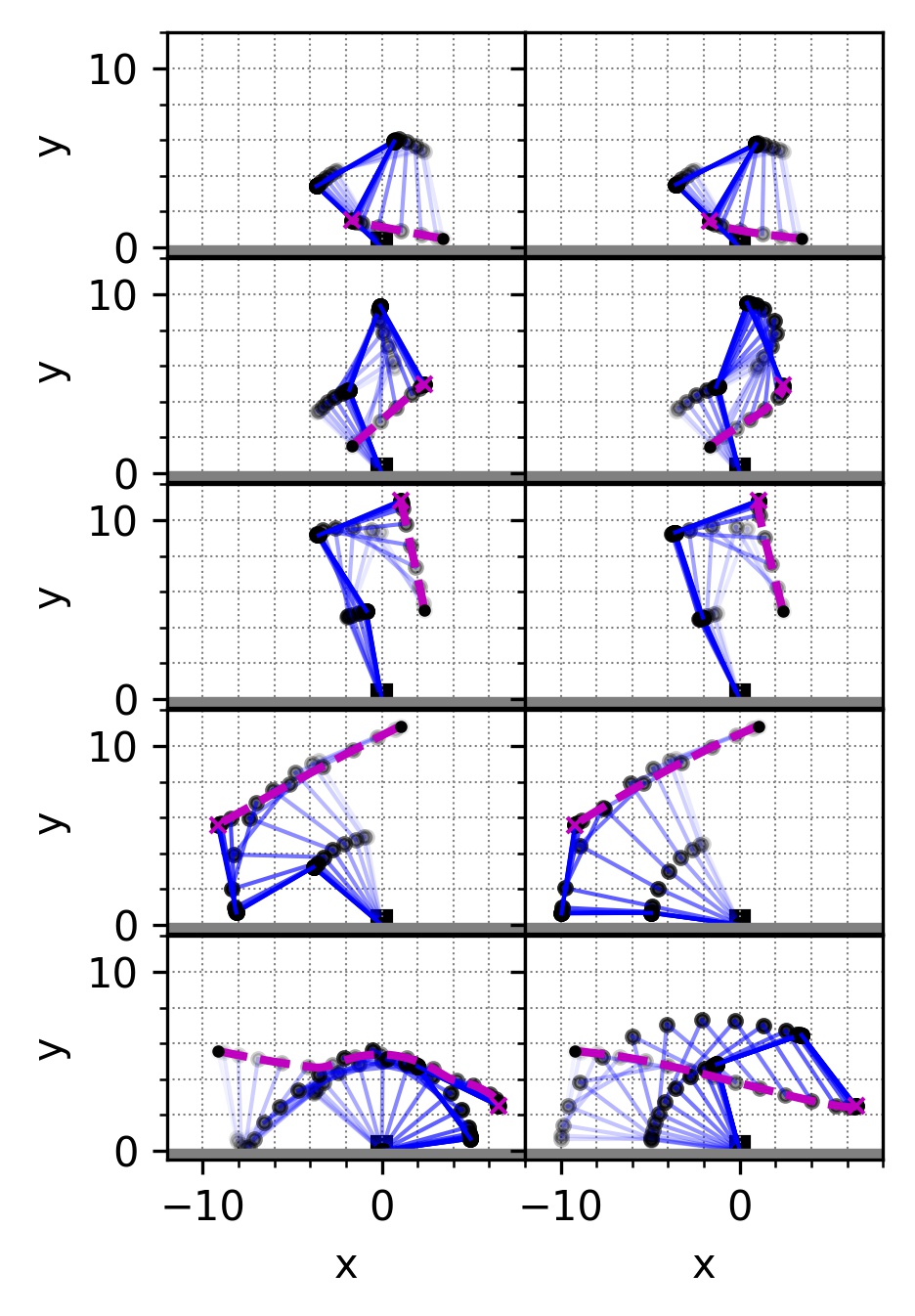}
  \vspace{-20pt}
  \caption{Continuous planar goal reaching experiment. The left and right columns shows the behavior with and without the redundancy resolution geometries, respectively. The arm's alpha transparency ranges from light at the beginning of the trajectory to dark at the end. Note in particular the last two rows: without redundancy resolution the arm bends over and ultimately collapses on itself, while with redundancy resolution it successfully retains its overall ``ready'' configuration.}
  \label{fig:5goals_compare}
\end{figure}

We use four types of fabric terms for these experiments. 1) a goal directing behavior at the end-effector;  2) joint limit avoidance; 3) redundancy resolution; 4) end-effector behavior shaping. Our experiments build up the full behavior layer-by-layer, adding first just the goal attractor and joint limit avoidance. We compare that to the same system with redundancy resolution added, and then to the resulting system with added end-effector behavior shaping. Separately, many of the earlier components can be used as components for many manipulation problems, and together they constitute a natural primitive for tabletop manipulation. The system is driven by a simple attractor injecting energy to pull the system toward the target. (In that sense, goal directing behavior is technically unnecessary, although it helps shape the end-effector's path toward the target.)


Geometric consistency enables us to straightforwardly tune each part of the system one layer at a time rather than all components together. With each new layer, we fix the previous layers and simply add the new one. This feature significantly reduces design complexity. Figure~\ref{fig:5goals_compare} (left) shows the first behavior, including only end-effector direction and joint limit avoidance; Figure~\ref{fig:5goals_compare} (right) contrasts that to the same behavior with added redundancy resolution. Note that these behaviors are similar in nature to classical operational space control methods. Fabrics, though, aren't restricted to hierarchies (they use soft priorities to trade off relevant preferences), and each term is more expressive. For instance, operational space control is restricted to position dependent metrics and potentials; here even joint limit avoidance goes beyond that, including velocity dependence enabling it to consider the obstacle while close to it and heading toward it, but to ignore it while far from the obstacle and/or moving away from it. Despite this added design flexibility, fabrics are remain provably stable.

Finally, Figure~\ref{fig:5goals_behavior_shaping} shows the full reaching behavior with end-effector behavior shaping lifting from the surface, traveling toward the target at a nominal distance above the surface, and then descending to the target from above.

\begin{figure}
  \vspace{-20pt}
  \includegraphics[width=.7\columnwidth]{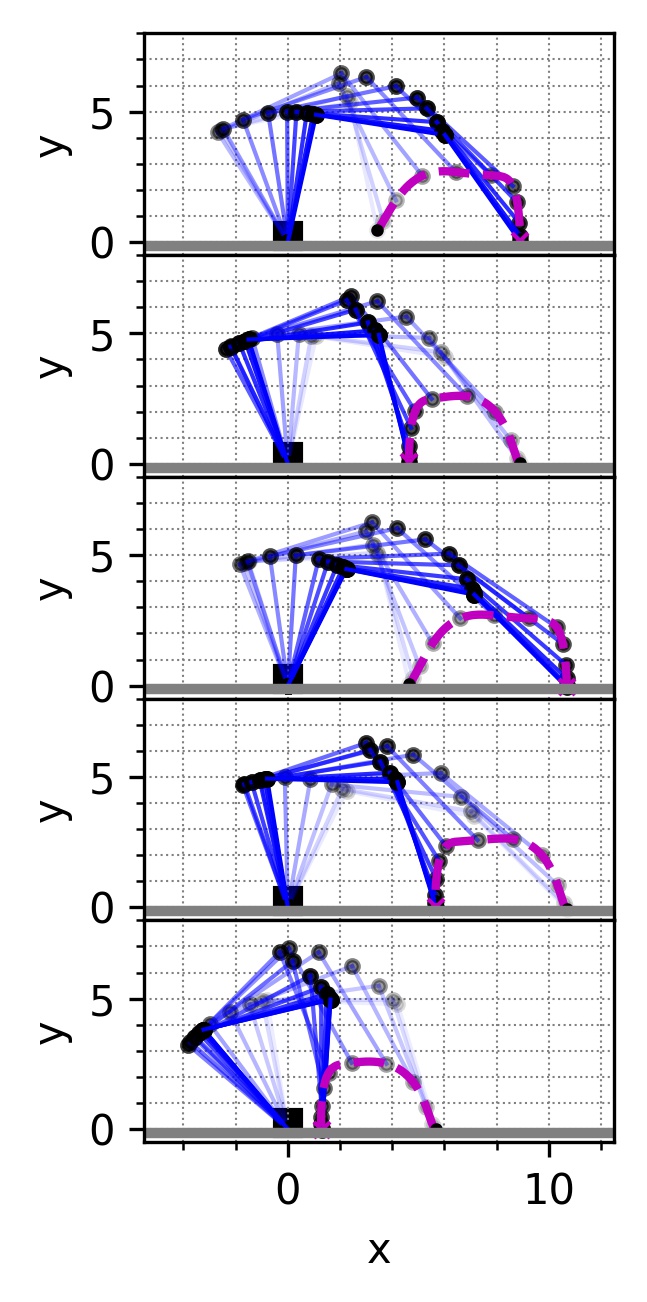}
  \vspace{-20pt}
  \caption{Planar robot behavior shaping experiment with geometric fabrics. From the top to the bottom, figures show the robot moving sequentially from goal 1 to goal 5, each time starting from the final configuration of the last episode. The arm's alpha transparency ranges from light at the beginning of the trajectory to dark at the end. As designed, geometric fabrics on the end-effector shape the system to first lift the end-effector from the floor, then proceed horizontally a fixed distance above the floor before lowering to the target from above. }
  \label{fig:5goals_behavior_shaping}
\end{figure}

\section{CONCLUSIONS}

We present our theory of fabrics ranging from broad characterization to concrete tools.
Fabrics illuminate limitations of earlier tools such as operational space control \cite{KhatibOperationalSpaceControl1987,Peters_AR_2008} and geometric control \cite{bullo2004geometric} (including our own Geometric Dynamical Systems (GDS) \cite{cheng2018rmpflow}), which as Lagrangian fabrics have limited capacity. We see that, despite lacking theoretical guarantees, the broader RMPs \cite{ratliff2018rmps} are fundamentally more expressive. To bridge the gap, we derive geometric fabrics, our most concrete incarnation of fabrics, for the stable and geometrically consistent design of speed-regulated RMPs. This paper is highly technical, but its product is simple and safe tools for behavioral design. We provide specifics for practitioners in our supplemental paper on geometric fabrics \cite{xiegeometricfabricsupp} along with a suite of realistic experiments on a full implementation. 




{
\bibliographystyle{plainnat}
\bibliography{refs}
\clearpage
}

\begin{appendices}
\title{Optimization fabrics for behavioral design: Appendices}
\date{}
\maketitle

\section{Concrete derivations on manifolds}
\label{apx:Manifolds}

Nonlinear geometry is most commonly constructed in terms of smooth manifolds, often in an abstract, coordinate-free, form. To make the topic more accessible, we will stick to coordinate descriptions and standard vector notations from advanced calculus. Analogous to how standard classical equations of motion are expressed in generalized coordinates convenient to the problem (Cartesian coordinates, polar coordinates, robotic joint angles, etc.) and understood to represent concrete physical phenomena independent of those coordinates, we take the same model here. Our constructions of nonlinear geometry will be made exclusively in terms of coordinates to keep the expressions and notation familiar, and practitioners are free to change coordinates as needed as the system moves across the manifold.  

Formally, the equations we describe in this paper are \textit{covariant}, which means they maintain their form under changes of coordinates \cite{LeeSmoothManifolds2012}. 
Instead of ensuring explicitly all objects used in equations are coordinate free, we define quantities in terms of clearly coordinate free quantities such as lengths. For instance, we define geometry in terms of minimum length criteria, so as long as the length measure transforms properly so it remains consistent under changes of coordinates, the geometric equations should be independent of coordinates.

Manifolds will be defined in the traditional way (see \cite{LeeSmoothManifolds2012} for a good introduction), but for our purposes, we will consider them $d$-dimensional spaces $\mathcal{X}$ with elements identified with $\x\in\R^d$ in $d$ coordinates. Often we implicitly assume a system evolves over time $t$ in a trajectory $\x(t)$ with velocity $\xd = \frac{d\x}{dt}$, and say that coordinate velocity vector $\xd$ is an element of the tangent space $\mathcal{T}_\x\mathcal{X}$. When discussing general tangent space vectors, we often us a separate notation $\vv\in\mathcal{T}_\x\mathcal{X}$ to distinguish it from being a velocity of a specific trajectory.

Manifolds with a boundary are common modeling tools in this work as well. For instance, both joint limits and obstacles form boundaries in a manifold. If $\mathcal{X}$ denotes a manifold, its boundary is denoted $\partial\mathcal{X}$ and is assumed to form a smooth lower-dimensional submanifold with cooresponding tangent space $\mathcal{T}_\x\partial\mathcal{X}$. 

\section{Advanced calculus notation}
\label{apx:CalculusNotation}

Higher-order tensors are common in differential geometry, so frequently when dealing with coordinates, explicit indices are exposed and summed over to handle the range of possible combinations. To avoid clutter, the Einstein convention is then used to unambiguously drop summation symbols. 

However, the resulting index notation is unfamiliar and takes some getting used to. Alternatively, the matrix-vector notation found in many advanced calculus and engineering texts is often much simpler and concise. The key to this notation's simplicity is the associativity of matrix products. When at most two indices are involved, by arranging the components of our indexed objects into matrices, with one-indexed vectors being column vectors by default, we can leverage this associativity of matrix products to remove the indices entirely while ensuring expression remain unambiguous with regard to order of operation. We, therefore, use the simpler and more compact matrix-vector notation wherever possible, with a slight extension for how to deal with added indices beyond 2 that might arise from additional partials as discussed below. The majority of our algebra can be expressed using just two indices allowing us to remain within the matrix-vector paradigm.
 
Whenever we take a partial derivative, a new index is generated ranging over the individual dimensions of the partial derivative. For instance, a partial derivative of a function $\partial_\x f(\x)$ where $f:\R^n\rightarrow\R$ is a list of $n$ partial derivatives, one for each dimension of $\x$. Likewise, if $\g:\R^n\rightarrow\R^m$, there are again $n$ partials in $\partial_\x\g$, but this time each of them has the same dimensionality as the original $\g$. $\g$ has its own index ranging over its $m$ codomain dimensions, and now there's a new second index ranging over the $n$ partials. 

We use the convention that if the partial derivative generates a first index (as in $\partial_\x f$), it is oriented as a column vector by default (in this case $n$ dimensional), with its transpose being a row vector. If it generates a second index (as in $\partial_\x \g$), the second index creates a matrix (in this case $m\times n$ dimensional) so that the original vector valued function's orientation is maintained. Specifically, if originally the vector-valued function was column oriented, then each partial will be column oriented and they will be lined up in the matrix so that the new index ranges over the columns. And if the original vector-valued function is row oriented, then the partials will be row vectors and stacked to make rows of a matrix, so the new index will range over the first index of the resulting matrix.

We use the compact notation $\partial_\x$ rather than $\frac{\partial}{\partial \x}$ so multiple partial derivatives unambiguously lists the partials in the order they're generating indices. The notation $\partial_{\x\y} h(\x,\y)$ where $h:\R^m\times\R^n\rightarrow\R$, therefore, means we first generate an index over partials of $\x$ and then generate an index over partials of $\y$. That means $\partial_{\x\y}h = \partial_\y\big(\partial_\x h\big)$ is an $m\times n$ matrix since $\partial_\x h$ is first generated as an $n$-dimensional column vector. When the partials are over two of the same variable denoting a Hessian $\mH_f$, we use the notation $\mH_f = \partial^2_{\x\x} f$ using a squared exponent to emphasize that the partials are not mixed.

For partials generating up to only the first two indices, matrix algebra governs how they interact with surrounding matrices and column or row vectors, with the partial derivative operator taking priority over matrix multiplication (we use parentheses when the product should be included in the partial). For instance, $\partial_\x \g\:\xd = \J_\g\xd$ is the Jacobian of $\g$, with partials ranging over the columns (second index) and each partial constituting a full column, multiplied by the column vector $\xd$. Likewise, $\frac{1}{2}\xd^\tr\partial^2_{\xd\xd}\Lag_e\xd$ is a squared norm of $\xd$ with respect to the symmetric matrix $\partial^2_{\xd\xd}\Lag_e = \partial_\xd\partial_\xd\Lag_e$ of second partials. The first $\partial_\xd\Lag_e$ creates a column vector (it's a first index), and the second $\partial_\xd\big(\partial_\xd\Lag_e\big)$ creates a matrix, with columns constituting the partials of the vector of first partials.

We use bold font to emphasize that these objects with one or two indices are treated as matrices, with one index objects assumed to be column oriented by default. Transposition operates as standard in matrix algebra, swapping the indices in general, with column vectors becoming row vectors and vice versa.

Beyond two indices, we use the convention that a partial derivative simply generates a new index ranging over the partials. By default, multiplication on the right by another indexed object (e.g. vector or matrix) will contract (sum products of matching index values) across this new index and the first index in the right operand. For instance, if $\M:\R^d\rightarrow\R^{m\times n}$, and $\qd\in\R^d$ is a velocity, then $\partial_\q\M\:\qd = \sum_{i=1}^d\frac{\partial\M}{\partial q^i} \dot{q}^i$, unambiguously. Note that beyond two indices, associativity no longer holds in general, so parentheses must be used to disambiguate wherever necessary.

\vspace{10pt}
\noindent Examples:
\begin{enumerate}
\item $\partial_\x f(\x,\y)$ is a column vector
\item $\partial_{\x\y} f(\x,\y)$ is a matrix with first index ranging over partials of $\x$ and second index ranging over partials of $\y$.
\item When $\f:\R^n\rightarrow\R^m$ the matrix $\partial_\x \f$ is an $m\times n$ Jacobian matrix, and $\partial_\x \f^\tr$ is an $n\times m$ matrix, the transpose of the Jacobian matrix.
\item $\frac{d}{dt} \f(\x) = \partial_\x \f\:\xd$ and $\frac{d}{dt} \f^\tr(\x) = \xd^\tr \partial_\x \f^T = \xd^\tr \big(\partial_\x \f\big)^\tr$.
\item $\partial_\x g\big(\f(\x)\big) = \big(\partial_\x \f\big)^\tr \partial_\y g|_{\f(\x)} = J_\f^\tr \partial_\y g(\f(\x))$ with $\y = \f(\x)$.
\end{enumerate}
Often a gradient is denoted $\nabla_\x f(\x)$, but with these conventions outlined above, we use simply $\partial_\x f(\x)$ to avoid redundant notation. We additionaly frequently name common expressions for clarity, such as 
\begin{enumerate}
    \item $\p_e = \partial_\xd\Lag_e$
    \item $\M_e = \partial^2_{\xd\xd}\Lag_e = \partial_\xd \p_e$
    \item $\J_{\p_e} = \partial_\x \p_e$ (even though $\p_e$ is a function of both position and velocity, we use $\J$ to denote the position Jacobian)
    \item $\g_e = \partial_\x\Lag$
\end{enumerate}
These vector-matrix definitions $\M_e,\J_{\p_e},\g_e$ make it more clear what the general size and orientation of the objects $\partial^2_{\xd\xd}\Lag, \partial_{\xd\x}\Lag = \partial_\x \p_e$, and $\partial_\x\Lag$ are.

The equations of motion in different forms would be:
\begin{align}
    &\partial^2_{\xd\xd}\mathcal{L}\;\xdd +  \partial_{\xd\x}\mathcal{L}\:\xd - \partial_\x \mathcal{L} = \zero.\\
    &\M_e\xdd + \J_{\p_e}\xd - \g_e = \zero.
\end{align}
Using the named quantities, that structure of the final expression is clear at a glance. Similarly, we have
\begin{align}
E(\x,\xd) &= \frac{1}{2}\xd^\tr\:\partial^2_{\xd\xd}\mathcal{L}\: \xd = \frac{1}{2}\xd^\tr\M_e\xd,
\end{align}

Example of algebraic operations using this notation (taken from a common calculation involving the time invariance of Finsler energy functions):
\begin{align}
    &\xd^\tr \partial_{\xd\x}\Lag\:\xd - 2\partial_\x\Lag_e^\tr\xd
    = \big( \xd^\tr \partial_\x\partial_\xd\Lag_e - 2\partial_\x \Lag_e^\tr \big)\xd\\
    &\ \ = \big( \partial_\x(\xd^\tr \partial_\xd\Lag_e)^\tr - 2\partial_\x \Lag_e^\tr \big)\xd\\
    &\ \ = \partial_\x\big(\xd^\tr\partial_\xd \Lag_e - 2\Lag_e\big)^\tr \xd \\
    &\ \ = \partial_\x\big(2\Lag_e - 2\Lag_e\big)^\tr\xd\\
    &\ \ = \zero,
\end{align}
where we use $\Lag_e = \frac{1}{2}\xd^\tr\partial_\xd\Lag_e$ in that last line.

\section{Additional Lemmas and proofs from conservative fabrics section}

\begin{lemma}\label{lma:EnergyTimeDerivative}
Let $\Lag_e$ be an energy Lagrangian with energy $\Ham_e = \partial_\xd\Lag_e^\tr \xd - \Lag_e$. The energy time derivative is 
\begin{align} \label{eqn:EnergyTimeDerivative}
    \dot{\Ham}_e = \xd^\tr\big(\M_e\xdd + \f_e\big),
\end{align}
where $\M_e$ and $\f_e$ come from the Lagrangian's equations of motion $\M_e\xdd + \f_e = \zero$.
\end{lemma}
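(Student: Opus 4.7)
The plan is to directly differentiate $\Ham_e = \partial_\xd\Lag_e^\tr \xd - \Lag_e$ with respect to $t$ along a trajectory $\x(t)$, combining the product rule on the first term with the chain rule applied throughout, and then algebraically collect the result into the advertised form $\xd^\tr(\M_e \xdd + \f_e)$.

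First I would compute the time derivative of $\partial_\xd\Lag_e$, treating it as a function of both $\x(t)$ and $\xd(t)$. By the chain rule this yields $\frac{d}{dt}\partial_\xd\Lag_e = \partial^2_{\xd\xd}\Lag_e\,\xdd + \partial_{\xd\x}\Lag_e\,\xd = \M_e\xdd + \partial_{\xd\x}\Lag_e\,\xd$, where the notational conventions of Appendix~\ref{apx:CalculusNotation} ensure that the mixed-partial matrix has the correct orientation to multiply $\xd$ on the right and produce a column vector.

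Next I would apply the product rule to $\partial_\xd\Lag_e^\tr \xd$ and the chain rule to $\Lag_e$ itself, obtaining
\begin{align*}
\dot{\Ham}_e &= \big(\M_e\xdd + \partial_{\xd\x}\Lag_e\,\xd\big)^\tr \xd + \partial_\xd\Lag_e^\tr \xdd \\
&\quad - \partial_\x\Lag_e^\tr \xd - \partial_\xd\Lag_e^\tr \xdd.
\end{align*}
The two $\partial_\xd\Lag_e^\tr \xdd$ contributions cancel, and symmetry of the Hessian $\M_e$ converts $\xdd^\tr \M_e \xd$ into $\xd^\tr \M_e \xdd$, so I can factor $\xd^\tr$ out on the left.

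The only step needing a moment's care — and the closest thing to an obstacle — is the middle quadratic term $\xd^\tr (\partial_{\xd\x}\Lag_e)^\tr \xd$. By equality of mixed partials $(\partial_{\xd\x}\Lag_e)^\tr = \partial_{\x\xd}\Lag_e$, and since the expression is a scalar it equals its own transpose, so it also equals $\xd^\tr\partial_{\xd\x}\Lag_e\,\xd$. Combining with $-\xd^\tr\partial_\x\Lag_e$ then yields $\dot{\Ham}_e = \xd^\tr\big(\M_e\xdd + \partial_{\xd\x}\Lag_e\,\xd - \partial_\x\Lag_e\big) = \xd^\tr(\M_e\xdd + \f_e)$, which is the claimed identity. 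Overall the argument is bookkeeping that mirrors the classical derivation of how the Hamiltonian evolves along Euler--Lagrange trajectories; the only nonroutine content is the index and orientation accounting tied to the paper's matrix-valued partial-derivative conventions.
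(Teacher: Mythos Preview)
Your proposal is correct and follows essentially the same approach as the paper: directly differentiate $\Ham_e$ via the product and chain rules, cancel the $\partial_\xd\Lag_e^\tr\xdd$ terms, and collect into $\xd^\tr(\M_e\xdd + \f_e)$. Your treatment of the mixed term is in fact slightly more careful than the paper's terse presentation; note, incidentally, that you do not actually need equality of mixed partials there, since $\xd^\tr A\,\xd = \xd^\tr A^\tr\xd$ holds for any matrix $A$ simply because a scalar equals its own transpose.
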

\begin{proof}
The calculation is a straightforward time derivative of the Hamiltonian:
\begin{align*}
    \dot{\Ham}_{\Lag_e} 
    &= \frac{d}{dt}\big[\partial_\xd\Lag_e - \Lag_e\big] \\
    &= \big(\partial_{\xd\xd}\Lag_e\,\xdd + \partial_{\xd\x}\Lag_e\,\xd\big)^\tr \xd + \partial_\xd\Lag_e^\tr\,\xdd \\
    &\ \ \ \ \ \ \ \ - \big(\partial_\xd\Lag_e\,\xdd + \partial_\x\Lag_e^\tr\xd \big) \\
    &= \xd^\tr\Big(\partial_{\xd\xd}\Lag_e\,\xdd + \partial_{\xd\x}\Lag_e\,\xd - \partial_\x\Lag_e^\tr\xd\Big) \\
    &= \xd^\tr\big(\M_e\xdd + \f_e\big).
\end{align*}
\end{proof}

The following lemma collects some results around common matrices and operators that arise when analyzing energy conservation.
\begin{lemma}\label{lma:EnergyProjection}
Let $\Lag_e$ be an energy Lagrangian. Then with $\p_e = \M_e\xd$,
\begin{align}
    \mR_{\p_e} = \M_e^{-1} - \frac{\xd\,\xd^\tr}{\xd^\tr\M_e\xd}
\end{align}
has null space spanned by $\p_e$ and
\begin{align}
    \mR_\xd = \M_e - \frac{\p_e\p_e^\tr}{\p_e^\tr\M_e^{-1}\p_e}
\end{align}
has null space spanned by $\xd$. These matrices are related by
$\mR_\xd = \M_e\mR_{\p_e}\M_e$ and the matrix $\M_e^{-1}\mR_\xd = \M_e\mR_{\p_e} = \mP_e$ is a projection operator of the form
\begin{align} \label{eqn:EnergyProjector}
    \mP_e = \M_e^{\,\frac{1}{2}}\Big[\I - \hat{\vv} \hat{\vv}^\tr\Big]\M_e^{-\frac{1}{2}}
\end{align}
where $\vv = \M_e^{\,\frac{1}{2}}\xd$ and $\hat{\vv} = \frac{\vv}{\|\vv\|}$ is the normalized vector. Moreover, $\xd^\tr\mP_e\f = \zero$ for all $\f(\x,\xd)$.
\end{lemma}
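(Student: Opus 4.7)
The plan is to verify each claim by direct computation, exploiting the repeatedly useful identity $\p_e^\tr\xd = \xd^\tr\M_e\xd = \p_e^\tr\M_e^{-1}\p_e$, which follows immediately from $\p_e = \M_e\xd$.

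First I would confirm the null-space claims. A direct computation gives $\mR_{\p_e}\p_e = \M_e^{-1}\p_e - \frac{\xd(\xd^\tr\p_e)}{\xd^\tr\M_e\xd} = \xd - \xd = \zero$, using $\xd^\tr\p_e = \xd^\tr\M_e\xd$. Since $\mR_{\p_e}$ differs from the full-rank $\M_e^{-1}$ by a rank-one update, its kernel has dimension exactly one, so it equals $\mathrm{span}(\p_e)$. The same argument applies symmetrically to $\mR_\xd$: $\mR_\xd\xd = \p_e - \frac{\p_e(\p_e^\tr\xd)}{\p_e^\tr\M_e^{-1}\p_e} = \zero$, again by the identity above, and the kernel is one-dimensional since $\mR_\xd$ is a rank-one perturbation of $\M_e$.

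Next I would establish the conjugation relation $\mR_\xd = \M_e \mR_{\p_e}\M_e$ by expanding the right-hand side: $\M_e\bigl(\M_e^{-1} - \xd\xd^\tr/(\xd^\tr\M_e\xd)\bigr)\M_e = \M_e - (\M_e\xd)(\xd^\tr\M_e)/(\xd^\tr\M_e\xd) = \M_e - \p_e\p_e^\tr/(\p_e^\tr\M_e^{-1}\p_e)$, where the last step again invokes the identity. This immediately implies $\M_e^{-1}\mR_\xd = \M_e\mR_{\p_e} = \I - \p_e\xd^\tr/(\xd^\tr\M_e\xd)$, a common closed form for $\mP_e$.

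For the projector form, I would substitute $\vv = \M_e^{1/2}\xd$ into the right-hand side of the claimed similarity: $\M_e^{1/2}[\I - \hat{\vv}\hat{\vv}^\tr]\M_e^{-1/2} = \I - \M_e^{1/2}\vv\vv^\tr\M_e^{-1/2}/(\vv^\tr\vv) = \I - \M_e\xd\xd^\tr/(\xd^\tr\M_e\xd) = \mP_e$. Since $\I - \hat\vv\hat\vv^\tr$ is an orthogonal projector and similarity preserves idempotence, $\mP_e$ is a projection operator. The final identity $\xd^\tr\mP_e\f = \zero$ then falls out: $\xd^\tr\mP_e = \xd^\tr - (\xd^\tr\M_e\xd)\xd^\tr/(\xd^\tr\M_e\xd) = \zero^\tr$.

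There is no real obstacle here; the work is purely bookkeeping and the only subtlety is being careful about which form of the denominator ($\xd^\tr\M_e\xd$ versus $\p_e^\tr\M_e^{-1}\p_e$ versus $\p_e^\tr\xd$) to use at each step so that the cancellations are transparent. Full-rank invertibility of $\M_e$, guaranteed by the energy Lagrangian definition, ensures all inverses and square roots are well defined and the rank-counting arguments are valid.
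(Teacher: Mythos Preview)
Your proposal is correct and follows essentially the same direct-computation route as the paper: verify the null vectors, use the rank-one perturbation argument to bound the kernel dimension, expand $\M_e\mR_{\p_e}\M_e$ to get the conjugation identity, rewrite $\mP_e$ via $\vv=\M_e^{1/2}\xd$, and check $\xd^\tr\mP_e=\zero$. The only cosmetic differences are that you front-load the scalar identity $\p_e^\tr\xd=\xd^\tr\M_e\xd=\p_e^\tr\M_e^{-1}\p_e$ and invoke ``similarity preserves idempotence'' for $\mP_e^2=\mP_e$, whereas the paper multiplies it out explicitly; both are equally valid.
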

\begin{proof}
Right multiplication of $\mR_{\p_e}$ by $\p_e$ gives:
\begin{align}
    \mR_{\p_e} \p_e 
    &= \left(\M_e^{-1} - \frac{\xd\,\xd^\tr}{\xd^\tr\M_e\xd}\right) \M_e\xd \\
    &= \xd - \xd \left(\frac{\xd^\tr\M_e\xd}{\xd^\tr\M_e\xd}\right) = \zero,
\end{align}
so $\p_e$ lies in the null space. Moreover, the null space is no larger since each matrix is formed by subtracting off a rank 1 term from a full rank matrix. 

The relation between $\mR_{\p_e}$ and $\mR_\xd$ can be shown algebraically
\begin{align}
    \M_e\mR_{\p_e}\M_e 
    &= \M_e\left(\M_e^{-1} - \frac{\xd\,\xd^\tr}{\xd^\tr\M_e\xd}\right)\M_e \\
    &= \M_e - \frac{\M_e\xd\,\xd^\tr\M_e}{\xd^\tr\M_e\xd} \\
    &= \M_e - \frac{\p_e\p_e^\tr}{\p_e^\tr\M_e^{-1}\p_e} = \mR_\xd.
\end{align}
Since $\M_e$ has full rank, $\mR_\xd$ has the same rank as $\mR_{\p_e}$ and its null space must be spanned by $\xd$ since $\mR_\xd\xd = \M_e\mR_{\p_e}\M_e\xd = \M_e\mR_{\p_e}\p_e = \zero$.

With a slight algebraic manipulation, we get
\begin{align*}
    \M_e\mR_{\p_e} 
    &= \M_e\left(\M_e^{-1} - \frac{\xd\,\xd^\tr}{\xd^\tr\M_e\xd}\right) \\
    &= \M_e^{\,\frac{1}{2}} \left(\I - \frac{\M_e^{\,\frac{1}{2}}\xd\,\xd^\tr\M_e^{\,\frac{1}{2}}}{\xd^\tr\M_e^{\,\frac{1}{2}}\M_e^{\,\frac{1}{2}}\xd}\right) \M_e^{-\frac{1}{2}} \\
    &= \M_e^{\,\frac{1}{2}} \left(\I - \frac{\vv\vv^\tr}{\vv^\tr\vv}\right) \M_e^{-\frac{1}{2}} \\
    &= \mP_e
\end{align*}
since $\frac{\vv\,\vv^\tr}{\vv^\tr\vv} = \hat{\vv}\hat{\vv}^\tr$. Moreover, 
\begin{align*}
    \mP_e\mP_e 
    &= 
    \M_e^{\,\frac{1}{2}} \left(\I - \frac{\vv\vv^\tr}{\vv^\tr\vv}\right) \M_e^{-\frac{1}{2}}\\
    &\ \ \ \ \ \ \cdot
    \M_e^{\,\frac{1}{2}} \left(\I - \frac{\vv\vv^\tr}{\vv^\tr\vv}\right) \M_e^{-\frac{1}{2}} \\
    &= \M_e^{\,\frac{1}{2}} \left(\I - \frac{\vv\vv^\tr}{\vv^\tr\vv}\right) \left(\I - \frac{\vv\vv^\tr}{\vv^\tr\vv}\right) \M_e^{-\frac{1}{2}} \\
    &= \M_e^{\,\frac{1}{2}} \left(\I - \hat{\vv}\hat{\vv}^\tr\right) \M_e^{-\frac{1}{2}} = \mP_e,
\end{align*}
since $\mP_\perp = \I - \hat{\vv}\hat{\vv}^\tr$ is an orthogonal projection operator. Therefore, $\mP_e^2 = \mP_e$ showing that it is a projection operator as well.

Finally, we also have 
\begin{align*}
    \xd^\tr\mP_e 
    &= \xd^\tr\M_e\mR_{\p_e}
    = \xd^\tr\M_e\left(\M_e^{-1} - \frac{\xd\,\xd^\tr}{\xd^\tr\M_e\xd}\right)\\
    &= \left[\xd^\tr - \left(\frac{\xd^\tr\M_e\xd}{\xd^\tr\M_e\xd}\right)\xd^\tr\right]
    = \zero.
\end{align*}
Therefore, for any $\f$, we have $\xd^\tr\mP_e\f = \zero$. 
\end{proof}

\begin{proposition}[System energization] \label{prop:SystemEnergization}
Let $\xdd + \h(\x, \xd) = \zero$ be a differential equation, and suppose $\Lag_e$ is any energy Lagrangian with equations of motion $\M_e\xdd + \f_e = \zero$ and energy $\Ham_e$. Then $\xdd + \h(\x, \xd) + \alpha_{\Ham_e}\xd = \zero$ is energy conserving when
\begin{align}\label{eqn:EnergizationTransformAlpha}
    \alpha_{\Ham_e} = -(\xd^\tr\M_e\xd)^{-1}\xd^\tr\big[\M_e\h - \f_e\big],
\end{align}
and differs from the original system by only an acceleration along the direction of motion. 
The new system can be expressed as:
\begin{align} \label{eqn:ZeroWorkEnergizationForm}
  \M_e\xdd + \f_e + \mP_e\big[\M_e\h - \f_e\big] = \zero.
\end{align}
This modified system is known as an {\em energized system}. 
\fullversiononly
Moreover, the operation of {\em energizing} the original system is known as an {\em energization transform} and is denoted using spec notation as
\begin{align}
    \mathcal{S}_{\h}^{\Lag_e} 
    &= \Big(\M_e, \f_e + \mP_e\big[\M_e\h - \f_e\big]\Big)_\mathcal{X} 
    \\\nonumber
    &= \mathrm{energize}_{\Lag_e}\Big\{\mathcal{S}_\h\Big\},
\end{align}
where $\mathcal{S}_\h = \big(\I, \h\big)_\mathcal{X}$ is the spec representation of $\xdd + \h(\x,\xd) = \zero$.
\fi 
\end{proposition}
\begin{proof}
Equation~\ref{eqn:EnergyTimeDerivative} of Lemma~\ref{lma:EnergyTimeDerivative} gives the time derivative of the energy. Substituting a system of the form $\xdd = -\h(\x,\xd) - \alpha_{\Ham_e}\xd$, setting it to zero, and solving for $\alpha_{\Ham_e}$ gives
\begin{align}
&\dot{\Ham}_e = \xd^\tr\Big[\M_e\big(-\h - \alpha_{\Ham_e}\xd\big) + \f\Big] = \zero \\
&\Rightarrow -\xd^\tr\M_e\h - \alpha_{\Ham_e} \xd^\tr\M_e\xd + \xd^\tr\f_e = \zero \\
&\Rightarrow \alpha_{\Ham_e} = -\frac{\xd^\tr\M_e\h - \xd^\tr\f_e}{\xd^\tr\M_e\xd}\\
&\ \ \ \ \ \ \ \ \ \ = 
-(\xd^\tr\M_e\xd)^{-1}\xd^\tr\big[\M_e\h - \f_e\big].
\end{align}
This result gives the formula for Equation~\ref{eqn:EnergizationTransformAlpha}. Substituting this solution for $\alpha_{\Ham_e}$ back in gives
\begin{align}
    \xdd 
    &= -\h + \left(\frac{\xd^\tr}{\xd^\tr\M_e\xd}\big[\M_e\h - \f_e\big]\right) \xd \\
    &= -\h + \left[\frac{\xd\,\xd^\tr}{\xd^\tr\M_e\xd}\right]\big(\M_e\h - \f_e\big).
\end{align}
Algebraically, it helps to introduce $\h = \M_e^{-1}\big[\f_f + \f_e\big]$ to first express the result as a difference away from $\f_e$; in the end we will convert back to $\h$. Doing so and moving all the terms to the left hand side of the equation gives
\begin{align*}
    &\ \ \ \ \xdd + \h - \left[\frac{\xd\,\xd^\tr}{\xd^\tr\M_e\xd}\right]\big(\M_e\h - \f_e\big) = \zero \\
    &\Rightarrow \xdd + \M_e^{-1}\big[\f_f + \f_e\big] \\
    &\ \ \ \ \ - \left[\frac{\xd\,\xd^\tr}{\xd^\tr\M_e\xd}\right]\Big(\M_e\M_e^{-1}\big[\f_f + \f_e\big] - \f_e\Big) = \zero \\
    &\Rightarrow \M_e \xdd + \f_f + \f_e \\
    &\ \ \ \ \ - \M_e\left[\frac{\xd\,\xd^\tr}{\xd^\tr\M_e\xd}\right]\Big(\f_f + \big(\f_e - \f_e\big)\Big) = \zero \\
    &\Rightarrow \M_e \xdd + \f_e + \M_e\left[\M_e^{-1} - \frac{\xd\,\xd^\tr}{\xd^\tr\M_e\xd}\right] \f_f = \zero \\
    &\Rightarrow \M_e \xdd + \f_e + \M_e\mR_{\p_e}\big(\M_e\h - \f_e\big) = \zero,
\end{align*}
where we substitute $\h = \M_e\f_f - \f_e$ back in. By Lemma~\ref{lma:EnergyProjection} $\M_e\mR_{\p_e} = \mP_e$, so we get Equation~\ref{eqn:ZeroWorkEnergizationForm}. 
\end{proof}

\begin{lemma} \label{lma:LinearCombinationsUnbiased}
Suppose $\M_e$ is boundary conforming. Then if $\big(\M_e, \f\big)$ and $\big(\M_e, \g\big)$ are both unbiased, then $\big(\M_e, \alpha \f + \beta\g\big)$ is unbiased. 
\end{lemma}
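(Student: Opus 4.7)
The plan is to unpack the definition of unbiased and verify it directly for the combined force $\alpha\f + \beta\g$, leveraging linearity of the defining limit condition. Since both $(\M_e,\f)$ and $(\M_e,\g)$ are unbiased, they are both boundary conforming, so $\f$ and $\g$ are finite on $\mathcal{T}\mathcal{X}$, and along any tangent-bundle trajectory with $\x\rightarrow\x_\infty\in\partial\mathcal{X}$ both $\V_\infty^\tr\f$ and $\V_\infty^\tr\g$ remain bounded (in fact they tend to $\zero$). Finiteness of $\alpha\f+\beta\g$ and boundedness of $\V_\infty^\tr(\alpha\f+\beta\g)$ then follow by linearity and the triangle inequality, while conditions (2)--(3) of Definition~\ref{def:BoundaryConformingSpec} on the metric side are inherited from the fixed, hypothesized boundary conforming $\M_e$.

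For the unbiased property itself, I would pick an arbitrary trajectory $\x(t)$ with $\x\rightarrow\x_\infty$ and use linearity of matrix multiplication and of limits to write
\[
\V_\infty^\tr\bigl(\alpha\f(\x,\xd)+\beta\g(\x,\xd)\bigr) \;=\; \alpha\,\V_\infty^\tr\f(\x,\xd) \;+\; \beta\,\V_\infty^\tr\g(\x,\xd) \;\longrightarrow\; \alpha\,\zero + \beta\,\zero \;=\; \zero,
\]
which is exactly the limit condition of Definition~\ref{def:Unbiased}. Under the natural reading of Definition~\ref{def:Unbiased} as a pointwise functional condition on $\f$ along any trajectory converging in $\mathcal{X}$ (rather than only along integral curves generated by $(\M_e,\f)$), no further work is required.

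The one point I expect to be the main obstacle is the interior clause in Definition~\ref{def:BoundaryConformingSpec}(1): the integral curves of $(\M_e,\alpha\f+\beta\g)$ differ from those of either original spec, and interiority of a spec is, in general, not preserved under linear combinations of the force term. My resolution is to appeal to the functional reading of the unbiased condition just described, which decouples the limit statement from the combined dynamics; if instead a dynamical reading is forced, the statement would need the additional hypothesis that the convex-combination-like vector field $\alpha\f+\beta\g$ produces inward-pointing accelerations near $\partial\mathcal{X}$ (for instance, inherited from a shared barrier-like structure of $\f$ and $\g$), and one would verify this separately. Either way, the algebraic core of the lemma is the one-line application of linearity above.
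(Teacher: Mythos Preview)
Your argument is correct and is essentially the paper's argument: both reduce to linearity of the unbiased limit condition. The paper splits into the interior and boundary cases and writes out an explicit parallel/perpendicular decomposition of $\f$ and $\g$, whereas you apply the $\V_\infty^\tr$ characterization of Definition~\ref{def:Unbiased} uniformly in one line; these are equivalent by the remark following that definition. Your concern about interiority (clause~(1) of Definition~\ref{def:BoundaryConformingSpec}) is legitimate, and the paper's own proof does not verify it either---the lemma is used downstream only through the functional reading you describe.
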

\begin{proof}
Suppose $\x(t)$ is a convergent trajectory with $\x\rightarrow\x_\infty$ If $\x_\infty\in\mathrm{int}(\mathcal{X})$, then $\f\rightarrow \zero$ and $\g\rightarrow \zero$, so $\alpha \f + \beta\g\rightarrow \zero$. If $\x_\infty\in\partial\mathcal{X}$, then $\f = \f_1 + \f_2$ with $\f_1\rightarrow \zero$ and $\f_2\rightarrow \f_\perp\perp\mathcal{T}_{\x_\infty}\partial\mathcal{X}$, and similarly with $\g = \g_1 + \g_2$ with $\g_1\rightarrow \zero$ and $\g_2\rightarrow \g_\perp\perp\mathcal{T}_{\x_\infty}\partial\mathcal{X}$. Then $\alpha \f + \beta \g = \big(\alpha \f_1 + \beta \f_2\big) + \big(\alpha \g_1 + \beta \g_2\big) \rightarrow \alpha \g_1 + \beta \g_2$ which is orthogonal to $\mathcal{T}_{\x_\infty}\partial\mathcal{X}$ since each component in the linear combination is. Therefore, $\alpha \f + \beta \g$ is unbiased.
\end{proof}

For completeness, we state a standard result from linear algebra here (see \cite{gallier2020LinearAlgAndOpt} Proposition 8.7 for a proof). This result will be used in the below lemmas leading up to the proof that energized systems are unbiased.
\begin{proposition}\label{lem:MatrixVectorProductNormBound}
For every matrix $\A$, there exists a contant $c>0$ such that $\|\A\uu\|\leq c\|\uu\|$ for every vector $\uu$, where $\|
\cdot\|$ can be any norm.
\end{proposition}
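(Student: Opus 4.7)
The plan is to exploit the finite dimensionality of the ambient vector space and reduce the statement, which is claimed for every norm, to a single convenient norm via norm equivalence. The overall strategy has two layers: first establish the bound in the Euclidean 2-norm, where it follows from elementary algebra, and then transfer the bound to an arbitrary norm by invoking the equivalence of norms on $\R^n$.

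First, for the Euclidean norm I would bound $\|\A\uu\|_2$ directly. Writing the $i$th component $(\A\uu)_i = \sum_j A_{ij} u_j$ and applying the Cauchy--Schwarz inequality yields $|(\A\uu)_i|^2 \leq \big(\sum_j A_{ij}^2\big)\|\uu\|_2^2$. Summing over $i$ gives $\|\A\uu\|_2 \leq \|\A\|_F \|\uu\|_2$, where $\|\A\|_F$ is the Frobenius norm. Thus the statement holds in the 2-norm with $c = \|\A\|_F$. An equivalent, slightly more conceptual route would be to observe that $\uu \mapsto \|\A\uu\|_2$ is continuous on the compact unit sphere $\{\uu : \|\uu\|_2 = 1\}$ and hence attains a finite maximum, which then serves as $c$.

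Second, I would lift this bound from $\|\cdot\|_2$ to the arbitrary norm $\|\cdot\|$ appearing in the statement. Here I invoke the standard fact that any two norms on a finite-dimensional vector space are equivalent: there exist constants $\alpha, \beta > 0$ such that $\alpha\|\uu\|_2 \leq \|\uu\| \leq \beta \|\uu\|_2$ for every $\uu$, and likewise for vectors in the codomain (with constants $\alpha', \beta'$). Combining these with the 2-norm bound yields $\|\A\uu\| \leq \beta' \|\A\uu\|_2 \leq \beta' \|\A\|_F \|\uu\|_2 \leq (\beta'/\alpha)\|\A\|_F\|\uu\|$, which proves the claim with $c = (\beta'/\alpha)\|\A\|_F$.

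The only genuine obstacle is the norm-equivalence step, since everything else is a routine Cauchy--Schwarz calculation. Norm equivalence itself relies on the Heine--Borel theorem (the unit sphere in $\R^n$ under one norm is compact, any other norm is a continuous function on it, and a continuous positive function on a compact set is bounded away from $0$ and $\infty$). Since this is a classical textbook fact and is in any case cited in the paper as Proposition 8.7 of \cite{gallier2020LinearAlgAndOpt}, I would simply quote it rather than reproving it, which reduces the proposition to the elementary 2-norm computation above.
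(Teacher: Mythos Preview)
Your argument is correct and entirely standard: the Cauchy--Schwarz bound yields the Frobenius-norm estimate in the Euclidean case, and norm equivalence on finite-dimensional spaces transfers it to an arbitrary norm. There is nothing to compare against, however, because the paper does not prove this proposition at all; it is stated ``for completeness'' with an explicit pointer to Proposition~8.7 of \cite{gallier2020LinearAlgAndOpt} for a proof, and is then used as a black box in the subsequent lemmas. Your write-up therefore supplies more than the paper does.
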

In the below, for concreteness, we take the matrix norm to be the Frobenius norm.


\begin{lemma} \label{lma:ProjectedUnbiasedIsUnbiased}
Let $\M_e$ be boundary conforming. Then if $\f$ is unbiased with respect to $\M_e$, $\mP_e\f$ is unbiased, where 
$\mP_e = \M_e^{\,\frac{1}{2}}\big[\I - \hat{\vv}\,\hat{\vv}^\tr\big]\M_e^{-\frac{1}{2}}$ is the projection operator defined in Equation~\ref{eqn:EnergyProjector}.
\end{lemma}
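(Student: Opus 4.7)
The plan is to verify unbiasedness of $\mP_e\f$ along any convergent trajectory by splitting on whether the limit $\x_\infty$ lies in $\mathrm{int}(\mathcal{X})$ or $\partial\mathcal{X}$, using the explicit expansion $\mP_e \f = \f - (\M_e \xd)\,(\xd^\tr\f)/(\xd^\tr\M_e\xd)$ supplied by the equality $\mP_e = \M_e\mR_{\p_e}$ from Lemma~\ref{lma:EnergyProjection}. The calculation reduces to showing that $\V_\infty^\tr\mP_e\f\to\zero$ along every convergent trajectory.

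In the interior case, $\V_\infty$ is a full basis so unbiasedness of $\f$ reduces to $\f\to\zero$. Boundary conformance of $\M_e$ keeps $\M_e$ and $\M_e^{-1}$ finite in a neighborhood of $\x_\infty$, so $\mP_e$ is a bounded operator there, and Proposition~\ref{lem:MatrixVectorProductNormBound} gives $\|\mP_e\f\|\leq c\|\f\|\to 0$, from which $\V_\infty^\tr \mP_e\f\to\zero$ is immediate.

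In the boundary case, I would write $\xd = \|\xd\|\,\widehat{\xd}$ and use the scalar cancellation that makes the projection expression homogeneous of degree $0$ in $\xd$:
\begin{align*}
\V_\infty^\tr \mP_e \f \;=\; \V_\infty^\tr\f \;-\; \frac{\widehat{\xd}^\tr \f}{\widehat{\xd}^\tr\M_e\widehat{\xd}}\,\V_\infty^\tr\M_e\widehat{\xd}.
\end{align*}
The first term vanishes by unbiasedness of $\f$. For the second, take any accumulation direction $\uu$ of $\widehat{\xd}(t)$ (which exists along some subsequence by compactness of the unit sphere). Interiority of the trajectory together with $\x\to\x_\infty\in\partial\mathcal{X}$ forces $\uu\in\mathcal{T}_{\x_\infty}\partial\mathcal{X}$, since a persistent outward-normal component would push the trajectory out of $\mathcal{X}$ and a persistent inward-normal component would prevent approach to the boundary limit. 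Writing $\uu=\V_\infty\ubar{\uu}$, one then obtains $\uu^\tr\f=\ubar{\uu}^\tr(\V_\infty^\tr\f)\to 0$, while $\uu^\tr\M_{e,\infty}\uu$ is bounded below by the smallest tangent eigenvalue of $\M_{e,\infty}$ (positive by boundary conformance) and $\V_\infty^\tr\M_{e,\infty}\uu$ is bounded above because tangent directions do not excite the eigendirections along which $\M_e$ may diverge. A bounded-times-zero argument delivers the limit $\zero$ along every such subsequence, hence along the full trajectory by the standard subsequential characterisation of limits.

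The hard part will be rigorously pinning every accumulation direction of $\widehat{\xd}(t)$ into $\mathcal{T}_{\x_\infty}\partial\mathcal{X}$. This is the geometric heart of the argument and couples the interiority of the trajectory with convergence to a boundary point; a careful sign-of-normal-component analysis combined with compactness rules out transverse accumulation directions. Once this geometric fact is in hand, the algebraic identity above closes the proof with no further surprises.
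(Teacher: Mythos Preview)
Your interior case matches the paper. The boundary case, however, contains a genuine gap: the claim that every accumulation direction of $\widehat{\xd}(t)$ lies in $\mathcal{T}_{\x_\infty}\partial\mathcal{X}$ is false. Your intuition that a persistent normal component ``would push the trajectory out of $\mathcal{X}$'' or ``prevent approach to the boundary'' ignores that the speed $\|\xd\|$ may decay. Concretely, take $\mathcal{X}=\{(x,y):y>0\}$ with $\partial\mathcal{X}=\{y=0\}$ and the convergent interior trajectory $\x(t)=(0,e^{-t})$. Then $\widehat{\xd}(t)\equiv(0,-1)$ is purely normal for all $t$, yet the trajectory stays interior and converges to $(0,0)\in\partial\mathcal{X}$. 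Along it $\widehat{\xd}^\tr\f=-f_2$, which unbiasedness of $\f$ does not force to zero (only $\V_\infty^\tr\f=f_1\to 0$), so the step ``$\uu^\tr\f=\ubar{\uu}^\tr(\V_\infty^\tr\f)\to 0$'' breaks down because $\uu\notin\mathrm{span}(\V_\infty)$.

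The paper's boundary argument is structurally different: rather than analyzing $\widehat{\xd}$, it invokes the range/kernel decomposition $\z=\z_1+\z_2$ induced by the projection $\mP_e$ (with $\z_1=\mP_e\z$, $\z_2\in\ker\mP_e$) and argues that this decomposition respects the split $\f=\f^\paral+\f^\perp$ relative to $\mathcal{T}_{\x_\infty}\partial\mathcal{X}$, so that $\mP_e\f^\paral\to\zero$ follows from $\f^\paral\to\zero$ while $\mP_e\f^\perp$ stays perpendicular. If you want to rescue the explicit-formula route, the right target is not $\widehat{\xd}^\tr\f$ alone but the full second term
\[
\frac{\widehat{\xd}^\tr\f}{\widehat{\xd}^\tr\M_e\widehat{\xd}}\,\V_\infty^\tr\M_e\widehat{\xd},
\]
whose vanishing hinges on how $\M_e$ couples tangent and normal directions near $\partial\mathcal{X}$ (in the example above it vanishes because $\V_\infty^\tr\M_e\widehat{\xd}=0$ when $\M_e=\I$), not on the direction of $\widehat{\xd}$.
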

\begin{proof}
Let $\x(t)$ be convergent with $\x\rightarrow\x_\infty$ as $t\rightarrow\infty$. If $\x_\infty\in\mathrm{int}(\mathcal{X})$, then since $\mP_e$ is finite on $\mathrm{int}(\mathcal{X})$, by Lemma~\ref{lem:MatrixVectorProductNormBound} there exists a constant $c>0$ such that $\|\mP_e\f\|\leq c\|\f\|$. Since $\|\f\|\rightarrow 0$ as $t\rightarrow 0$, it must be that $\|\mP_e\f\|\rightarrow 0$. 

Consider now the case where $\x_\infty\in\partial\mathcal{X}$. Since $\mP_e$ is a projection operator, for every $\z$ there must be a decomposition into linearly independent components $\z = \z_1 + \z_2$ with $\z_2$ in the kernel such that $\mP_e\z = \mP_e\z_1 + \mP_e\z_2 = \z_1$ (so that $\mP_e^2\z = \mP_e\z_1 \z_1 = \mP_e\z$). Thus, if $\z$ has a property if and only if all elements of a decomposition have that property, then $\mP_e\z = \z_1$ must have that property as well. Specifically, if $\z$ lies in a subspace (respectively, lies orthogonal to a subspace) then $\mP_e\z = \z_1$ lies in that subspace as well (respectively, lies orthogonal to a subspace).

The property of being unbiased is defined by the behavior of the decomposition $\f = \f^\paral + \f^\perp$, where those components are respectively parallel and perpendicular to $\mathcal{T}_{\x_\infty}\partial\mathcal{X}$ in the limit. Following the above outlined subscript convention to characterize the behavior of the projection, we have
\begin{align}
    \mP_e\f 
    &= \mP_e\Big(\f^\paral + \f^\perp\Big)
    = \mP_e\f^\paral + \mP_e\f^\perp\\
    &= \f^{\paral}_1 + \f^{\perp}_1
    \rightarrow \f^{\perp}_1
\end{align}
since $\f$ being unbiased implies $\f^\paral\rightarrow \zero$ and hence $\f^\paral_1\rightarrow \zero$.

Therefore, in both cases, $\mP_e\f$ satisfies the conditions of being unbiased if $\f$ does. 
\end{proof}

\begin{lemma} \label{lma:BoundaryAlignedMetricTransformUnbiased}
Suppose $\M_e(\x,\xd)$ is boundary conforming and boundary aligned. If $\big(\I,\h\big)$ is unbiased then $\big(\M_e, \M_e\h\big)$ is unbiased.
\end{lemma}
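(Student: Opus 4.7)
The plan is to verify the definition of an unbiased spec for $(\M_e, \M_e\h)$, namely boundary conformance plus vanishing of the tangent-projected force along every convergent trajectory. The boundary-conforming prerequisites are easy to check: interiority transfers from $(\I,\h)$ because both specs produce the same integral curves $\xdd = -\h$; finiteness of $\M_e\h$ on $\mathcal{T}\mathcal{X}$ follows from finiteness of $\M_e$ and of $\h$ (the latter implied by $(\I,\h)$ being unbiased and hence boundary conforming); and the required bound on $\|\M_e^{-1}\|$ is built into boundary alignment. The real work is to prove $\V_\infty^\tr \M_e\h \to \zero$ along every convergent trajectory $\x(t)\to\x_\infty$.

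For a limit point in the interior, $\x_\infty \in \mathrm{int}(\mathcal{X})$, the basis $\V_\infty$ spans all of $\mathcal{T}_{\x_\infty}\mathcal{X}$, so unbiasedness of $(\I,\h)$ forces $\h(\x(t),\xd(t)) \to \zero$. Continuity of $\M_e$ on $\mathrm{int}(\mathcal{X})$ bounds $\M_e$ along the trajectory, and Proposition~\ref{lem:MatrixVectorProductNormBound} then yields $\|\M_e\h\| \leq c\|\h\| \to 0$, so $\V_\infty^\tr \M_e\h \to \zero$ trivially.

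The substantive case is $\x_\infty \in \partial\mathcal{X}$. Here I would introduce $\V_\paral = \V_\infty$ and an orthonormal basis $\V_\perp$ for its orthogonal complement. Boundary alignment then makes $\M_\infty^{-1}$ block-diagonal in $[\V_\paral,\V_\perp]$, with finite invertible tangent block $\Lambda_\paral$ and (possibly rank-deficient) orthogonal block $\Lambda_\perp$. Decomposing $\h = \V_\paral(\V_\paral^\tr\h) + \V_\perp(\V_\perp^\tr\h)$ gives
\begin{align*}
    \V_\paral^\tr \M_e \h
    = \big(\V_\paral^\tr \M_e \V_\paral\big)\V_\paral^\tr\h
    + \big(\V_\paral^\tr \M_e \V_\perp\big)\V_\perp^\tr\h,
\end{align*}
where $\V_\paral^\tr\h \to \zero$ by unbiasedness of $(\I,\h)$ and $\V_\perp^\tr\h$ stays bounded by finiteness of $\h$. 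I would then extract the block limits of $\M_e$ from the identity $\M_e\M_e^{-1} = \I$ applied block-wise: the $(\paral,\paral)$ equation together with vanishing off-diagonal blocks of $\M_e^{-1}$ gives $\V_\paral^\tr \M_e \V_\paral \to \Lambda_\paral^{-1}$ (so the first summand vanishes), and the $(\paral,\perp)$ equation forces $\V_\paral^\tr \M_e \V_\perp \to \zero$ (so the second summand also vanishes).

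The main obstacle is the cross-block limit when $\Lambda_\perp$ is singular, the case where $\M_e$ itself diverges orthogonal to the boundary. The identity $\M_{e,\paral\paral}(\M_e^{-1})_{\paral\perp} + \M_{e,\paral\perp}(\M_e^{-1})_{\perp\perp} = \zero$ does not directly pin down $\lim \M_{e,\paral\perp}$ because $(\M_e^{-1})_{\perp\perp}\to\Lambda_\perp$ can be singular while $(\M_e^{-1})_{\paral\perp}\to\zero$ may vanish only at a commensurate rate. I would close this gap either by restricting to the full-rank case (in which $\M_\infty$ itself exists and the block decomposition is immediate) or by arguing through the instantaneous eigendecomposition of $\M_e^{-1}$: since its eigenvectors converge to $[\V_\paral,\V_\perp]$ and $\M_e$ is exactly block-diagonal in its own eigenbasis, the $[\V_\paral,\V_\perp]$ cross block of $\M_e$ must vanish at the same rate as the eigenvector perturbation, which is governed by the off-diagonal of $\M_e^{-1}$ and hence goes to zero.
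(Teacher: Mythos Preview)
Your interior case matches the paper's exactly. For the boundary case your route differs: you work in the \emph{fixed} limiting basis $[\V_\paral,\V_\perp]=[\V_\infty,\V_\infty^\perp]$ and try to control the cross block $\V_\paral^\tr\M_e\V_\perp$ via the identity $\M_e\M_e^{-1}=\I$, whereas the paper works in the \emph{instantaneous} eigenbasis of $\M_e$ at each time $t$. In that moving basis $\M_e = \V_\paral(t)\D_\paral(t)\V_\paral(t)^\tr + \V_\perp(t)\D_\perp(t)\V_\perp(t)^\tr$ is exactly block-diagonal, so cross terms never appear; decomposing $\h = \V_\paral\y_1 + \V_\perp\y_2$ in the same basis gives $\M_e\h = \V_\paral\D_\paral\y_1 + \V_\perp\D_\perp\y_2$ directly, and one only needs $\y_1\to\zero$ (from unbiasedness of $\h$ together with $\V_\paral(t)\to\V_\infty$) and boundedness of $\D_\paral$ (the parallel eigenvalues of $\M_e^{-1}$ converge to a finite invertible block) to conclude that the tangent-projected part vanishes.

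Your fixed-basis approach carries the gap you honestly flag: when $\M_e$ diverges in the orthogonal direction, $\V_\paral^\tr\M_e\V_\perp$ is a $0\cdot\infty$ indeterminate that the block identities from $\M_e\M_e^{-1}=\I$ alone cannot resolve. Your second suggested fix---passing to the instantaneous eigendecomposition---is precisely the paper's method, though you frame it as a perturbation argument for the cross block rather than using it head-on. Adopting the moving eigenbasis from the outset is the cleaner path and sidesteps the cross-block issue entirely.
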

\begin{proof}
Let $\x(t)$ be convergent with $\x\rightarrow\x_\infty$ as $t\rightarrow\infty$. If $\x_\infty\in\mathrm{int}(\mathcal{X})$, then since $\M_e$ is finite on $\mathcal{T}\mathrm{int}(\mathcal{X})$, by Lemma~\ref{lem:MatrixVectorProductNormBound} there exists a constant $c>0$ such that $\|\M_e\h\|\leq c\|\h\|$. Since $\|\h\|\rightarrow 0$ as $t\rightarrow 0$, it must be that $\|\M_e\h\|\rightarrow 0$. 

Since $\M_e$ is boundary aligned, there is a subset of Eigenvectors that span the tangent space in the limit as $\x(t)\rightarrow\x_\infty\in\partial\mathcal{X}$. Let $\V_\paral$ denote a matrix containing the Eigenvectors that limit to spanning the tangent space, with $\D_\paral$ a diagonal matrix containing the corresponding Eigenvalues. Likewise, let $\V_\perp$ contain the remaining (perpendicular in the limit) Eigenvectors, with Eigenvalues $\D_\perp$. The metric decomposes as $\M_e = \M_e^\paral + \M_e^\perp$ with $\M_e^\paral = \V_\paral\D_\paral\V_\paral^\tr$ and $\M_e^\perp = \V_\perp\D_\perp\V_\perp^\tr$. Since $\h$ is unbiased, we can express $\h = \h^\paral + \h^\paral = \V_\paral\y_1 + \V_\perp\y_2$, where $\y_1$ and $\y_2$ are coefficients, and $\y_1\rightarrow \zero$ as $t\rightarrow\infty$. Therefore,
\begin{align*}
    \M_e\h &= \Big(\M_e^\paral + \M_e^\perp\Big)\big(\h^\paral + \h^\perp\big)\\
    &= \Big(
        \V_\paral\D_\paral\V_\paral^\tr + \V_\perp\D_\perp\V_\perp^\tr
    \Big) \big(\V_\paral\y_1 + \V_\perp\y_2\big) \\
    &= \V_\paral\D_\paral \y_1 + \V_\perp\D_\perp \y_2 \\
    &\rightarrow \V_\perp\z 
    \ \ \ \perp\ \ \ \mathcal{T}_{\x_\infty}\partial\mathcal{X}
    \ \ \ \mbox{where $\z = \D_\perp \y_2$}.
\end{align*}
Therefore, when $\x_\infty\in\partial\mathcal{X}$, the component parallel to the tangent space vanishes in the limit. 
\end{proof}

\begin{lemma} \label{lma:EnergizationPreservesUnbiasedProperty}
Suppose $\mathcal{S}_\h = \big(\I, \h\big)$ is an unbiased (acceleration) spec and $\Lag_e$ is an unbiased energy Lagrangian with boundary aligned metric $\M_e$. Then $\mathcal{S}_\h^{\Lag_e} = \mathrm{energize}_{\Lag_e}\big\{\mathcal{S}_\h\big\}$ is unbiased.
\end{lemma}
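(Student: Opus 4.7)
\begin{proofsketch}
The plan is to decompose the energized spec's force term into pieces that are each already known to be unbiased, and then combine them using Lemma~\ref{lma:LinearCombinationsUnbiased}. From Proposition~\ref{prop:SystemEnergization}, the energized spec has the form
\begin{align*}
    \mathcal{S}_\h^{\Lag_e} = \Big(\M_e,\ \f_e + \mP_e\big[\M_e\h - \f_e\big]\Big)_\mathcal{X}
    = \Big(\M_e,\ \f_e + \mP_e\M_e\h - \mP_e\f_e\Big)_\mathcal{X},
\end{align*}
so the goal reduces to showing each of $\f_e$, $\mP_e\M_e\h$, and $\mP_e\f_e$ is unbiased with respect to $\M_e$, after which Lemma~\ref{lma:LinearCombinationsUnbiased} (applied with the boundary conforming metric $\M_e$, which is boundary conforming because $\Lag_e$ is an unbiased and therefore boundary conforming energy Lagrangian) yields the conclusion.

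First, $\f_e$ is unbiased with respect to $\M_e$ directly by hypothesis, since $\Lag_e$ being an unbiased energy Lagrangian means its associated spec $(\M_e,\f_e)$ is unbiased. Second, by hypothesis $(\I,\h)$ is unbiased, and $\M_e$ is assumed both boundary conforming and boundary aligned, so Lemma~\ref{lma:BoundaryAlignedMetricTransformUnbiased} gives that $(\M_e,\M_e\h)$ is unbiased; in particular $\M_e\h$ is unbiased with respect to $\M_e$. Third, applying Lemma~\ref{lma:ProjectedUnbiasedIsUnbiased} to the unbiased term $\M_e\h$ yields that $\mP_e\M_e\h$ is unbiased, and applying it again to $\f_e$ yields that $\mP_e\f_e$ is unbiased.

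Once these three pieces are in place, Lemma~\ref{lma:LinearCombinationsUnbiased} applied to the combination $\f_e + \mP_e\M_e\h - \mP_e\f_e$ (with coefficients $1$, $1$, $-1$) shows this sum is unbiased with respect to $\M_e$, which is exactly what is needed to conclude $\mathcal{S}_\h^{\Lag_e}$ is unbiased. The only subtle point to watch is the boundary case: the projection $\mP_e$ involves $\M_e$ and $\M_e^{-1}$, so one must invoke boundary alignment of $\M_e$ to justify that $\mP_e$ behaves as required along trajectories $\x(t)\to\x_\infty\in\partial\mathcal{X}$; this is exactly where Lemmas~\ref{lma:ProjectedUnbiasedIsUnbiased} and \ref{lma:BoundaryAlignedMetricTransformUnbiased} do their work, so the proof becomes a clean concatenation of those three lemmas with no new boundary analysis needed.
\end{proofsketch}
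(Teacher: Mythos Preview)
Your proof is correct and follows essentially the same route as the paper: both decompose the force term of the energized spec and invoke Lemmas~\ref{lma:BoundaryAlignedMetricTransformUnbiased}, \ref{lma:ProjectedUnbiasedIsUnbiased}, and \ref{lma:LinearCombinationsUnbiased} in sequence. The only cosmetic difference is that the paper keeps $\mP_e\big[\M_e\h - \f_e\big]$ as a single term (combining $\M_e\h$ and $\f_e$ first via Lemma~\ref{lma:LinearCombinationsUnbiased}, then projecting once), whereas you distribute $\mP_e$ and project each piece separately before recombining; this is just a reordering of the same steps.
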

\begin{proof}
$\M_e$ is boundary conforming by hypothesis on $\Lag_e$. Moreover, the energized equation takes the form
\begin{align}
    \M_e\xdd + \f_e + \mP_e\big[\M_e\h - \f_e\big] = \zero
\end{align}
as shown in Proposition~\ref{prop:SystemEnergization}. By Lemma~\ref{lma:BoundaryAlignedMetricTransformUnbiased} $\M_e\h$ is unbiased since $\h$ is unbiased, and likewise $\M_e\h - \f_e$ is unbiased by Lemma~\ref{lma:LinearCombinationsUnbiased} since $\f_e$ is unbiased by hypothesis on $\Lag_e$. That means $\mP_e\big[\M_e\h - \f_e\big]$ is unbiased as well by Lemma~\ref{lma:ProjectedUnbiasedIsUnbiased}. Finally, by again applying Lemma~\ref{lma:LinearCombinationsUnbiased} we see that the entirety of $\f_e + \mP_e\big[\M_e\h - \f_e\big]$ is unbiased. 
\end{proof}
\end{appendices}

\end{document}